\documentclass[twoside]{article}

\usepackage[accepted]{aistats2026}
\usepackage[round]{natbib}

\usepackage{url}
\usepackage{array}
\usepackage{microtype}
\usepackage{graphicx}
\usepackage{xcolor}
\usepackage{subfigure}
\usepackage{amsmath}
\usepackage{bm}
\usepackage{dsfont}
\usepackage{mathtools}
\usepackage{nccmath}
\usepackage{amssymb}
\usepackage{multirow}
\usepackage{booktabs}
\usepackage{varwidth}
\usepackage{pifont}
\usepackage{makecell}
\usepackage{wrapfig}
\usepackage[flushleft]{threeparttable}
\usepackage{varwidth}
\usepackage{pifont}
\usepackage{makecell}
\usepackage{enumitem}
\usepackage{adjustbox}
\usepackage{graphicx,calc}
\usepackage{environ}
\usepackage{multicol}
\usepackage{multirow}
\usepackage{graphicx}
\usepackage{wrapfig}
\usepackage{tikz}
\usepackage{bbm}
\usepackage[flushleft]{threeparttable}
\usepackage{pifont}
\usepackage{enumitem}

\usepackage{pifont}

\usepackage{color, colortbl}
\definecolor{Gray}{gray}{0.93}
\definecolor{Orange}{rgb}{1,0.5,0}
\definecolor{DGray}{gray}{0.83}
\definecolor{LightCyan}{rgb}{0.88,1,1}

\definecolor{natural}{HTML}{648FFF}
\definecolor{specialized}{HTML}{DC267F}
\definecolor{fgvc}{HTML}{362682}
\definecolor{others}{HTML}{e6c595}
\definecolor{all}{HTML}{FE6100}

\newcommand{\stress}[1]{#1}

\definecolor{lightorange}{HTML}{fc8e62}
\definecolor{lightgray}{gray}{0.6}

\newcommand{\orangecolor}[1]{\textcolor{lightorange}{#1}}
\newcommand{\partcirc}{\orangecolor{$\mathbf{\circ}$\,}}
\newcommand{\fullcirc}{\orangecolor{$\bullet$\,}}

\usepackage{wrapfig}

\usepackage{pifont}
\usepackage{color, colortbl}

\usepackage{blindtext}
\usepackage{lipsum}

\usepackage{multirow}
\usepackage{graphicx}
\usepackage{listings}

\usepackage{bbm}

\usepackage[most]{tcolorbox}

\usepackage{tikz}
\usepackage{tabularx}

\usepackage{amsmath,amsfonts,bm}

\newtheorem{thm}{Theorem}
\newtheorem{lemma}{Lemma}

\newtheorem{prpst}{Proposition}

\def\bfa{{\boldsymbol a}}

\def\bfe{{\boldsymbol e}}

\def\bfh{{\boldsymbol h}}

\def\bfk{{\boldsymbol k}}

\def\bfq{{\boldsymbol q}}

\def\bfv{{\boldsymbol v}}
\def\bfw{{\boldsymbol w}}
\def\bfx{{\boldsymbol x}}

\def\bfz{{\boldsymbol z}}
\def\bfmu{{\boldsymbol \mu}}

\def\bfH{{\boldsymbol H}}
\def\bfI{{\boldsymbol I}}

\def\bfP{{\boldsymbol P}}

\def\bfW{{\boldsymbol W}}

\def\bfdt{{\boldsymbol \delta}}

\def\eqref#1{(\ref{#1})}

\def\1{\bm{1}}

\DeclareMathAlphabet{\mathsfit}{\encodingdefault}{\sfdefault}{m}{sl}
\SetMathAlphabet{\mathsfit}{bold}{\encodingdefault}{\sfdefault}{bx}{n}

\newcommand{\btheta}{\boldsymbol{\theta}}

\newcommand{\bdelta}{\boldsymbol\delta}

\newcommand{\vpt}{\textsc{VPT}}
\newcommand{\vp}{\textsc{VP}}
\newcommand{\lora}{\textsc{LoRA}}
\newcommand{\bias}{\textsc{Bias}}

\newcommand{\nt}{\textsc{Norm-Tune}}

\newcommand{\adapter}{\textsc{Adapter}}
\newcommand{\adapterformer}{\textsc{AdapterFormer}}
\newcommand{\ours}{\textsc{AP}}

\newcommand{\ff}{\textsc{Full-Tune}}
\newcommand{\lp}{\textsc{Linear-Probe}}
\newcommand{\scale}{\textsc{AttnScale}}

\newcommand{\cifarten}{{CIFAR-10}}
\newcommand{\cifarhun}{{CIFAR-100}}
\newcommand{\waterbirds}{{Waterbirds}}
\newcommand{\oxfordflowers}{{OxfordFlowers}}
\newcommand{\sun}{{SUN397}}
\newcommand{\oxfordpets}{{OxfordPets}}
\newcommand{\flowers}{{Flowers102}}
\newcommand{\dtd}{{DTD}}
\newcommand{\stanfordcars}{{StanfordCars}}
\newcommand{\food}{{Food101}}
\newcommand{\eurosat}{{EuroSAT}}
\newcommand{\ucf}{{UCF101}}
\newcommand{\gtsrb}{{GTSRB}}
\newcommand{\svhn}{{SVHN}}

\newcommand{\camelyon}{{Camelyon}}

\newcommand{\cub}{{CUB200}}
\newcommand{\nabirds}{{NA-Birds}}
\newcommand{\dog}{{StanfordDog}}
\newcommand{\caltech}{{Caltech-101}}

\newcommand{\vtab}{{VTAB}}
\newcommand{\vtabk}{{VTAB-1k}}
\newcommand{\fgvc}{{FGVC}}

\usepackage[utf8]{inputenc} 
\usepackage[T1]{fontenc}    
\usepackage{hyperref}       
\usepackage{amsfonts}       
\usepackage{nicefrac}       
\usepackage{microtype}      

\definecolor{ceruleanblue}{rgb}{0.16, 0.32, 0.75}

\hypersetup{
colorlinks=true,
citecolor=ceruleanblue,
linkcolor=ceruleanblue,
urlcolor=black}

\NewEnviron{elaboration}{
\par
\begin{tikzpicture}
\node[rectangle,minimum width=0.44\textwidth] (m) {\begin{minipage}{0.44\textwidth}\BODY\end{minipage}};
\draw[thick] (m.south west) rectangle (m.north east);
\end{tikzpicture}
}

\begin{document}

%

%
\runningauthor{Y. Zhang, H. Li, Y. Yao, A. Chen, S. Zhang, P.-Y. Chen, M. Wang, S. Liu}

\twocolumn[

\aistatstitle{Visual Prompting Reimagined: The Power of Activation Prompts}

\aistatsauthor{Yihua Zhang$^{1,*}$, Hongkang Li$^{2,*}$, Yuguang Yao$^{1,*}$, Aochuan Chen$^{1}$, Shuai Zhang$^{3}$,\\ \textbf{Pin-Yu Chen$^{4}$, Meng Wang$^{5}$, Sijia Liu$^{1,4}$}}

\aistatsaddress{$^{1}$Michigan State University, $^{2}$University of Pennsylvania,\\ $^{3}$New Jersey Institute of Technology, $^{4}$IBM Research, $^{5}$Rensselaer Polytechnic Institute\\ $^{*}$Equal Contribution} ]

\begin{abstract}
Visual prompting (VP) has emerged as a popular method to repurpose pretrained vision models for adaptation to downstream  tasks. Unlike conventional model fine-tuning techniques, VP introduces a universal perturbation directly into the input data to facilitate task-specific fine-tuning rather than modifying model parameters. However, there exists a noticeable performance gap between VP and conventional fine-tuning methods, highlighting an unexplored realm in theory and practice to understand and advance (input-level) VP to reduce its current performance gap. Towards this end,  we introduce a generalized concept, termed activation prompt (AP), which extends the scope of (input-level) VP by enabling universal perturbations to be applied to activation maps within the intermediate layers of the model. By using AP to revisit the problem of VP and employing it as an analytical tool, we demonstrate the intrinsic limitations of VP in both performance and efficiency, revealing why input-level prompting may lack effectiveness compared to AP, which exhibits a model-dependent layer preference. We show that AP is closely related to normalization tuning in convolutional neural networks and vision transformers, although each model type has distinct layer preferences for prompting. We also theoretically elucidate the rationale behind such preference by analyzing global features across layers. Through extensive experiments across 29 datasets and various model architectures, we provide a comprehensive performance analysis of AP, comparing it with VP and parameter-efficient fine-tuning baselines. Our results demonstrate AP's superiority in both accuracy and efficiency, considering factors such as time, parameters, memory usage, and throughput.
\end{abstract}

\section{Introduction}
Large pretrained models have emerged as fundamental components in deep learning\,\citep{touvron2023llama,chiang2023vicuna,bai2023qwen} in recent years. Despite their exceptional performance, the substantial increase in computational demands, as highlighted in recent studies \citep{frantar2023massive}, has underlined the need for more economical and lightweight fine-tuning approaches. Thus, the pretraining-finetuning paradigm rises, allowing for quickly adapting a pretrained model to downstream tasks\,\citep{jia2022visual,hu2021lora, cai2020tinytl, sung2022lst,chen2023understanding}. 
Among the various parameter-efficient finetuning (PEFT) methods\,\citep{hu2021lora,chen2022adaptformer,pfeiffer2020adapterhub,he2021towards,xu2023exploring}, \textit{prompting technique} has been gaining popularity in {the vision domain \citep{liu2023pre}.}

Different from the model-centric PEFT techniques in computer vision (CV), the conventional visual prompting (\textbf{VP}) crafts specific input perturbations (known as `prompts') to reprogram the pretrained model for a targeted task, without altering the model parameters. This offers a new data-centric viewpoint to analyze, understand, and harness the pretrained model\,\citep{chen2023understanding}. However, 
despite the recent advancement,
the performance of state-of-the-art (SOTA) VP methods still lags behind model-based fine-tuning methods\,\citep{chen2023understanding}.
It appears that the potential of VP has not been fully realized for CV models, particularly when considering its relative progress compared to its counterpart in natural language processing (NLP)\,\citep{liu2023pre, li2021prefix}.
In this work, \textbf{we aim to} rigorously and comprehensively examine VP and explore its enhancement tailored for CV models, including convolutional neural networks (CNNs) and vision Transformers (ViTs). We ask:
\begin{tcolorbox}[before skip=2mm, after skip=0.0cm, boxsep=0.0cm, middle=0.0cm, top=0.1cm, bottom=0.1cm]
\textit{\textbf{(Q)} Is VP (visual prompting)  truly beneficial for improving vision  tasks, and under what conditions does it prove effective or ineffective?}
\end{tcolorbox}
\vspace*{3mm}

To tackle question \textbf{(Q)}, we present a generalized variant of VP termed activation prompt (\textbf{AP}), which involves the incorporation of learnable perturbations into the activation maps of intermediate layers, rather than focusing solely on the input layer. See \textbf{Fig.\,\ref{fig: teaser}} for an illustration. The introduction of AP allows us to study the ineffectiveness of VP, as VP can be treated as a specific realization of AP. By using AP as both a bridge and an analytical tool, we show the conventional input-based VP might not be the most effective or efficient design. In fact, appropriately implemented AP can outperform traditional VP significantly. To shed light on the underlying mechanism of AP, we present both empirical evidence and theoretical insights. 

\begin{figure}[t]
    \centering
    \includegraphics[width=0.8\linewidth]{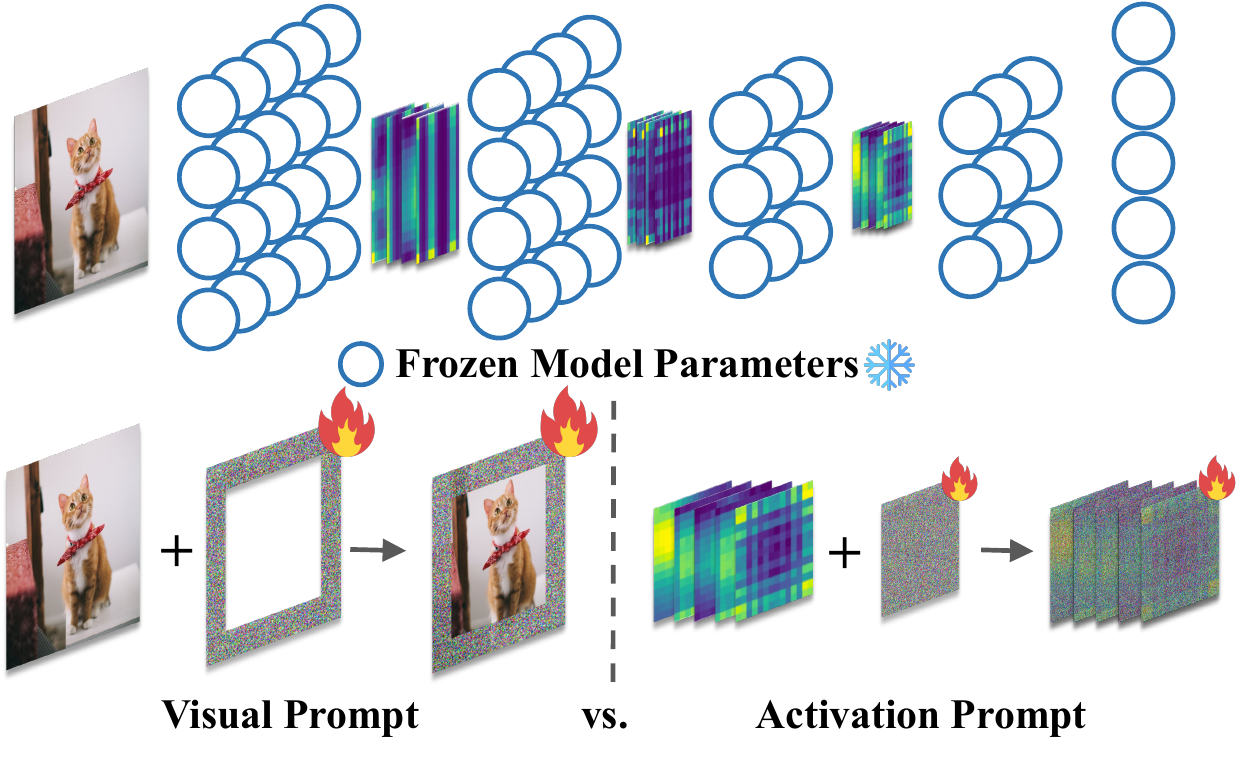}
    \vspace*{-0.5em}
    \caption{An illustration of the proposed activation prompt vs. the conventional input-based prompt.}
    \label{fig: teaser}
    \vspace*{-1em}
\end{figure}

The work \textbf{most relevant} to ours is \citep{jia2022visual}, which also integrates prompts with intermediate layers of ViTs, resulting in the method known as visual prompt tuning (VPT). However,  our work has the following distinctions from VPT.
\textit{First}, AP and VPT diverge in their designs. AP concentrates on the targeted application of prompts to a single model layer. In contrast, VPT and its deep variant (termed VPT-deep)  apply prompts across multiple layers. Specifically, VPT-deep initiates prompts at one layer and extends them across all subsequent layers. The distinctive layer-prompting approach makes VPT not covering VP as a special case. In contrast, AP serves as a generalized framework for VP, making it easier to analyze its effectiveness.
\textit{Second}, this work identifies the layer preference of CV models regarding prompts. Through AP, we can gain insights into these layer preferences on both CNNs and ViTs. In contrast, VPT does not conduct a systematic analysis of layer and architectural type effects.
\textit{Third}, another notable difference between our work and the VPT study is our theoretical analysis. We establish a connection between AP and normalization tuning and theoretically validate the concept of layer preference and its influence on various architectural designs. Our theoretical analysis also shows that the traditional implementation of input-level VP could be suboptimal. 
In summary, our \textbf{contributions} include:

$\bullet$ 
We propose AP (activation prompt) as a valuable tool for gaining insights into VP (visual prompting). And {\ours} establishes itself as a versatile and  effective prompting technique in its own right, revealing a provable relationship with normalization tuning (Sec.\,\ref{sec: method}). 

$\bullet$ We offer in-depth analyses of AP's layer preference and its architecture effects. Empirical studies unveiled the connection between the layer preference and the capacity for capturing global features (Sec.\,\ref{sec: ap_layer}). In addition, we theoretically validate those findings (Sec.\,\ref{sec: theory}).

$\bullet$ Through extensive experimentation involving $29$ datasets across various benchmarks, we affirm that {\ours} enhances the input-level VP in diverse learning scenarios. Furthermore, {\ours} narrows the performance gap even when compared to $6$ other SOTA PEFT methods.

\section{Related Work}
\label{sec: related_work}

\textbf{Visual prompting.} 
VP was first proposed in \citep{bahng2022visual,jia2022visual} to extend the prompting technique in NLP. A similar idea with a different name, known as adversarial reprogramming, was also proposed earlier in CV \citep{elsayed2018adversarial, chen2022model, neekhara2022cross,zhang2022fairness,chen2022visual}. It aims at re-purposing a fixed pretrained model to adapt to a new task. Recent advancement focuses on improved label mapping\,\citep{chen2021adversarial, yang2023visual} and normalization\,\citep{wu2022unleashing} to enhance VP. Others extend VP to areas like adversarial defense\,\citep{chen2023visual, mao2022understanding} and distribution shift\,\citep{huang2023diversity,tsai2023self}, and vision-language models\,\citep{zhou2022learning}.

\textbf{Theoretical study on prompt engineering.} Existing theoretical works on prompt engineering include the expressive power of the introduced parameter \citep{wei2021pretrained, bai2023transformers, akyurek2022learning}, the optimization process \citep{ding2022delta, von2023transformers}, and the generalization analysis \citep{xie2021explanation, oymak2023role, zhang2023trained, li2023transformers_v2, HCL23, li2024training, litraining, li2024nonlinear, li2025understanding}. Most studies concentrate on in-context learning, a tuning-free hard prompt method. In contrast, for soft prompt tuning, \citet{wei2021pretrained} show that prompting is powerful enough to remove nonessential information for the downstream task. \citet{ding2022delta} interpret prompt tuning as a subspace optimization method for the solution or functional space. Notably, there is solely one study \citep{oymak2023role} on the generalization dynamics of gradient-based prompt tuning but relying on a single-layer Transformer architecture without the MLP layer, making it incapable of examining the impact of multiple layers.

\textbf{Parameter-efficient fine-tuning.} PEFT demonstrates that only finetuning a small part of a large pretrained model can achieve outstanding performance. In the domain of CV, besides prompting-based methods, PEFT methods can be roughly classified into two categories. The former focuses on identifying a small ratio of parameters to update from the pretrained model, such as normalization tuning \citep{basu2023strong}. The latter designs additional modules to the original network backbone to adapt to downstream tasks\,\citep{xu2023exploring,karimi2021compacter,lian2022scaling, luo2023towards}. 
Examples include LoRA\,\citep{hu2021lora,li2024learning}, adapter-based methods\,\citep{chen2022adaptformer, pfeiffer2020adapterhub, karimi2021compacter,luo2023towards}, and FACT\,\citep{jie2023fact} that tensorizes the ViT weights to a 3D tensor and reduces the tunable parameter ratio to less than $0.01\%$. We note that {\ours} differentiates from the methods above by avoiding additional inference overheads or any requirements on the model architectures.

\section{Activation Prompt: Design and Rationale}
\label{sec: method}

\textbf{Preliminaries on classical VP.} 
VP harnesses universal pixel-level perturbations applied to input images as a means of model adaptation\,\citep{bahng2022exploring}. For example, VP enables the transfer learning of an ImageNet-trained source model to various downstream tasks {without the need for fine-tuning the model weights}.
It has sparked significant interest in the recent research \citep{chen2023understanding,wu2022unleashing,bahng2022exploring}. 
Concretely, let $f_{\btheta}$  denote the pre-trained source model parameterized by $\btheta$, and $\mathcal{D} = \{(\bfx_1, y_1), (\bfx_2, y_2), \dots, (\bfx_N, y_N) \}$ denote the fine-tuning dataset for a downstream task, with $\bfx$ and $y$ being the data feature and label, respectively.
\textbf{The objective of VP} is to obtain a perturbation vector, denoted as ${\bdelta}_{\textrm{VP}}$, which is tailored to a specific task but remains agnostic to the input data. This vector is then used to transform the input data  $\bfx$ through the function $g(\bfx, {\bdelta}_{\textrm{VP}})$. 
Here $g$ symbolizes the transformation template function that molds the input image to fit the desired prompt pattern. Two prevalent templates include the addition $g(\bfx, {\bdelta}_{\textrm{VP}}) = \bfx + {\bdelta}_{\textrm{VP}}$\,\citep{zhang2022fairness,bahng2022exploring}, and the  {resize-and-concatenation} \(g(\bfx, {\bdelta}_{\textrm{VP}}) = [{\bdelta}_{\textrm{VP}}, M(\bfx)]\)\,\citep{chen2023understanding,zhang2022fairness}, where \(M\) is the resizing function. Unless specified otherwise, we consider the additive VP formulation.

\textbf{{\ours}: Generalizing VP in feature space.} The conventional VP approach primarily focuses on making direct modifications to the input data. However, this direct manipulation may have two limitations.
\textit{First}, raw input data typically contains an abundance of details, which can introduce complications for tasks like prompt generation due to issues such as background clutter and semantic ambiguity  \citep{yu2017exploiting}. In contrast, intermediate features tend to encompass a broader range of local and global attributes, preserving more class-discriminative information for decision-making \citep{bau2017network}.
\textit{Second}, parameter updates in VP demand gradient propagation throughout the entire network. Consequently, even with a lower number of tunable parameters, the training cost may increase. 
Motivated by the above, we broaden the scope of VP into the feature domain and introduce the concept of \textbf{activation prompting (AP)}, see Fig.\,\ref{fig: teaser} for an illustration. Given a neural network model with $L$ layers, represented as $\btheta = [{\btheta^{(1)}}, {\btheta^{(2)}}, \dots, {\btheta^{(L)}}]$, the output from the $l$-th layer is denoted as $\bfz^{(l)} = f_{\btheta^{(l)}}(\bfz^{(l - 1)})$, where $\bfz^{(0)} = \bfx$ (\textit{i.e.}, the input date). Similar to VP, AP at the $l$-th layer is defined by a perturbation vector $\bdelta^{(l)}$ to the intermediate feature $\bfz^{(l)}$, leading to the `prompted' feature map $g(\bfz^{(l)}, \bdelta^{(l)})=\bfz^{(l)}+\bdelta^{(l)}$. We denote the output with the $l$-th-layer AP given $\btheta$ as $f_{\btheta}(\bfx, \bdelta^{(l)})$.  
\textbf{The objective of AP} is then to optimize $\bdelta^{(l)}$ so as to facilitate the adaptation of the fixed source model $f_{\btheta}$ for performing the downstream task on $\mathcal D$.
It is evident that AP can be conceptualized as an extension of VP when we set the layer number $l$ to $0$. Moreover, the optimization process for both VP and AP can be carried out similarly through empirical risk minimization on $\mathcal D$, \textit{i.e.}, $ \min_{\bdelta^{(l)} } \frac{1}{|\mathcal{D}|}\sum_{(\bfx,y)\in \mathcal{D}}\ell(f_{\boldsymbol{\theta}}(\bfx, \bdelta^{(l)}); y)$, where $\ell$ is the sample-wise cross-entropy loss. 

AP also exhibits several notable attributes different from VP. \textit{First}, the number of parameters in AP directly relates to the size of the feature map $\bfz^{(l)}$. Hence, a properly designed AP can substantially reduce the parameter count. \textit{Second}, while the optimization of AP mirrors that of VP, its parameter update does not necessitate back-propagation throughout the entire network. For example, deeper AP within the architecture reduces computational demands during training.

\begin{figure}[htb]
    \centering
    \includegraphics[width=0.7\linewidth]{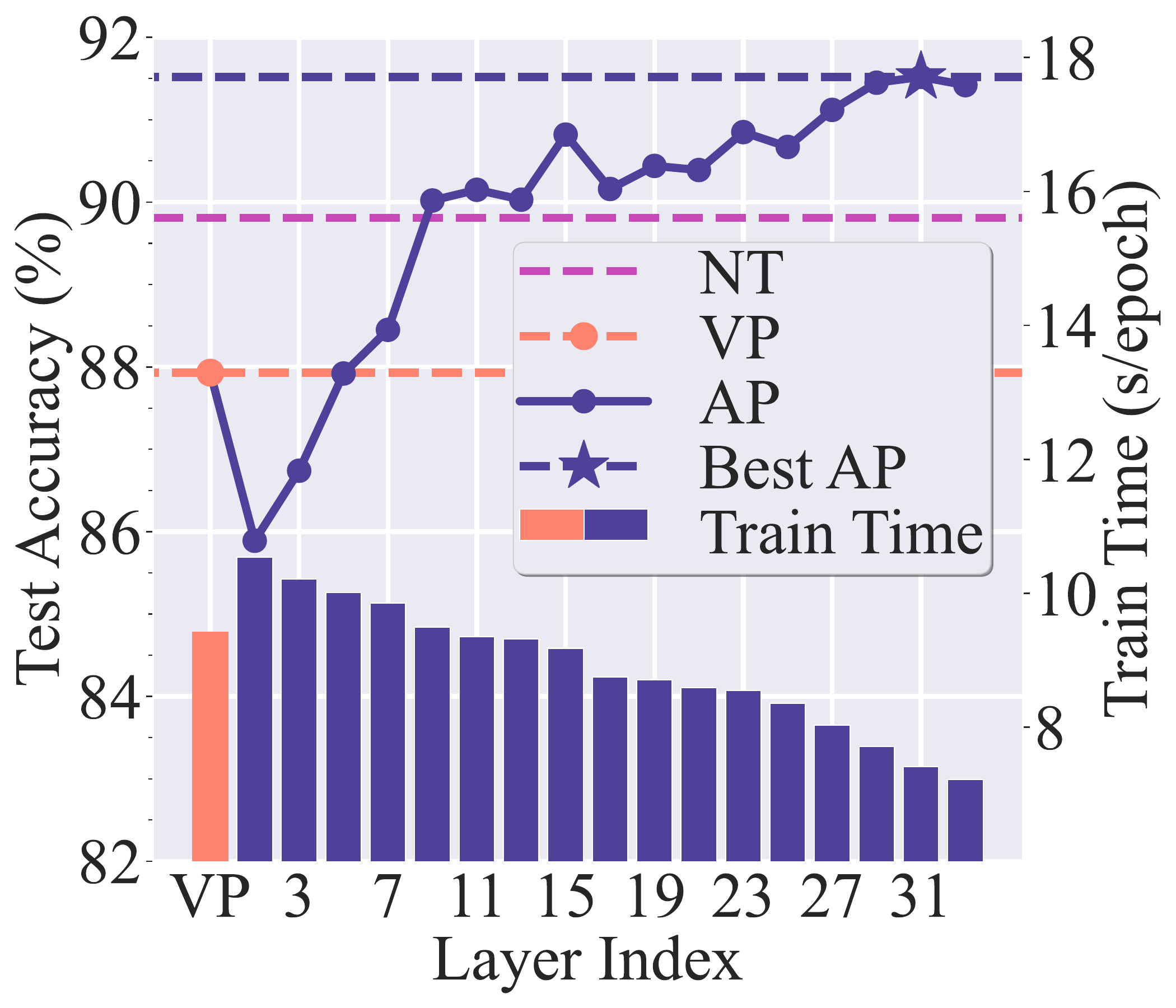}
    \vspace*{-1em}
    \caption{ Performance and efficiency comparison of {\vp}, {\nt} and {\ours} over different layers of ResNet-101 on OxfordPets.}
    \vspace*{-1em}
    \label{fig: preliminary}
\end{figure}
\textbf{AP could be a better design than VP.} Next, we present a preliminary experiment that serves as a \textit{warm-up}, demonstrating how AP exhibits the potential to improve accuracy performance, as well as enhance computation and parameter efficiency when compared to VP.
We examine the commonly used transfer learning scenario for applying VP, in which the source model ResNet-101 \citep{he2016deep} is initially trained on ImageNet \citep{deng2009imagenet} and is subsequently transferred to the CIFAR-10 dataset \citep{krizhevsky2009learning}.
\textbf{Fig.}\,\ref{fig: preliminary} presents a performance comparison between  AP and  VP against the layer index on ResNet-101, at which   AP is introduced. 
The preliminary results provide several key insights, which will be substantiated in more detail later.
\textit{First}, AP holds the potential to substantially enhance the accuracy of transfer learning when compared to VP. For instance, when AP is applied at layer 31, it achieves the highest accuracy in transfer learning, surpassing VP by approximately 5\%. In fact,   more comprehensive experiments presented in Sec.\,\ref{sec: experiments} demonstrate that applying AP to a \textit{deeper} layer consistently produces the most significant accuracy improvements across a wide range of CNNs.
\textit{Second}, due to the preference for {deeper} layers when utilizing AP in CNNs, there exists a computational advantage since back-propagation from the output to the input layer is \textit{not} required. 
\textit{Third}, AP maintains the parameter efficiency merit compared to VP. For instance, at the layer that exhibits the best performance, AP utilizes only $100k$ parameters, whereas VP employs $150k$ parameters.
The experiments above indicate that \textit{AP has the potential to outperform VP, offering not only improved accuracy but also greater efficiency}. 

\textbf{Understanding AP through its connection to normalization tuning.}
Normalization tuning (\nt), as a PEFT technique, finetunes parameters within model's normalization layers, \textit{i.e.}, BatchNorm for CNNs \citep{ioffe2015batch} and  LayerNorm for ViTs \citep{ba2016layer}. For clarity, we denote the tunable parameters of a normalization layer by $\boldsymbol{\gamma}=(\gamma_1,\cdots,\gamma_{D'})^\top$ for linear coefficients and $\boldsymbol{\beta}=(\beta_1,\cdots,\beta_{D'})^\top$ for biases, with $D'$ representing the number of channels or the token dimension.
Further, define $\boldsymbol{\mu}$ and $\boldsymbol{\sigma}$ as the
channel-wise mean and standard deviation constants of $\bfz^{(l)}$ for BatchNorm over the entire batch. For LayerNorm, they represent the data-wise mean and standard deviation of $\bfz^{(l)}$ across the embedding dimension.
Given that both AP and {\nt} utilize a linear model for feature representations, \textit{i.e.}, $g(\bfz^{(l)}, \bdelta^{(l)})=\bfz^{(l)}+\bdelta^{(l)}$ for AP and $g(\bfz^{(l)}, \boldsymbol{\gamma}, \boldsymbol{\beta}) = \boldsymbol{\gamma} \cdot (\bfz^{(l)} - \boldsymbol{\mu})/\sqrt{\boldsymbol{\sigma}} + \boldsymbol\beta$ for {\nt}, AP can be interpreted as a variant of {\nt}. 
\textbf{Fig.\,\ref{fig: APvsNT}} illustrates the connection.

$\bullet$ \textit{CNNs}: When AP's perturbations are consistent across all feature maps, the unit-scaling BatchNorm-based {\nt} closely mirrors the formulation of {\ours}, differentiated merely by a linear mapping plus a bias. This equivalence becomes apparent when relating $\bfW^{(l)}\bdelta^{(l)}$ to $\boldsymbol{\beta}-\boldsymbol{\gamma}\cdot\bfmu/\sqrt{\boldsymbol{\sigma}}$, especially when $\boldsymbol{\gamma}/\sqrt{\boldsymbol{\sigma}}=1$, supposing $\bfW^{(l)}$ as the weight for the $l$-th layer.

$\bullet$ \textit{ViTs}: Assuming uniform perturbations across tokens and consistent mean value    across data dimensions within a batch, {\ours} reduces to the unit-scaling LayerNorm-based {\nt}. 
This can be represented as $\bdelta^{(l)}=\boldsymbol{\beta}-\bfmu$, given $\boldsymbol{\gamma}/\sqrt{\boldsymbol{\sigma}}=1$.  

\begin{figure}[h]
    \vspace*{-1em}
    \centering
    \includegraphics[width=0.8\linewidth]{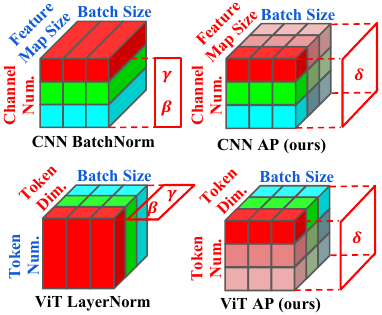}
    \vspace*{-0.5em}
    \caption{
    \textcolor{red}{Tunable parameter} shape of {\nt} and {\ours}. The same color indicates shared parameters across \textcolor{blue}{dimensions}.}
    \vspace*{-1em}
    \label{fig: APvsNT}
\end{figure}

Due to more flexible perturbations of {\ours}, such a connection exhibits increased power of {\ours} than {\nt}. 
We formally prove and summarize the proposed connection in Proposition\,\ref{prpst: AP-NT full} in Appx.\,\ref{subsec: proof-AP-NT}.
Meanwhile, we remark that another key difference of {\ours} compared to {\nt} is that no parameters of the model backbone need to be altered during training. This differentiates ``prompting'' from other PEFT methods, where the former keeps the pretrained model backbone intact. 
In the realm of PEFT, recent research has also shown that LayerNorm-based {\nt} serves as a robust baseline of model adaptation for ViTs \citep{basu2023strong}. Beyond that, we will show that {\ours} can surpass {\nt} and remain effective for CNNs.

\section{A Deep Dive into AP: Layer and Architecture Effects}
\label{sec: ap_layer}

\begin{figure*}[htb]
\centering
\centerline{
\begin{tabular}{cccc}
\hspace*{0mm}\includegraphics[width=.24\textwidth,height=!]{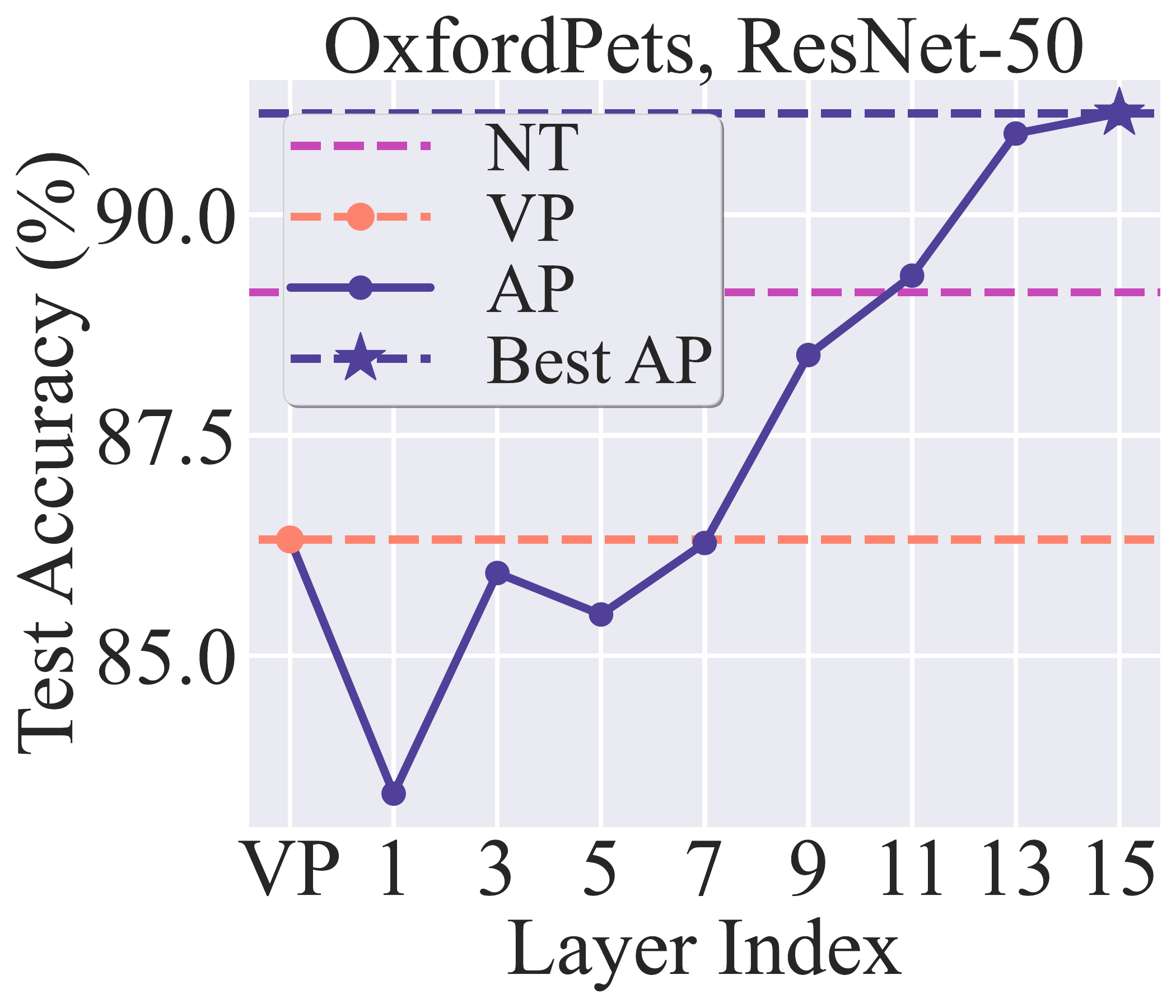}  
&\hspace*{-4mm}\includegraphics[width=.24\textwidth,height=!]{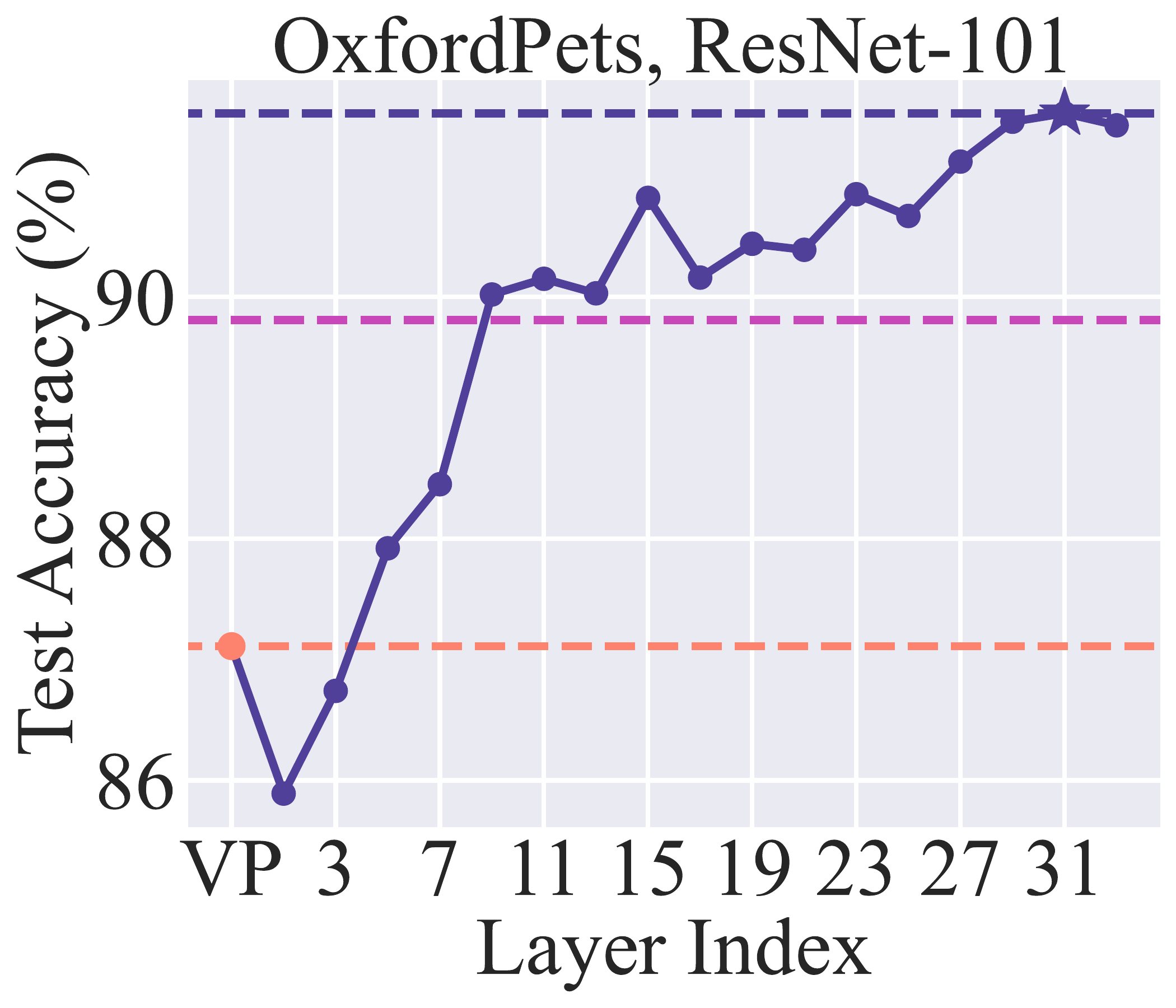}
&\hspace*{-4mm}\includegraphics[width=.24\textwidth,height=!]{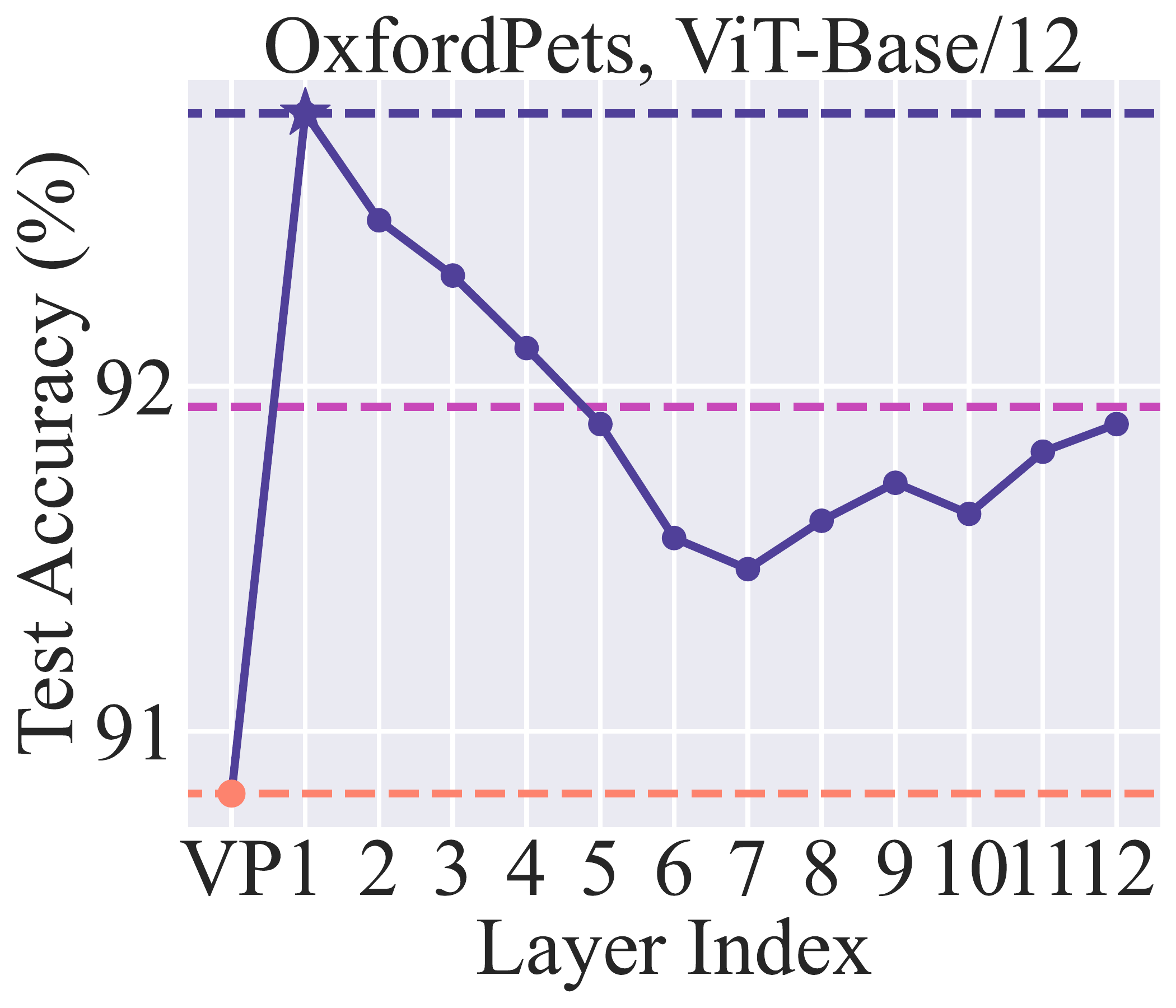}
&\hspace*{-4mm}\includegraphics[width=.24\textwidth,height=!]{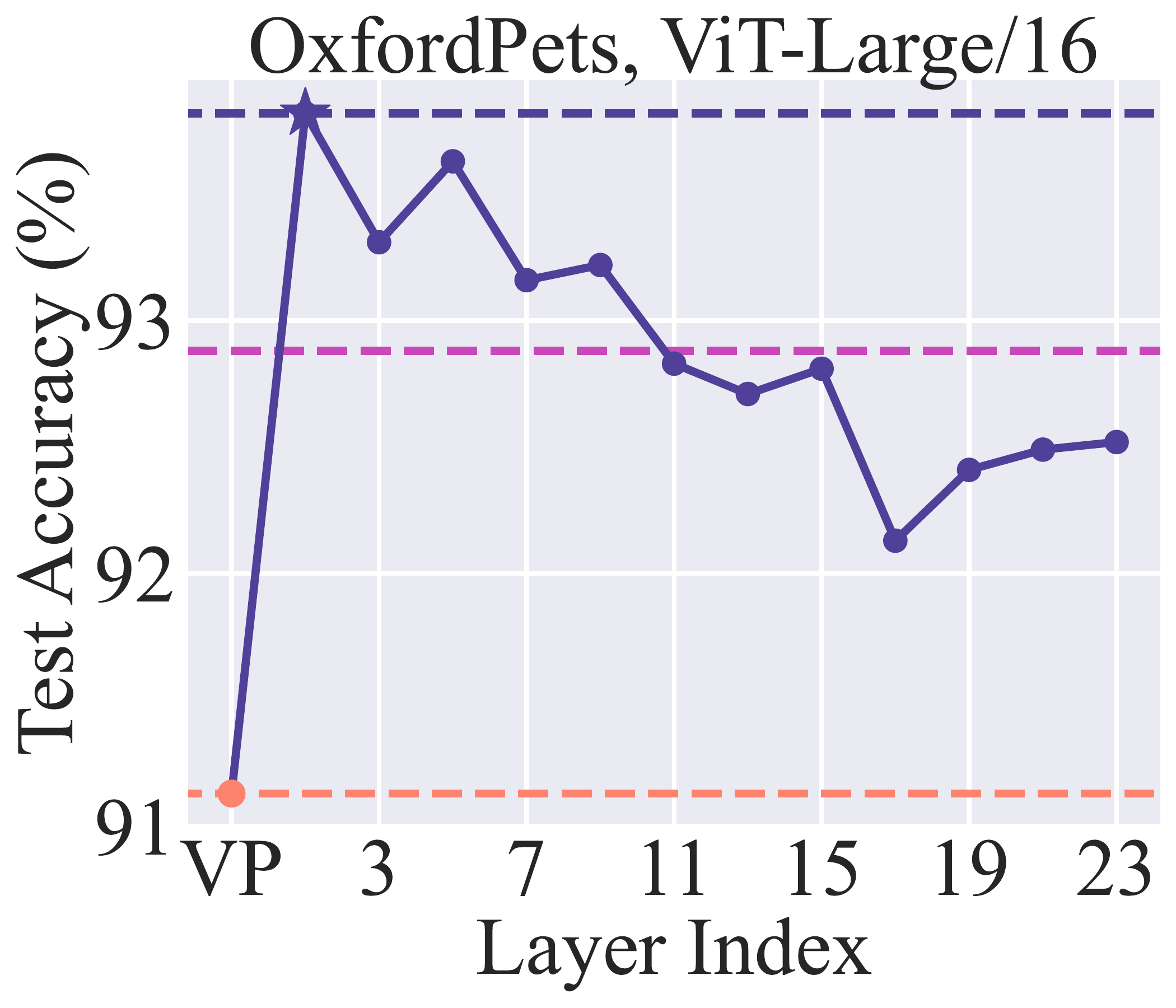}
\end{tabular}}
\vspace*{-0.5em}
\caption{{Layer preference of {\ours} with different model architectures on OxfordPets\,\citep{parkhi2012cats}. CNNs and ViTs exhibit opposite layer preferences. Results on more datasets are provided in Fig.\,\ref{fig: layer_effect_complete}.
}}
\vspace*{-1em}
\label{fig: layer_effect}
\end{figure*}

Our findings in Fig.\,\ref{fig: preliminary} suggest that the effectiveness of {\ours} may be contingent on the \textit{specific layer} where it is installed. For a deeper understanding of this characteristic and its association with \textit{model architecture}, we examine both ResNet and ViT model types.

\textbf{Fig.\,\ref{fig: layer_effect}} follows and expands Fig.\,\ref{fig: preliminary} by covering the additional models, \textit{i.e.}, ResNet-50, ViT-Base/12, and ViT-L/16, and showcasing the transfer learning accuracy enabled by AP on the downstream dataset OxfordPets as a function of the layer index to which AP is applied. 
As we can see, a key observation is that \textit{ResNets and ViTs exhibit contrasting layer preferences for {\ours}}, where \ding{72} indicates the best performance of AP in Fig.\,\ref{fig: layer_effect} under each architecture. 
Specifically, CNNs exhibit a preference for {\ours} in their \textit{deeper} layers, while ViTs tend to favor {\ours} in their \textit{shallower} layers. Moreover, within the comfort layer zone, the performance of {\ours} consistently outperforms {\nt}.

\begin{figure}[h]
\centering
\centerline{
\begin{tabular}{cc}
\includegraphics[width=.5\linewidth,height=!]{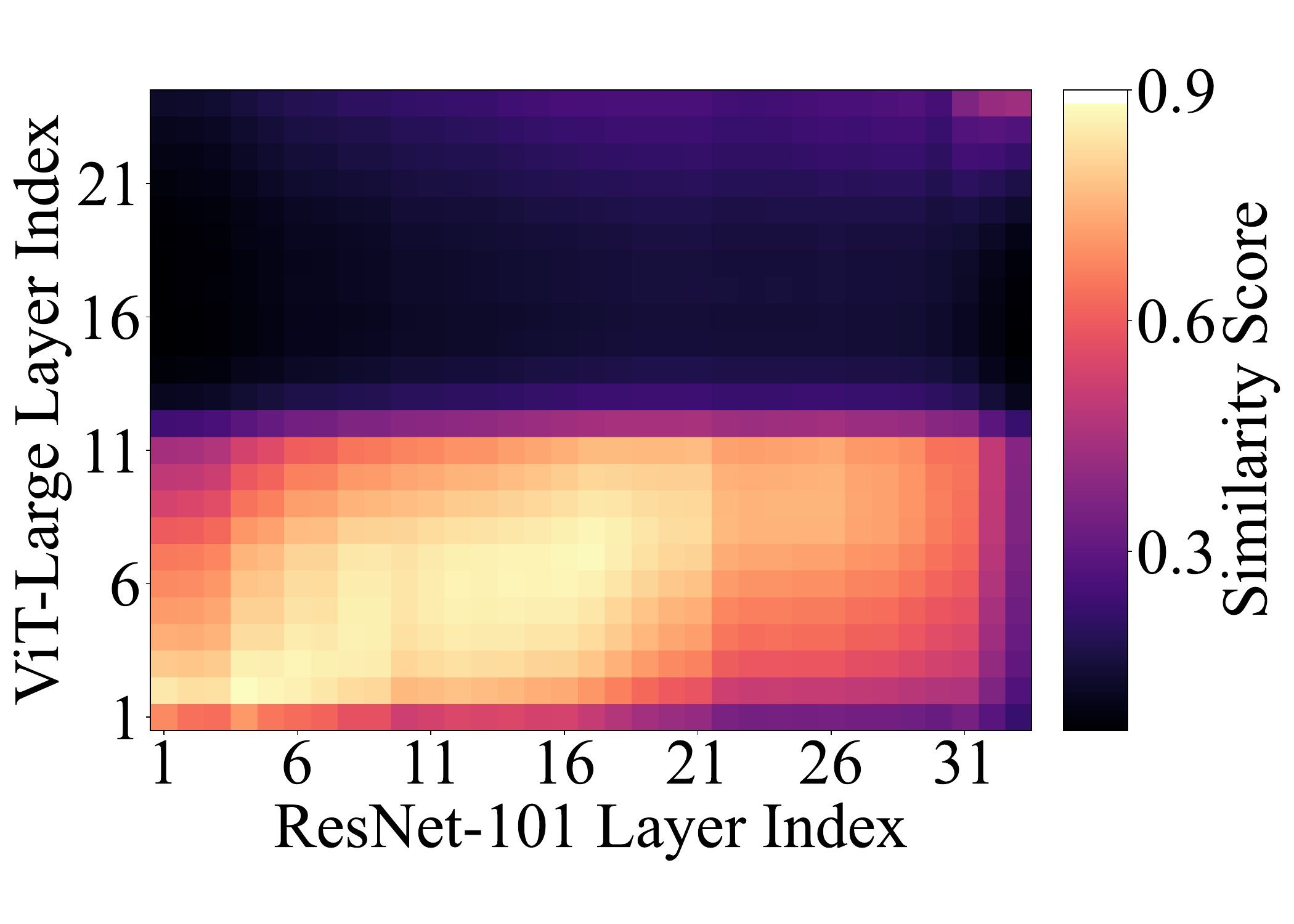}  
&
\hspace*{-4mm}
\includegraphics[width=.5\linewidth,height=!]{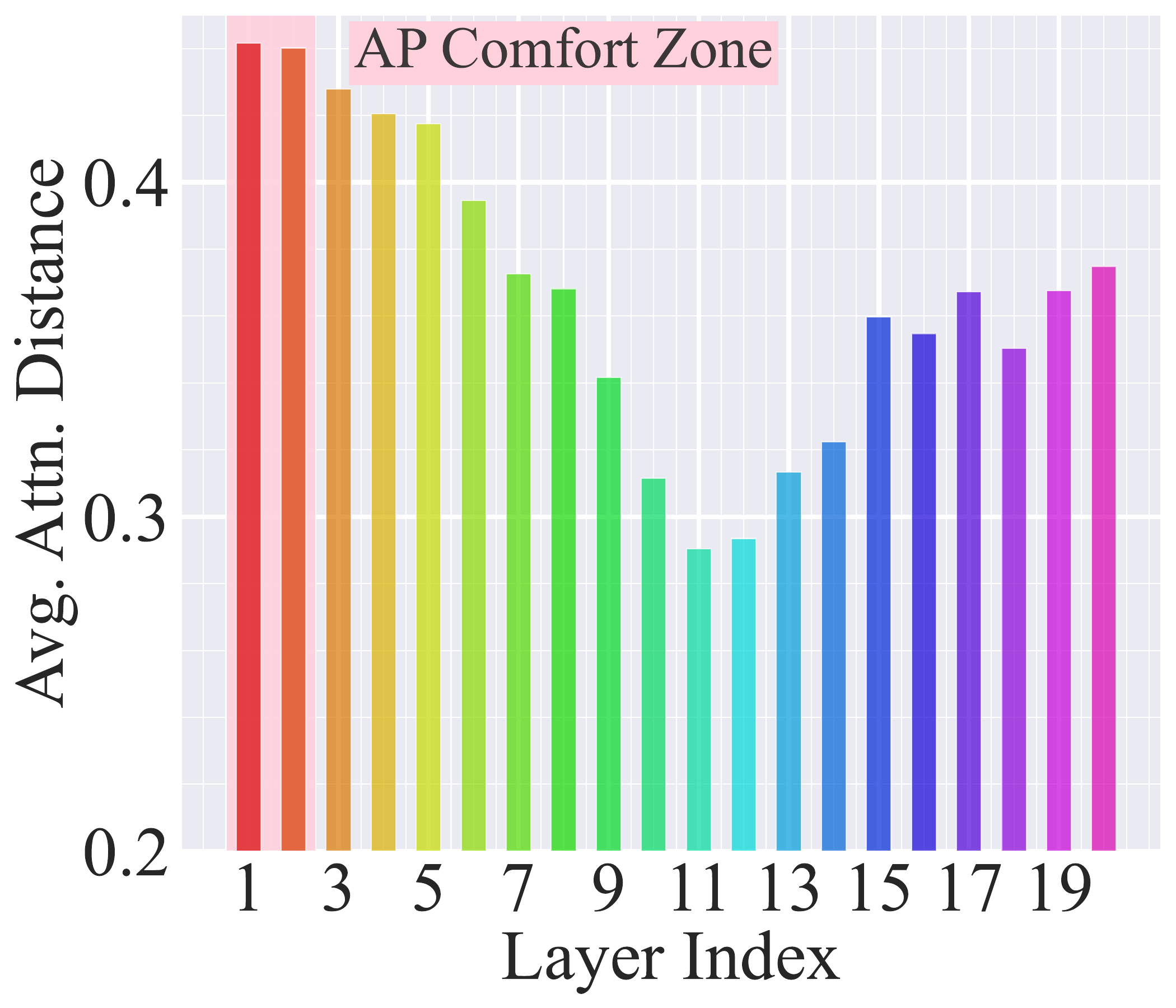}
\end{tabular}}
\vspace*{-0.5em}
\caption{{
Features dissection to understand the layer effect. (Left) CKA-based feature similarity comparison between ViT-L/16 and ResNet-101. 
(Right) Average attention distances across all the heads and layers of ViT-L/16. Larger distance indicates global features.}}
\vspace*{-0.5em}
\label{fig: feature_dissection}
\end{figure}

\textbf{Dissecting CNNs and ViTs: AP prioritizes `global' features over `local' features.} To unpack the intriguing AP's layer preference behavior above, we next examine the features captured by different layers of CNNs and ViTs. 
To this end,  we first employ the Centered Kernel Alignment (CKA)-based feature similarity analysis\,\citep{cortes2012algorithms} to measure the layer-wise representation similarity between CNNs and ViTs, \textit{e.g.},
ResNet-101 and ViT-L/16 in \textbf{Fig.\,\ref{fig: feature_dissection}}.
As we can see, the deep features of ResNet-101 predominantly align with the middle layers of ViT-L/16. This concurs with the observations made in \citep{raghu2021vision}, which suggest that ViTs have the capability to capture features reminiscent of the deeper layers of CNNs even within their relatively early layers.
In addition, as indicated by network dissection analysis for CNNs \citep{bau2017network}, CNNs tend to prioritize low-level visual concepts, \textit{i.e.}, \textit{local features}, like color and texture, in their shallower layers. In contrast, they transition to high-level, class-discriminative concepts, encompassing  \textit{global features} like scenes and objects in deeper layers.

Drawing upon the analyses presented above and insights in \textbf{Fig.\,\ref{fig: layer_effect}}, \textit{we hypothesize} that {\ours} exhibits a preference for deep layers in CNNs and shallow layers in ViTs, which can be attributed to the models' inclinations toward global features over local features.
To bolster our hypothesis, we investigate how global information is distributed across the layers of ViTs. We employ a methodology used in \citep{raghu2021vision} and \citep{walmer2023teaching} to compute the average attention distance between the position of query tokens and the locations they attend to with the query within each self-attention head in ViTs. This analysis unveils how each self-attention layer contributes to the balance between local and global information in the overall representation.
In \textbf{Fig.\,\ref{fig: feature_dissection}} (B), we present the average attention distance across 16 attention heads for with different layer indices of a pretrained ViT-L/16. A general trend can be observed: the distribution of the sorted attention distance moves firstly downwards (layer index from $1$ to layer $12$). This implies that the ratio of the global features captured by attention in general decreases. When the layer index is larger than $15$, the global feature ratio slightly increases. This trend roughly aligns well with the patterns observed in {Fig.\,\ref{fig: layer_effect}}. These observations underscore our claim that {\ours}'s layer preference is influenced by the presence of global features.
We provide theoretical support in the next section to support the layer and architecture effect. Particularly, we focus on the more challenging part of ViTs, since the study on CNNs is abundant. Furthermore, we provide theoretical support in the following section to support the layer and architecture effect.

\textbf{Remark on the comparison of AP vs. VPT}. While VPT \citep{jia2022visual} also suggests adding extra tokens (prompts) to all intermediate layers of a ViT, our approach differs fundamentally. 
AP rigorously explores the optimal layer selection for effective prompting, where (input-level) VP is covered as a special case. Unlike VPT, AP uncovers new insights into layer-specific effects, architectural dependencies, and their explanations, supported by both empirical and theoretical analyses. Our findings show that strategic layer selection in AP can match or surpass the effectiveness of VPT's multi-layer prompting (see Tab.\,\ref{tab: more_peft}). 

\section{Theoretical Analyses for Layer and Architecture Effects}
\label{sec: theory}

From a generalization perspective, we focus on the layer and architecture effect for \textbf{ViTs}:
\textit{To achieve the desired generalization performance (or test accuracy), will shallow-layer {\ours} tuning require less sample complexity than deep ones for ViTs? }
If so, with the same sample complexity, shallow-layer AP could achieve better performance. To show this, we present the theoretical setups that satisfy the conditions of global features for ViTs, followed by the generalization analysis with sample complexity bound in Theorem \ref{thm: ViT}. 

\textbf{Problem setup.}
Building on the theoretical frameworks for analyzing the training and generalization of Transformers \citep{li2023theoretical, oymak2023role}, we derive theoretical insights by considering a binary classification problem. We use a single-head, two-layer ViT \citep{huang2023context, nichani2024transformers} as the pretrained model, applied to the dataset $\{\bfx_n, y_n\}_{n=1}^N$.
Here $y_n\in\{+1,-1\}$, and each data  $\bfx_n\in\mathbb{R}^{d\times P}$ consists of $P$ tokens. The training is implemented by a stochastic gradient descent (SGD) with the loss $\ell(f_{\btheta}(\bfx_n, \bdelta); y_n)$, where $f_{\btheta}$ and $\bdelta$ are the pretrained model and the trainable {\ours}. The generalization is evaluated by the population risk $\mathbb{E}[\ell(f_{\btheta}(\bfx, \bdelta); y)]$.

\textbf{Data assumption.} Each token of $\bfx_{n}$ is formulated as a pattern added with a Gaussian noise following $\mathcal{N}(0,\sigma^2)$, $\sigma\leq O(1/P)$. We consider four patterns $\{\bfv_1,\bfv_2,\bfv_3,\bfv_4\}$. In each $\bfx_n$, only one token corresponds to either $\bfv_1$ or $\bfv_2$, named discriminative patterns that decide the label. Other $P-1$ tokens correspond to either $\bfv_3$ or $\bfv_4$, named irrelevant non-discriminative patterns ones for the downstream task. For instance, if one token within $\bfx_n$ is the noisy version of $\bfv_1$ ($\bfv_2$), then its downstream task label $y^n=1$ ($y^n=-1$). 

\textbf{Pretrained model assumption.} 
We have mild assumptions on the MLP neuron weights and self-attention matrices of the pretrained model, which have been used and numerically verified in existing works. Recent SOTA theoretical findings \citep{zhenmei2022theoretical, li2023theoretical, WL21} reveal, during pretraining, the weights of each neuron in the MLP tend to converge towards one of the patterns present in the raw data, \textit{e.g}, $\bfv_1,\bfv_3$. Following the observation above, we assume neuron weights in the $\ell$-th MLP after pretraining to be one of the patterns in $\{\bfv_1, \bfv_2, \bfv_3, \bfv_4 \}$. Typically, $\bfv_1$ and $\bfv_2$ are patterns in the downstream task that have relevance to the labels, while $\bfv_3$ and $\bfv_4$ are ones also in the downstream task but do not bear a relation to the labels. Plus, as suggested by the global features introduced in Sec.\,\ref{sec: ap_layer} that make tokens attend to other tokens, we assume the key and value matrices to be scalings of permutation matrices.
Detailed data and model assumptions can be found in Appx.\,{\ref{subsec: prop3}}. 

Given a set of queries $\bfq_1,\cdots,\bfq_P$ and keys $\bfk_1,\cdots,\bfk_P$ for an attention head, we formally define the \textit{average attention distance} mentioned in \textbf{Fig.\,\ref{fig: feature_dissection}} as $\sum_{i=1}^P |i-\arg\max_{j\in[P]}\left\langle \bfk_j, \bfq_i\right\rangle|/P$, 
i.e., the average distance between the query $\bfq_i$ and the key $\bfk_j$ that has the largest inner product with $\bfq_i$, $i,j\in[P]$. 
Assuming the discriminative key and value are away from the discriminative query with a distance of $d_A\geq 1$, we have the following Lemma on decreasing the average attention distance.

\vspace*{-2mm}
\begin{lemma}\label{lemma: delta-0}
    The average attention distance defined above decreases from $(1+d_A)/P$ to $1/P$ after the 1st layer of the simplified two-layer ViT.
\end{lemma}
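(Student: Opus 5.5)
The argument is a direct computation with the simplified two-layer ViT of the appendix. There the key and value matrices of each layer are scalings of permutation matrices, and the MLP neurons after pretraining are aligned with $\{\bfv_1,\bfv_2,\bfv_3,\bfv_4\}$. I would track, for each query position $i$, which key position $j$ attains $\arg\max_{j\in[P]}\langle \bfk_j,\bfq_i\rangle$, first in layer 1 and then in the layer fed by the layer-1 output. Since $\sigma\le O(1/P)$, I treat every token as its pattern plus a perturbation whose inner products are $o(1)$. This perturbation cannot change any argmax whose margin is $\Theta(1)$, and the noise only enters through that margin check.

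\textbf{Step 1 (before the first layer).} Write $\bfW_K=c_K\bfP_K$ and $\bfW_Q=c_Q\bfP_Q$ for the pretrained permutation-type matrices. The assumption that the discriminative key lies at distance $d_A$ from the discriminative query means the following: for the query at the discriminative position $i^\ast$, the maximizing key sits at $i^\ast\pm d_A$, so that token contributes $d_A$ to the sum. For the non-discriminative queries, the permutation structure together with the orthogonality of the patterns makes each query's maximizer a specific token. Under the appendix assumptions, that token contributes distance $1$ in total across these positions. I would read off from the construction that the summed distance is exactly $1+d_A$, giving the stated value $(1+d_A)/P$ once the $1/P$ normalization is applied. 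In doing so I would verify that the inner-product gap between the maximizer and the runner-up is $\Theta(c_Kc_Q)$, which dominates the $O(\sigma\sqrt d)$ noise terms.

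\textbf{Step 2 (after the first layer).} Next I compute the layer-1 output $\bfz^{(1)}$. Attention concentrates on the argmax key, and $\bfW_V$ is a permutation as well. The MLP neurons then map each token to the direction of its matched pattern. In particular, the discriminative information is now written into the discriminative query position itself, because the value is transported from $i^\ast\pm d_A$ back to $i^\ast$. Recomputing the keys and queries of the next layer on $\bfz^{(1)}$, the discriminative query's largest inner product is now with its own position, so it contributes distance $0$. The remaining positions keep the same pattern structure, so their summed contribution stays $1$. Normalizing gives $1/P$.

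\textbf{Main obstacle.} The delicate part is Step 2. I must show that after the softmax and the MLP (ReLU on pattern-aligned neurons) the argmax genuinely moves to distance $0$ for the discriminative token, and that it does not create new long-range maxima among the non-discriminative tokens. I would handle this by bounding the softmax mass off the maximizer by $e^{-\Theta(c_Kc_Q)}$. That bound lets me treat attention as a hard selection up to a small error. I would then check that this error, added to the Gaussian noise, stays below the $\Theta(1)$ separation between the inner products of distinct orthogonal patterns.
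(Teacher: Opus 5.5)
Your Step 1 matches the paper's first-layer accounting: the discriminative query contributes $d_A$, and one displaced background token contributes $1$. The gap is in Step 2, which treats first-layer attention as a hard selection. In this model $c_Kc_Q=\beta_1^2=\Theta(1)$. The discriminative query sees one key with score $\beta_1^2$ against $P-1$ background keys with score $0$, so the softmax mass off the maximizer is $(P-1)/(e^{\beta_1^2}+P-1)=1-\Theta(1/P)$, not $e^{-\Theta(\beta_1^2)}$. Your bound drops the factor $P-1$; hard selection would require $\beta_1^2\gg\log P$, which contradicts the pretrained-model assumption. Computing the softmax exactly, as the paper does, the discriminative position becomes $\lambda_1\bfv_1+(1-\lambda_1)\bfv_4$ and every background position becomes $\lambda_2\bfv_1+(1-\lambda_2)\bfv_4$. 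Here $\lambda_1=e^{\beta_1^2}/(e^{\beta_1^2}+P-1)$, $\lambda_2=1/((P-1)e^{\beta_1^2}+1)$, and $0<\lambda_2<\lambda_1<1/2$. So the discriminative information is diluted at $i^\ast$, not transported there.

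This reverses your conclusion. For the discriminative query, its self-score minus its score against a background key is $(2\lambda_1-1)(\lambda_1-\lambda_2)<0$. Its largest inner product is therefore \emph{not} with its own position but with a background-type key. Background queries keep preferring background keys, since $(2\lambda_2-1)(\lambda_2-\lambda_1)>0$. The $d_A$ term vanishes because the discriminative query stops attending to the distant discriminative key and attends instead to an adjacent background token, at distance at most $1$. That is where the surviving $1$ in $1/P$ comes from, while background queries contribute $0$; your tally $0+1$ agrees with the paper's $1+0$ only numerically.

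The dilution mechanism is also what Theorem~\ref{thm: ViT} builds on. A second-layer prompt of norm $\Theta(P^2\log P)$ is needed precisely because second-layer attention has lost its focus on discriminative tokens; in your picture the discriminative token would already be self-focused. Separately, your Step 2 tacitly treats the second-layer key map as position-preserving. But $\bfW_K^{(2)}=\beta_2\bfP_x^{(2)}$ is the same kind of permutation that created the displacement $d_A$ in layer 1. The dilution argument is insensitive to this, since all but one position carry a background-type key. To repair the proof, replace the concentration bound with the exact softmax mixtures and this inner-product comparison.
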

\vspace*{-2mm}

Lemma \ref{lemma: delta-0}  supports our empirical observation in \textbf{Fig.\,\ref{fig: feature_dissection} (B)} of decreasing attention distance values within deep layers in ViT.
In addition, the reduction in the attention distance leads to an increased sample complexity, as summarized in the following theorem.

\vspace*{-2mm}
\begin{thm}\label{thm: ViT}
  Training  a 2-layer ViT with SGD returns a model with zero generalization error, as long as the batch size $B\geq \Omega(1)$, and the required number of samples $N$ satisfy either (i) $N\geq N_1=\Theta(P)$ if adding {\ours} to the 1st layer; (ii) $N\geq N_2=\Theta(P^2\log P)$ if adding {\ours} to the 2nd layer. $N_2$ is order-wise larger than $N_1$. 
\end{thm}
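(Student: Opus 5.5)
We prove the theorem by following the feature-learning template of \citet{li2023theoretical, oymak2023role}. First we write down the SGD dynamics of the prompt $\bdelta$ under the pretrained-model assumptions. Then we show that after $T$ iterations $\bdelta$ acquires a component along the discriminative direction large enough to give a positive margin on every test sample. Finally we convert the number of iterations and the batch size into a sample-complexity bound via concentration. Both cases start from the same setup. The gradient of the hinge/logistic loss with respect to $\bdelta$ is decomposed along the pattern basis $\{\bfv_1,\ldots,\bfv_4\}$ plus an orthogonal noise part. Because the MLP neurons equal the patterns and the key/value matrices are scaled permutations, each gradient term is an explicit function of the softmax attention weights and of which neurons are activated.

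\textbf{Case (i): AP at the first layer.} Here $\bdelta$ is added to the raw tokens before the first attention. By the assumption underlying Lemma~\ref{lemma: delta-0}, the first-layer attention has average distance $(1+d_A)/P$, so the discriminative key sits at distance $d_A$ from the discriminative query and aggregates information globally. The key step is to show that the component $\langle\bdelta_t,\bfv_1-\bfv_2\rangle$ grows by $\Theta(1)$ per step in expectation. The reason is that the discriminative token is routed through attention with weight $\Theta(1)$ after a few steps: the prompt increases its inner product with the query, so softmax concentration kicks in. The growth of $\bdelta$ along $\bfv_3,\bfv_4$ cancels in sign across labels. Per-step stochastic deviations are controlled by Hoeffding over a batch of size $B\ge\Omega(1)$ together with a union bound over $P$ tokens, using $\sigma\le O(1/P)$. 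This gives $T=\Theta(P)$ iterations, hence $N=BT=\Theta(P)$ samples. Zero generalization error then follows because every fresh sample has margin $\Omega(1)$ uniformly over the Gaussian noise.

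\textbf{Case (ii): AP at the second layer.} By Lemma~\ref{lemma: delta-0}, after the first layer the attention distance has collapsed to $1/P$, so each token mostly attends to itself. The discriminative signal in the second-layer input is therefore carried by only one of $P$ tokens, with softmax weight $\Theta(1/P)$ in the pooled output. We redo the same gradient computation. Now the signal term in $\langle\nabla\bdelta,\bfv_1-\bfv_2\rangle$ is only $\Theta(1/P)$, while the $P-1$ irrelevant tokens contribute fluctuations of order $1/\sqrt{P}$ per sample. These fluctuations need $\log P$ extra averaging to be union-bounded. Hence $T=\Theta(P^2\log P)$ iterations are required to reach margin $\Omega(1)$, which gives $N_2=\Theta(P^2\log P)$. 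We also argue a matching lower bound: with fewer samples the accumulated signal $T/P$ is dominated by the noise $\sqrt{T\log P}/\sqrt{P}$. This shows that $N_2$ is order-wise larger than $N_1$.

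The main obstacle is tracking the softmax attention weights during training in case (i). We must show they concentrate on the discriminative token early enough, and remain there, to yield a $\Theta(1)$ per-step signal. This needs an induction over $t$ coupling $\langle\bdelta_t,\bfv_1\rangle$ with the attention score. I would attempt it with the two-phase argument of \citet{li2023theoretical}: an initial phase in which the attention weights are near uniform, followed by a growth phase in which they concentrate. The key/value permutation structure keeps the cross terms bounded throughout.
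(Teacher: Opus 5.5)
\textbf{Case (i).} The quantity you propose to track, $\langle\bdelta_t,\bfv_1-\bfv_2\rangle$, does not grow.

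\emph{Why it stays at zero.} The prompt is shared by all samples, and the two classes enter the loss symmetrically. Positive samples push $\bdelta$ toward $+\bfv_1$ and against their $\bfv_4$ background. Negative samples push it toward $+\bfv_2$ (the $\bfv_2$-neurons carry negative output weight) and against their $\bfv_3$ background. In expectation one gets $\bdelta_t=k(\lambda\bfv_1+\lambda\bfv_2-\mu\bfv_3-\mu\bfv_4)$ with $k=\eta t/(4P)$ and $\lambda,\mu=\Theta(1)$. So the $\bfv_1-\bfv_2$ component stays at zero up to fluctuations. Indeed, a shared shift along $\bfv_1-\bfv_2$ would raise the $\bfv_1$-neuron activations of negative samples as well.

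\emph{The component that matters} is $-(\bfv_3+\bfv_4)$. Once $k\mu$ reaches a constant threshold ($\Theta(1)/(1+\gamma)$ with $\gamma=\bfv_3^\top\bfv_4$), the prompt cancels the non-discriminative patterns. The $\bfv_3/\bfv_4$ neurons then switch off with probability close to one; this is where $\sigma\le O(1/P)$ is used. The hinge target $yf\ge 1/P$ then only requires attention weight at least $1/P$ on the discriminative token, not $\Theta(1)$.

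\emph{Where $T=\Theta(P)$ comes from.} It comes from the per-step increment being $\Theta(\eta/P)$, inherited from the output weights $\pm 1/(mP)$. With the $\Theta(1)$ per-step growth you assert, the step count would carry no factor of $P$ at all: it would be $O(1)$, or $O(\log P)$ for the $\Theta(1)$ attention weight you aim for. So your derivation of $N_1$ does not close even on its own terms.

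\textbf{Case (ii).} The $P^2\log P$ does not come from a signal-versus-noise trade-off. Your own numbers, signal $T/P$ against noise $\sqrt{T\log P/P}$, balance at $T\approx P\log P$. They therefore yield neither the upper bound $P^2\log P$ nor your lower bound: for $P\log P\ll T\ll P^2\log P$ the signal already dominates.

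The missing ingredient is the dilution computed in the proof of Lemma~\ref{lemma: delta-0}. After the first layer, the discriminative token carries only a fraction $\lambda_1=e^{\beta_1^2}/(e^{\beta_1^2}+P-1)=\Theta(1/P)$ of the discriminative pattern, the rest being background. Hence, after the same background-removal phase ($t_0=\Theta(P/\eta)$), the second-layer attention-score gap is only $\Theta(\langle\bfe_1,\bdelta\rangle/P)$. Reaching $f\ge 1/P$ then needs $\exp(\Theta(\langle\bfe_1,\bdelta\rangle/P))\gtrsim P$, i.e.\ $\langle\bfe_1,\bdelta\rangle\gtrsim P\log P$. With increments of $\Theta(\eta/P)$ per step, that takes $\Theta(P^2\log P/\eta)$ further steps. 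Taking $N=BT$ with $B=\Theta(1)$ gives both thresholds.

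The paper obtains the separation $N_2\gg N_1$ by comparing what the SGD trajectory must reach in the two cases: a constant threshold on the background component versus $\Theta(P\log P)$ along $\bfe_1$. It does not use a noise-domination lower bound, and the one you sketch fails for the reason above.
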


\begin{figure}[t]
    \centering
    \includegraphics[width=0.6\linewidth,height=!]{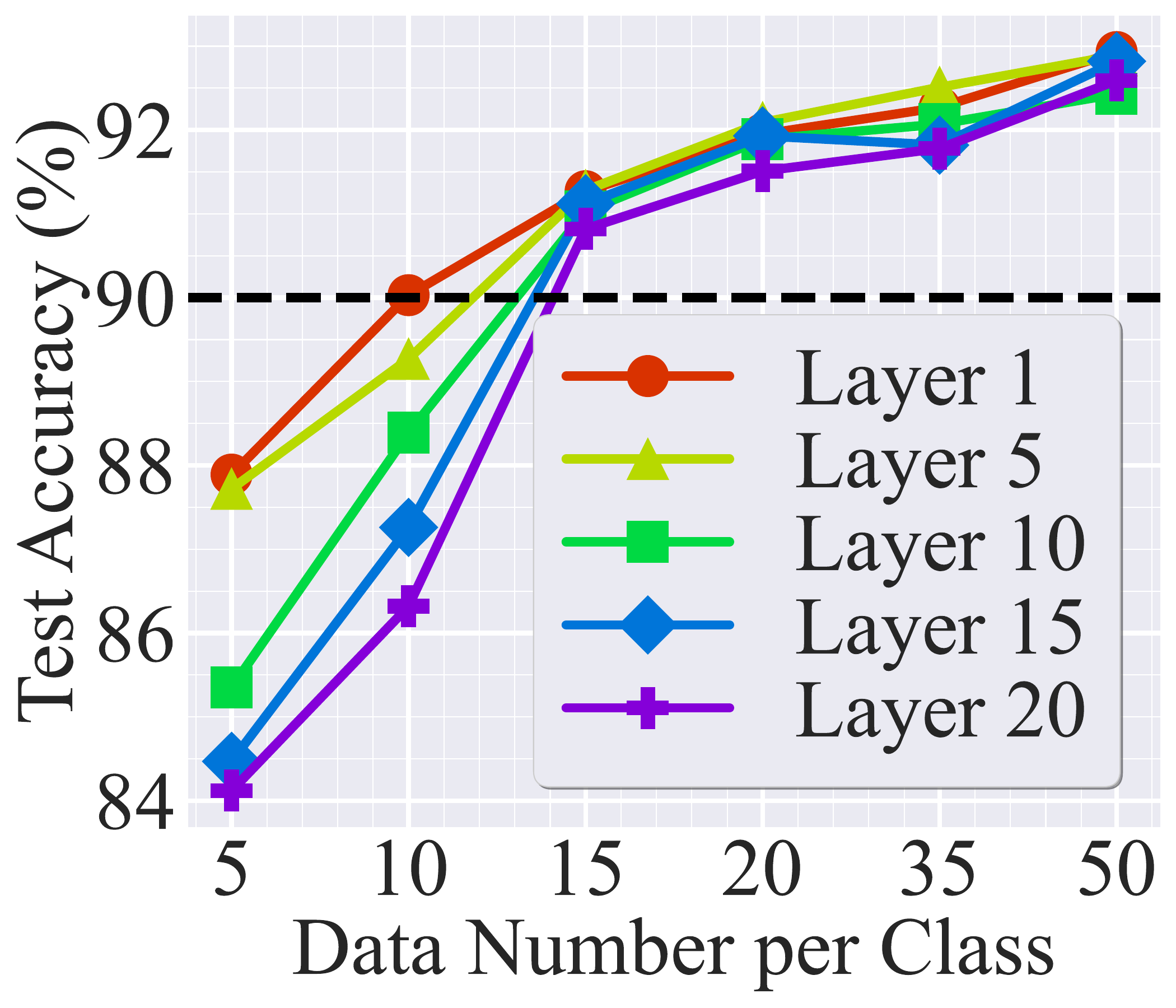}
    \vspace*{-0.5em}
    \caption{{Sample complexity study of {\ours}.}}
    \label{fig: sample_complexity}
    \vspace*{-1em}
\end{figure}

Theorem \ref{thm: ViT} shows deep-layer {\ours} requires more training samples than the shallow one to achieve the same generalization, as shown by the dashed line in \textbf{Fig.\,\ref{fig: sample_complexity}}. Accordingly, with the same number of training samples and setup, shallow-layer {\ours} generalizes better. The proof of Theorem \ref{thm: ViT} can be found in Sec.\,\ref{subsec: proof thm ViT}. The basic proof idea is that for {\ours} in the shallow layer, a trained prompt with a norm of $\Theta(P)$ that removes non-discriminative patterns is enough to make all tokens attend to discriminative tokens. Thus, the amount of global features does not decrease. This can ensure zero generalization by abundant global features. For {\ours} in deep layers, however, given Lemma.\,\ref{lemma: delta-0}, a lack of global features leads to an evident mismatch between discriminative tokens in the 2nd-layer self-attention. Hence, a trained prompt with a norm of $\Theta(P^2\log P)$ is necessary to direct the attention to focus on discriminative tokens. The proof concludes with the demonstration that the sample complexity bound is proportional to the the trained prompts magnitude.

\section{Experiments}
\label{sec: experiments}
\subsection{Experiment Setup}

\begin{table*}[htb]
\caption{{Performance comparison on 19 datasets from 3 benchmarks. Three parameter-efficient baselines (denoted by {\partcirc}) are compared to {\ours}, where the best performance is highlighted in \textbf{bold}. The most expensive {\ff} (denoted by {\fullcirc}) serves as the performance reference. Each accuracy value is averaged over 5 trials, with variance omitted due to its negligible values ($\leq 0.3\%$).
``Average'' column averaged accuracies over all the datasets.
}}
\fontsize{6pt}{6pt}\selectfont
\newcolumntype{C}{>{\centering\arraybackslash}X}
\setlength{\tabcolsep}{2.2pt}
\setlength{\extrarowheight}{7pt}
\renewcommand{\arraystretch}{0.6}
\begin{tabularx}{\linewidth}{p{5pt}p{1.8cm}!{\color{lightgray}\vline}CCCCC!{\color{lightgray}\vline}CCCCCCCCC!{\color{lightgray}\vline}CCCcC!{\color{lightgray}\vline}C}
\toprule[1pt]
& \multicolumn{1}{c}{\underline{\textbf{Benchmark}}}
& \multicolumn{5}{c}{\underline{\textbf{FGVC}}} 
& \multicolumn{9}{c}{\underline{\textbf{VTAB}}} 
& \multicolumn{5}{c}{\underline{\textbf{Others}}} \\
\rotatebox{90}{\raisebox{1.0pt}{\textbf{Architecture}}}
& 
& \rotatebox{90}{\raisebox{1.0pt} {\cub}} 
& \rotatebox{90}{\raisebox{1.0pt} {\dog}}
& \rotatebox{90}{\raisebox{1.0pt} {\stanfordcars}}
& \rotatebox{90}{\raisebox{1.0pt} {\nabirds}}
& \rotatebox{90}{\raisebox{1.0pt} {\oxfordflowers}}
& \rotatebox{90}{\raisebox{1.0pt} {\cifarhun}}
& \rotatebox{90}{\raisebox{1.0pt} {\caltech}}
& \rotatebox{90}{\raisebox{1.0pt} {\dtd}}
& \rotatebox{90}{\raisebox{1.0pt} {\flowers}}
& \rotatebox{90}{\raisebox{1.0pt} {\oxfordpets}}
& \rotatebox{90}{\raisebox{1.0pt} {\svhn}}
& \rotatebox{90}{\raisebox{1.0pt} {\sun}}
& \rotatebox{90}{\raisebox{1.0pt} {\camelyon}}
& \rotatebox{90}{\raisebox{1.0pt} {\eurosat}}
& \rotatebox{90}{\raisebox{1.0pt} {\cifarten}} 
& \rotatebox{90}{\raisebox{1.0pt} {\gtsrb}}
& \rotatebox{90}{\raisebox{1.0pt} {\ucf}}
& \rotatebox{90}{\raisebox{1.0pt} {\food}} 
& \rotatebox{90}{\raisebox{1.0pt} {\waterbirds}}
& \rotatebox{90}{\raisebox{1.0pt} Average} \\
\midrule
\multirow{6}{*}{\rotatebox{90}{\textbf{ResNet-101}}}
& {\fullcirc} {\ff} 
& 88.91 &  90.13 &  87.76 &  84.45 &  99.98 &  92.24 &  99.13 &  79.97 &  99.81 &  90.49 &  97.14 &  79.19 &  91.13 &  99.13  & 97.24 &  97.68 &  88.32 &  82.72 & 96.69 & 91.69 \\
\cmidrule{2-22}
& {\partcirc} {\lp}  
& 63.76 &  86.63 &  49.62 &  52.09 &  82.01 &  73.87 &  90.58 &  61.35 &  93.14 &  91.17 &  66.30 &  54.51 &  83.36 &  95.84 &  92.25 &  79.64 &  71.03 &  64.31 &  88.11 & 75.76
\\
& {\partcirc} {\nt} 
& \stress{66.39} &  \stress{87.59} &  \textbf{67.64} &  \stress{56.72} &  66.50 &  \textbf{82.58} &  91.32 &  63.53 &  \stress{92.85} &  89.81 &  \textbf{95.26} &  54.56 &  \stress{84.42} &  \stress{96.14} &  \stress{93.90} &  \textbf{96.43} &  69.44 &  \textbf{72.54} &  \textbf{88.95} & 79.81
\\
& {\partcirc} {\vp} 
& 65.72 &  86.91 &  51.04 &  54.23 &  \stress{78.50} &  72.01 &  \stress{93.51} &  63.12 &  90.17 &  \stress{87.93} &  80.68 &  \stress{54.97} &  83.71 &  95.44 &  92.55 &  83.18 &  \stress{66.30} &  57.89 &  86.71 & 76.03
\\
& {\partcirc} {\ours} (ours)
& \textbf{69.42} &  \textbf{87.79} &  \stress{59.06} &  \textbf{58.31} &  \textbf{85.14} &  \stress{76.94} &  \textbf{94.85} &  \textbf{69.80} &  \textbf{95.13} &  \textbf{91.31} &  \stress{87.30} &  \textbf{56.83} &  \textbf{84.91} &  \textbf{97.21} &  \textbf{94.08} &  \stress{90.43} &  \textbf{73.96} &  \stress{68.12} &  \stress{88.13} & \textbf{80.45}
\\

\midrule
\multirow{6}{*}{\rotatebox{90}{\textbf{ViT-L/16}}}
& {\fullcirc} {\ff} 
& 89.79 &  93.31 &  89.42 &  84.75  &  99.91 &  93.19 &  99.25 &  75.30 &  99.39 &  93.35 &  98.13 &  79.31 &  91.93 &  97.92 &  98.30 &  97.90 &  89.25 &  86.16 & 97.93 & 92.34 \\
\cmidrule{2-22}
& {\partcirc} {\lp}   
& 84.69 &  86.11 &  65.24 &  75.71  &  \stress{99.40} &  88.55 &  97.01 &  73.31 &  99.24 &  91.15 &  65.79 &  72.37 &  84.05 &  97.26 &  98.13 &  80.72 &  83.02 &  83.02 & 94.16 & 85.20 \\
& {\partcirc} {\nt} 
& \stress{85.90} &  \stress{89.76} &  \textbf{75.61} &  \stress{78.78}  &  99.35 &  \stress{90.69} &  \stress{98.01} &  \stress{78.90} &  \stress{99.76} &  \stress{92.88} &  \stress{88.30} &  73.57 &  79.82 &  97.17 &  98.44 &  \stress{90.86} &  \stress{85.15} &  \stress{83.21} & 94.36 & \stress{88.45} \\
& {\partcirc} {\vp} 
& 85.24 &  87.02 &  67.64 &  76.20  &  99.32 &  89.44 &  97.81 &  77.72 &  99.72 &  91.31 &  85.70 &  \stress{74.33} &  \stress{84.27} &  \stress{97.85} &  \textbf{98.80} &  89.09 &  84.67 &  82.23 & \textbf{95.03} & 87.54 \\
& {\partcirc} {\ours} (ours)
& \textbf{86.74} &  \textbf{90.83} &  \stress{69.41} &  \textbf{79.83} &  \textbf{99.70} &  \textbf{90.96} &  \textbf{98.99} &  \textbf{78.96} &  \textbf{99.84} &  \textbf{93.89} &  \textbf{88.87} &  \textbf{75.44} &  \textbf{86.99} &  \textbf{98.33} &  \stress{98.54} &  \textbf{91.49} &  \textbf{86.80} &  \textbf{84.04} & \stress{94.60} & \textbf{89.17} \\
\bottomrule[1pt]
\end{tabularx}
\vspace*{-1em}
\label{tab: main}
\end{table*}

\textbf{Datasets and models.}
We utilize two commonly used architectures for the source datasets: ResNet-101 from the ResNet family \citep{he2016deep} and ViT-L/16 from the ViT family \citep{dosovitskiy2020image}, both pretrained on ImageNet-1K \citep{russakovsky2015imagenet}. In the target domain, we consider over 20 datasets from transfer learning benchmarks {\fgvc}\,\citep{maji13fine-grained}  and {\vtab} \citep{zhai2019large}. In {\vtab}, we consider both \textit{full-data} and \textit{few-shot} ({\vtabk}) regimes. We also consider other commonly used datasets \citep{chen2023understanding} for transfer learning like {\cifarten}\,\citep{krizhevsky2009learning}, {\ucf}\,\citep{soomro2012ucf101}, {\gtsrb}\,\citep{Houben-IJCNN-2013}, {\food}\,\citep{bossard2014food}, and {\waterbirds}\,\citep{sagawa2019distributionally}. More details can be found in Appx.\,\ref{app: exp_detail}. We cover three types of baselines in transfer learning. \textit{First}, we primarily compare {\ours} to finetuning methods designed for both CNNs and ViTs in transfer learning. These include  {\lp} that only finetunes the classification head, the conventional (input-level) {\vp}\,\citep{bahng2022exploring} and {\nt}\,\citep{basu2023strong}, that tunes \textit{all} the normalization layers. \textit{Second}, we select {\ff} as reference method due to its superior accuracy, which fine-tunes the entire model, albeit being the most computationally expensive. \textit{Third}, we use other 9 SOTA PEFT baselines used in ViTs: {\vpt}\,\citep{jia2022visual}, \textsc{GateVPT}\,\citep{yoo2023improving}, \textsc{E2VPT}\,\citep{han20232vpt}, {\lora}\,\citep{hu2021lora},  {\adapter}\,\citep{chen2022adaptformer}, {\bias}\,\citep{zaken2021bitfit}, {\nt}\,\citep{basu2023strong}, {\scale}\,\citep{basu2023strong}, {\adapterformer}\,\citep{chen2022adaptformer}, and SSF\,\citep{lian2022scaling}.

\textbf{Implementation, training, and evaluations.} 
{\ours} is installed at the input of the third-to-last ResNet block in ResNet-101 and the third Transformer block in ViT-L/16, based on the layer effect. All the methods are trained for 100 epochs using the Cross-Entropy loss with an Adam optimizer\,\citep{KingmaB2015adam}. Hyperparameters, including learning rates, are determined by a grid-search for each method; see in Appx.\,\ref{app: exp_detail}. We compare different methods in terms of performance (test acc.) and efficiency. We evaluate efficiency from $4$ perspectives: (1) tunable parameter number, (2) memory cost, (3) train time per epoch, and (4) inference throughput, as shown in Tab.\,\ref{tab: baseline_overview}.

\subsection{Experiment Results}

\textbf{{\ours} is not only effective but also efficient.} The performance of {\ours} in the full-data regime is shown below.  \textit{Two key observations} can be drawn from results: (1) {\ours} consistently outperforms baselines in most datasets, in particular with a significant improvement over {\vp} (Tab.\,\ref{tab: main}); (2) {\ours} demonstrates remarkable efficiency across various efficiency metrics, proving itself as a cost-effective method (Tab.\,\ref{tab: baseline_overview}).

\underline{\textbf{Tab.\,\ref{tab: main}}} shows the performance of {\ours} vs. baselines: {\vp}, {\nt}, {\lp}, and {\ff}. As we can see, {\ours} consistently outperforms {\vp} in \textit{all} $19$ datasets. {\ours} yields an increase in the average accuracy of $4\%$ and $1.5\%$ compared to {\vp} for both model types. 
In some datasets, such as StanfordCars, SVHN and GTSRB using ResNet-101, this advantage can increase to $7\% $$\sim$$ 9\%$. 
{\ours} also remains effective compared to {\nt}, which has proven to be a strong PEFT method for ViTs in (\citet{basu2023strong}). {\ours} performs the best in 13/15 out of 19 datasets for ResNet-101/ViT-L/16, respectively. {\ff} remains the best-performing in most datasets, but {\ours} still manages to approach or surpass it; see OxfordPets for ResNet-101 and DTD for ViT-L/16. Notably, {\ours} is much more efficient than {\ff}, as shown below.

\begin{table}[htb]
\caption{{A performance overview. The efficiency analysis is on (ViT-L, {\cifarten}) with a batch size of 128, and time consumption is based on using a single RTX-A6000. $\uparrow$ or $\downarrow$ indicate whether a larger or smaller value is favored.}}
\vspace*{-5mm}
\label{tab: baseline_overview}
\begin{center}
\resizebox{\linewidth}{!}{%
\begin{tabular}{c|c|ccc}
\toprule[1pt]
\multirow{3}{*}{\textbf{Method}} 
& \textbf{Param. Efficiency} 
& \multicolumn{3}{c}{\textbf{Train-Time Efficiency}} \\
& \begin{tabular}[c]{@{}c@{}}Parameter \\ \# (M) $\downarrow$ \end{tabular} 
& \begin{tabular}[c]{@{}c@{}}Memory Cost \\(G) $\downarrow$\end{tabular} 
& \begin{tabular}[c]{@{}c@{}}Time Cost \\(s/epoch) $\downarrow$\end{tabular} 
& \begin{tabular}[c]{@{}c@{}}Troughput \\(image/s) $\uparrow$\end{tabular} 
\\
\midrule
\multicolumn{5}{c}{\textbf{ResNet-101}} \\
\midrule
{\ff} & 44.5  & 10.32 & 118 & 41.47   \\
{\lp}  & {0.02}  & 6.2 & 39 & 41.33  \\
\midrule
{\nt}  & {0.13}  & 11.7 & 83 & \textbf{41.45}    \\
{\vp}  & {\textbf{0.12}}  & 12.2 & 72 & 40.59  \\
\rowcolor{Gray}
{\ours}  & {\textbf{0.12}}  & \textbf{6.3} & \textbf{41} & 41.36  \\
\midrule
\multicolumn{5}{c}{\textbf{ViT-L/16}}\\
\midrule
{\ff} & 304.33  & 41.5 & 520 & 79.58  \\
{\lp}  & {0.01}  & 9.7 & 121 & 79.64  \\
\midrule
{\nt}  & \textbf{{0.06}}  & \textbf{29.5} & 285 & \textbf{79.51}    \\
{\vp}  & {0.11}  & 35.9 & 280 & 77.14  \\
\rowcolor{Gray}
{\ours}  & {{0.16}}  & 31.6 & \textbf{262} & \textbf{79.48}  \\
\bottomrule[1pt]
\end{tabular}%
}
\vspace*{-2em}
\end{center}
\end{table}

\underline{\textbf{Tab.\,\ref{tab: baseline_overview}}} shows the efficiency   evaluations. Two key insights can be drawn. \textit{First}, in comparison to {\vp}, {\ours} demonstrates superior efficiency regarding reduced memory overhead, decreased training duration, and increased throughput for both models. This superiority is maintained at a comparable parameter efficiency, marked by a negligible ratio difference of $\leq 0.05\%$. 
This trend is amplified for ResNet-101 by the significant reductions in memory usage ($6.3$G for {\ours} vs. $12.2$G for {\vp}) and training duration ($41$ s/epoch for {\ours} vs. $72$ s/epoch for {\vp}). This arises from the AP's preference towards deeper layers over shallower ones in ResNet-101 and a reduced propagation overhead. \textit{Second}, when compared to {\nt}, although {\ours} consumes slightly higher memory cost for ViT-L/16, it achieves higher training efficiency for both models, because while the parameters of {\nt} are dispersed throughout the network, making back-propagation more expensive. Though no obvious difference is observed in throughput, we will show later in Tab.\,\ref{tab: more_peft} that {\ours} enjoys high throughput compared to other PEFT methods.

\textbf{How does the downstream dataset scale  affect {\ours}?} To study the effect brought by the downstream data scales, we follow the setting of \citet{jia2022visual} and examine the performance of different methods under the few-shot setting on VTAB-$1K$. In particular, for each of the 19 datasets in the VTAB benchmark, only 1000 data samples are available for training. \textbf{Tab.\,\ref{tab: vtab_1k}} shows that {\ours}  makes a distinguishable improvement over the baselines {\vp} and {\nt} in the few-shot setting. As we can see, {\ours} achieves a performance boost of over $1\%$ than {\vp} using ViT-L/16 and this advantage increases to $4.3\%$ in the case of ResNet-101. This demonstrates that directly steering the intermediate features can be more effective when facing data scarcity.

\textbf{{Comparing {\ours} with VPT and more PEFT baselines.}} 
{As VP is introduced as a generalization of the conventional (input-level) AP, we do not anticipate it to outperform all model-based PEFT methods. Yet, to demonstrate its potential,  \textbf{Tab.\,\ref{tab: more_peft}} compares the performance of AP with that of PEFT baselines, in particular with VPT \citep{jia2022visual}.} As we can see, even when compared to the stateful PEFT methods, {\ours} still yields competitive performance in terms of both accuracy and efficiency. For example, {\ours} ranks roughly $2$$\sim$$4$ in terms of accuracy among the 8 PEFT methods considered in this work. In addition, {\ours} ranks the first from the efficiency perspective. In contrast, the best accuracy performance of {\adapterformer} comes at a cost of three times lower throughput efficiency. This is due to that extra modules introduce significantly more computations during the inference.

\setlength{\tabcolsep}{4pt}
\begin{table}[t]
\vspace*{-0.5em}
\caption{\footnotesize{Performance of {\ours} and more SOTA PEFT methods on ViT-L/16. Settings follow Tab.\,\ref{tab: main}.
}}
\vspace*{-0.5em}
\label{tab: more_peft}
\begin{center}
\resizebox{\linewidth}{!}{
\begin{tabular}{
l
ccc  !{\color{lightgray}\vrule}
c ccc cc 
}
\toprule[1pt]

& \multicolumn{3}{c}{\textbf{Accuracy}}
& \multicolumn{4}{c}{\textbf{Efficiency}}

\\
\cmidrule{2-8}
&\multicolumn{3}{c!{\color{lightgray}\vrule}}{\textbf{Full-Data}}
& 
&\multicolumn{3}{c}{\textbf{Train-Time Efficiency}}
\\
&{\small{FGVC}} &{\small{VTAB}} &{\small{Others}}
&{Param. \#} &{\small{Memory}} &{\small{Time}} &{\small{Throughput}} 
\\
\midrule
Number of tasks &5 &9 &5 & -& - & - & -
\\
\midrule 
{\ff} 
& 91.43 & 91.97 & 93.91
& 304.33 &  41.5  &  520 &  79.58
\\
{\lp} & 82.23  & 78.90 & 87.81
& {0.01} &  9.7 & 121  & 79.64
\\

\midrule
\bias 
& 85.32 & 89.84 & 90.41 & {0.29} & 32.9  & 297 & \textbf{79.43} 
\\ 
\lora
& 86.87  & 89.81 & 91.45 &  {1.00}  & 33.1  & 363 & \textbf{79.43}
\\
\vpt
& {{86.34}}  & {89.24} & {90.14} &  {0.25} & {33.7}  & {334}  & {76.35}
\\
\textsc{GateVPT}
& 86.31 & 89.14 & 91.11 & 3.14 & 34.9 & 395 & 61.34
\\
\textsc{E2VPT}
& \textbf{89.93} & 90.12 & 91.45 & 1.21 & 33.4 & 369 & 52.32
\\
\adapter
& 87.06  & 89.44 & 91.21 & {2.17} & 32.4  & 357  & 63.39
\\
\adapterformer
& {89.18}  & \textbf{90.69} & \textbf{92.08} &  {0.65} & 32.3  & 289  & 23.69
\\
{\textsc{SSF}}
& {87.32}  & {89.43} & {92.21} &  {0.48} 
& {34.7}  & {299}  & {\textbf{79.49}} 
\\
\midrule
\rowcolor{Gray}
\ours (Ours)
& 85.30  & 90.25 & 91.09 &  \textbf{{0.16}} & \textbf{31.6}  & \textbf{262}  & \textbf{79.43} 
\\
\bottomrule[1pt]
\end{tabular}
}
\end{center}
\vspace*{-7mm}
\end{table}
\setlength{\tabcolsep}{1.4pt}

\begin{table*}[h]
    \vspace*{-2mm}
\caption{\footnotesize{ 
Performance comparison in the few-shot setting on the VTAB-$1K$ benchmark. Other settings follow Tab.\,\ref{tab: main}.}}
\fontsize{6pt}{6pt}\selectfont
\newcolumntype{C}{>{\centering\arraybackslash}X}
\setlength{\tabcolsep}{2.2pt}
\setlength{\extrarowheight}{7pt}
\renewcommand{\arraystretch}{0.6}
\begin{tabularx}{\linewidth}{p{5pt}p{1.8cm}!{\color{lightgray}\vline}CCCCCCC!{\color{lightgray}\vline}CCCC!{\color{lightgray}\vline}CCCCCCCC!{\color{lightgray}\vline}C}
\toprule[1pt]
& \multicolumn{1}{c}{\underline{\textbf{Benchmark}}}
& \multicolumn{7}{c}{\underline{\textbf{VTAB-Natural}}} 
& \multicolumn{4}{c}{\underline{\textbf{VTAB-Specialized}}} 
& \multicolumn{8}{c}{\underline{\textbf{VTAB-Structured}}} \\
\rotatebox{90}{\raisebox{1.0pt}{\textbf{Architecture}}}
& 
& \rotatebox{90}{\raisebox{1.0pt} {Caltech101}} 
& \rotatebox{90}{\raisebox{1.0pt} {CIFAR-100}}
& \rotatebox{90}{\raisebox{1.0pt} {DTD}}
& \rotatebox{90}{\raisebox{1.0pt} {Flowers102}}
& \rotatebox{90}{\raisebox{1.0pt} {OxfordPets}}
& \rotatebox{90}{\raisebox{1.0pt} {Sun397}}
& \rotatebox{90}{\raisebox{1.0pt} {SVHN}}
& \rotatebox{90}{\raisebox{1.0pt} {Camelyon}}
& \rotatebox{90}{\raisebox{1.0pt} {EuroSAT}}
& \rotatebox{90}{\raisebox{1.0pt} {Resisc45}}
& \rotatebox{90}{\raisebox{1.0pt} {Retinopathy}}
& \rotatebox{90}{\raisebox{1.0pt} {Clevr-Count}}
& \rotatebox{90}{\raisebox{1.0pt} {Clevr-Dist}}
& \rotatebox{90}{\raisebox{1.0pt} {DMLab}}
& \rotatebox{90}{\raisebox{1.0pt} {dSpr-Loc}} 
& \rotatebox{90}{\raisebox{1.0pt} {dSpr-Ori}}
& \rotatebox{90}{\raisebox{1.0pt} {KITTI-Dist}}
& \rotatebox{90}{\raisebox{1.0pt} {sNORB-Azim}} 
& \rotatebox{90}{\raisebox{1.0pt} {sNORB-Elev}}
& \rotatebox{90}{\raisebox{1.0pt} Average} \\
\midrule
\multirow{6}{*}{\rotatebox{90}{\textbf{ResNet-101}}}
& {\fullcirc} {\ff} 
& 89.99 &  45.17 &  63.78 &  84.29 &  89.82 &  41.09 &  67.79 &  84.92 &  74.57 &  91.37 &  74.14 &  58.11 &  60.99 &  43.61 &  67.05 &  40.45 &  78.34 &  33.64 &  36.38 & 64.50 \\
\cmidrule{2-22}
& {\partcirc} {\lp}  
& 83.87 &  39.13 &  53.09 &  70.89 &  85.15 &  28.14 &  43.44 &  78.65 &  69.43 &  90.78 &  69.31 &  35.91 &  36.48 &  35.75 &  34.76 &  19.51 &  65.68 &  16.91 &  23.39 &  51.12\\
& {\partcirc} {\nt} 
& 85.61 &  35.78 &  47.71 &  56.64 &  78.10 &  10.10 &  \textbf{68.67} &  \textbf{83.16} &  61.10 &  90.50 &  \textbf{72.44} &  37.54 &  55.24 &  \textbf{40.04} &  \textbf{60.89} &  20.33 &  65.54 &  24.86 &  25.96 & 53.70 \\
& {\partcirc} {\vp} 
& 84.73 &  \textbf{43.01} &  57.55 &  76.91 &  87.03 &  28.75 &  55.47 &  75.15 &  70.27 &  89.26 &  69.08 &  36.70 &  54.24 &  34.48 &  42.41 &  20.32 &  63.71 &  17.93 &  26.93 & 54.42 \\
& {\partcirc} {\ours} 
& \textbf{87.49} &  39.80 &  \textbf{63.62} &  \textbf{81.44} &  \textbf{88.74} &  \textbf{34.83} &  65.92 &  78.91 &  \textbf{74.19} &  \textbf{91.44} &  71.18 &  \textbf{40.20} &  \textbf{55.26} &  38.95 &  54.68 &  \textbf{21.98} &  \textbf{72.86} &  \textbf{26.24} &  \textbf{28.77} &  \textbf{58.76}\\

\midrule
\multirow{5}{*}{\rotatebox{90}{\textbf{ViT-L/16}}}
& {\fullcirc} {\ff} 
& 93.34 &  76.03 &  75.74 &  99.88 &  93.72 &  59.06 &  68.70 & 86.70 &  82.84 &  93.54 &  82.22 & 55.42 &  60.33 &  48.23 &  83.62 &  52.77 &  78.06 &  30.40 &  29.95 & 71.08 \\
\cmidrule{2-22}
& {\partcirc} {\lp}   
& 89.37 &  62.98 &  70.02 &  93.42 &  91.22 &  53.68 &  45.28 &  	80.52 &  80.34 &  91.64 &  70.43 &  		38.15 &  35.26 &  40.74 &  21.84 &  29.42 &  62.54 &  14.59 &  23.09 &  57.60\\
& {\partcirc} {\nt} 
& 91.10 &  \textbf{65.20} &  72.36 &  98.64 &  91.38 &  55.14 &  47.21 &  	\textbf{82.50} &  82.34 &  \textbf{93.94} &  71.74 &  		42.83 &  44.59 &  \textbf{41.21} &  35.64 &  \textbf{32.08} &  63.43 &  16.52 &  24.12 &  60.68 \\
& {\partcirc} {\vp} 
& 90.06 &  63.16 &  71.59 &  95.35 &  91.20 &  54.45 &  46.26 &  81.82 &  81.45 &  92.25 &  71.03 &  		41.03 &  \textbf{45.49} &  39.94 &  32.52 &  30.29 &  62.68 &  15.59 &  23.13 &  59.96 \\
& {\partcirc} {\ours}
& \textbf{91.40} &  64.40 &  \textbf{72.61} &  \textbf{99.50} &  \textbf{91.46} &  \textbf{56.67} &  \textbf{49.43} &  	81.41 &  \textbf{82.76} &  93.14 &  \textbf{71.99} &  		\textbf{43.26} &  38.09 &  40.57 &  \textbf{42.44} &  31.83 &  \textbf{65.40} &  \textbf{18.29} &  \textbf{25.96} & 	\textbf{61.06} \\
\bottomrule[1pt]
\end{tabularx}
\label{tab: vtab_1k}
\end{table*}

\textbf{{Applying {\ours} to various model architectures.}} To ensure that our conclusions generalize well, we shift our focus from the vision source model to the vision-language model, specific to CLIP \citep{radford2021learning}, and the multi-scale transformer structure, \textit{i.e.,} Swin-Transformer \citep{liu2021swin}, which have both received increasing attention in the area of VP \citep{bahng2022visual}. Our experiments demonstrate that the proposed idea of {\ours} works well even on steering a pretrained CLIP model and Swin-Transformer without changing its parameters. In Fig.\,\ref{fig: layer_effect_clip} and Tab.\,\ref{tab: clip_result}, we demonstrate that our main conclusions about {\ours} still holds for these two architectures well on various datasets. Specifically, in \underline{Fig.\,\ref{fig: layer_effect_clip}}, we show that the layer effect of {\ours} still exists. As both CLIP and Swin-Transformer uses a ViT as its backbone, the observed layer effect mimics that of a ViT-Large/16 as observed before. Specifically, {\ours} prefers to be installed on shallow layers to deep ones in order to obtain the best performance. In \underline{Tab.\,\ref{tab: clip_result}}, we demonstrate that in various datasets, {\ours} can significantly outperform {\vp} by $1\% \sim 6\%$. These experiments demonstrate the applicability of {\ours} on various model types.

\textbf{Additional experiments.} We provide abundant additional experiment results in Appx.\,\ref{app: additional_results} for a comprehensive discussion on the AP design and comparison with baselines. We justified the layer effects more (dataset, model architecture) combinations in Fig.\,\ref{fig: layer_effect_complete} similar to Fig.\,\ref{fig: layer_effect}. We also studied various variants of AP, including AP with different prompt types in Tab.\,\ref{tab: more_prompt}, and AP installed in multiple layers in Tab.\,\ref{tab: ap_multiple_layers}. A detailed comparison between {\ours} and other PEFT methods in various experimental settings is also provided, including VPT \citep{jia2022visual} (Tab.\,\ref{tab: vpt_setting}, Tab.\,\ref{tab: vpt_ablation}, and Fig.\,\ref{fig: vpt_deep}), LoRA\,\citep{hu2021lora} (Tab.\,\ref{tab: lora_ablation}), and SST\,\citep{lian2022scaling} (Tab.\,\ref{tab: ssf}).

\textbf{Limitations and discussions.} 
We acknowledge a potential limitation of {\ours} in its implicit reliance on the size of the pretrained model as a factor for achieving superior accuracy.
For compact models like ResNet-18 and ViT-Tiny, while {\ours} enhances the performance of {\vp}, it does not outperform {\nt}. This observation suggests that {\ours} may primarily utilize downstream data to guide or ``direct'' the existing learned knowledge obtained during pretraining, rather than actively acquiring new knowledge. However, we believe that this limitation does not prevent {\ours} from future applications to larger foundational vision models.

\section{Conclusion}
In this paper, we delve into AP (activation prompt) as a means to enhance the conventional input-level VP. We unveil that extending VP to {\ours} yields improved empirical performance and establishes a connection with normalization tuning. Additionally, we investigate the layer preference of {\ours} on CNNs and ViTs both empirically and theoretically. Our experiments demonstrate the superiority of {\ours} over VP, highlighting its efficiency advantages, and showcasing comparable performance to the state-of-the-art PEFT methods.

\section{Acknowledgement}
This work was supported by National Science Foundation (NSF) \#2430223, Army Research Office (ARO) W911NF-25-1-0020, and the Rensselaer-IBM Future of Computing Research Collaboration (http://airc.rpi.edu). The work of Shuai Zhang was supported by National Science Foundation (NSF) \#2349879. We also thank all anonymous reviewers for their constructive comments.




\clearpage

\section*{Checklist}
\begin{enumerate}

  \item For all models and algorithms presented, check if you include:
  \begin{enumerate}
    \item A clear description of the mathematical setting, assumptions, algorithm, and/or model. [Yes] All the settings have been clearly described in Sec.\,\ref{sec: experiments} and Appx.\,\ref{app: exp_detail}.
    \item An analysis of the properties and complexity (time, space, sample size) of any algorithm. [Yes] The efficiency analysis can be found in Sec.\,\ref{sec: experiments}.
    \item (Optional) Anonymized source code, with specification of all dependencies, including external libraries. [Yes] Anonymized source code with dependencies is submitted to the supplementary material.
  \end{enumerate}

  \item For any theoretical claim, check if you include:
  \begin{enumerate}
    \item Statements of the full set of assumptions of all theoretical results. [Yes] All the assumptions are stated in Sec.\,\ref{sec: theory}.
    \item Complete proofs of all theoretical results. [Yes] The proofs are shown in Sec.\,\ref{app: theory}.
    \item Clear explanations of any assumptions. [Yes] All the assumptions are stated in Sec.\,\ref{sec: theory}.
  \end{enumerate}

  \item For all figures and tables that present empirical results, check if you include:
  \begin{enumerate}
    \item The code, data, and instructions needed to reproduce the main experimental results (either in the supplemental material or as a URL). [Yes]
    \item All the training details (e.g., data splits, hyperparameters, how they were chosen). [Yes]
    \item A clear definition of the specific measure or statistics and error bars (e.g., with respect to the random seed after running experiments multiple times). [Yes]
    \item A description of the computing infrastructure used. (e.g., type of GPUs, internal cluster, or cloud provider). [Yes]
  \end{enumerate}

  \item If you are using existing assets (e.g., code, data, models) or curating/releasing new assets, check if you include:
  \begin{enumerate}
    \item Citations of the creator If your work uses existing assets. [Yes]
    \item The license information of the assets, if applicable. [Yes]
    \item New assets either in the supplemental material or as a URL, if applicable. [Not Applicable]
    \item Information about consent from data providers/curators. [Not Applicable]
    \item Discussion of sensible content if applicable, e.g., personally identifiable information or offensive content. [Not Applicable]
  \end{enumerate}

  \item If you used crowdsourcing or conducted research with human subjects, check if you include:
  \begin{enumerate}
    \item The full text of instructions given to participants and screenshots. [Not Applicable]
    \item Descriptions of potential participant risks, with links to Institutional Review Board (IRB) approvals if applicable. [Not Applicable]
    \item The estimated hourly wage paid to participants and the total amount spent on participant compensation. [Not Applicable]
  \end{enumerate}

\end{enumerate}

\clearpage

\newpage
\onecolumn
\section*{\Large{Appendix}}
\setcounter{section}{0}
\setcounter{figure}{0}
\setcounter{table}{0}
\makeatletter 
\renewcommand{\thesection}{\Alph{section}}
\renewcommand{\theHsection}{\Alph{section}}
\renewcommand{\thefigure}{A\arabic{figure}} 
\renewcommand{\theHfigure}{A\arabic{figure}} 
\renewcommand{\thetable}{A\arabic{table}}
\renewcommand{\theHtable}{A\arabic{table}}
\makeatother

\renewcommand{\thetable}{A\arabic{table}}
\setcounter{mylemma}{0}
\renewcommand{\themylemma}{A\arabic{mylemma}}
\setcounter{equation}{0}
\renewcommand{\theequation}{A\arabic{equation}}

\appendix
\label{sec: appendix}

\section{Experiment Setting Details}
\label{app: exp_detail}

\paragraph{Datasets.}
We consider 29 downstream image classification tasks in the target domain across various domains. We show each dataset's attributes in Tab.\,\ref{tab: dataset}.

\begin{table}[htb]
\centering

\resizebox{0.7\linewidth}{!}{%
\begin{tabular}{l|c|c|c|c|c}
\toprule[1pt]
\midrule
\textbf{Dataset} & \textbf{Train Size} & \textbf{Test Size} & \textbf{Class Number} & \textbf{Batch Size} & \textbf{Reference }\\
\midrule
\multicolumn{6}{c}{\textbf{Full-Data Setting}} \\
\midrule
Flowers102
& 4093
& 2463
& 102
& 128
& \citep{nilsback2008automated}
\\
DTD
& 2820
& 1692
& 47
& 128
& \citep{cimpoi2014describing}
\\
UCF101
& 7639
& 3783
& 101
& 128
& \citep{soomro2012ucf101}
\\
Food101
& 50500
& 30300
& 101
& 128
& \citep{bossard2014food}
\\
SVHN
& 73257
& 26032
& 10
& 128
& \citep{netzer2011reading}
\\
GTSRB
& 39209
& 12630
& 43
& 128
& \citep{Houben-IJCNN-2013}
\\
EuroSAT
& 13500
& 8100
& 10
& 128
& \citep{helber2019eurosat}
\\
OxfordPets
& 2944
& 3669
& 37
& 128
& \citep{parkhi2012cats}
\\
StanfordCars
& 6509
& 8041
& 196
& 128
& \citep{KrauseStarkDengFei-Fei_3DRR2013}
\\
SUN397
& 15888
& 19850
& 397
& 128
& \citep{xiao2010sun}
\\
CIFAR10
& 50000
& 10000
& 10
& 128
& \citep{krizhevsky2009learning}
\\
CIFAR100
& 50000
& 10000
& 100
& 128
& \citep{krizhevsky2009learning}
\\
CUB-200-2011
& 5394
& 5794
& 200
& 128
& \citep{WahCUB_200_2011}
\\
NA-Birds
& 21536
& 24633
& 55
& 128
& \citep{van2015nabirds}
\\
StanfordDog
& 10800
& 8580
& 120
& 128
& \citep{Khosla_FGVC2011dogs}
\\
OxfordFlowers
& 1020
& 6149
& 102
& 128
& \citep{nilsback2008automated}
\\
Waterbirds
& 4795
& 5794
& 2
& 128
& \citep{sagawa2019distributionally} \\
Caltech101
& 4128
& 2465
& 102
& 128
& \citep{li2006one}
\\
Camelyon
& 262144
& 32768
& 2
& 128
& \citep{veeling2018rotation}
\\
\midrule
\multicolumn{6}{c}{\textbf{Few-Shot Setting (VTab-1k)}}\\
\midrule
CIFAR-100
& 1000
& 10000
& 100
& 128
& \citep{krizhevsky2009learning}
\\
Caltech101
& 1000
& 6084
& 102
& 128
& \citep{li2006one}
\\
DTD
& 1000
& 47
& 1880
& 128
& \citep{cimpoi2014describing}
\\
Flowers102
& 1000
& 6149
& 102
& 128
& \citep{nilsback2008automated}
\\
OxfordPets
& 1000
& 3669
& 37
& 128
& \citep{parkhi2012cats}
\\
SVHN
& 1000
& 26032
& 10
& 128
& \citep{netzer2011reading}
\\
Sun397
& 1000
& 21750
& 397
& 128
& \citep{xiao2010sun}
\\
Patch Camelyon
& 1000
& 32768
& 2
& 128
& \citep{veeling2018rotation}
\\
EuroSAT
& 1000
& 5400
& 10
& 128
& \citep{helber2019eurosat}
\\
Resisc45
& 1000
& 6300
& 45
& 128
& \citep{cheng2017remote}
\\
Retinopathy
& 1000
& 42670
& 5
& 128
& \citep{kaggle-diabetic-retinopathy}
\\
Clevr/count
& 1000
& 15000
& 8
& 128
& \citep{johnson2017clevr}
\\
Clevr/distance
& 1000
& 15000
& 6
& 128
& \citep{johnson2017clevr}
\\
DMLab
& 1000
& 22735
& 6
& 128
& \citep{beattie2016deepmind}
\\
KITTI/distance 
& 1000
& 711
& 4
& 128
& \citep{Geiger2013IJRR}
\\
dSprites/location
& 1000
& 73728
& 16
& 128
& \citep{dsprites17}
\\
dSprites/orientation
& 1000
& 73728
& 16
& 128
& \citep{dsprites17}
\\
SmallNORB/azimuth
& 1000
& 12150
& 18
& 128
& \citep{lecun2004learning}
\\
SmallNORB/elevation
& 1000
& 12150
& 9
& 128
& \citep{lecun2004learning}
\\
\midrule
\bottomrule[1pt]
\end{tabular}%
}
\caption{\footnotesize{Dataset attributes and training configs through 29 target image-classification datasets.}}
\label{tab: dataset}
\end{table}

\paragraph{Implementation details.} As we stated in the main manuscript, we, by default, install {\ours} to the input of the thrid-to-last ResNet block and the third Transformer block in ViT-Large/16. For LoRA\,\citep{hu2021lora}, we use the rank $r=10$ by default. {For VPT\,\citep{jia2022visual}, we use a prompt length of $10$.} We train all the methods for 1000 epochs using an Adam optimizer. For {\ours}, we adopt a learning rate of $0.001$ for ResNet family and $0.01$ for ViT family without weight decay. For baselines, we adopt the learning rate suggested in the papers or official code repositories. In order to align with the settings of the most parameter efficient fine-tuning methods, for all the prompting-based methods we also tune the classification head as {\lp} throughout this work.

\newpage
\section{Additional Experiment Results}
\label{app: additional_results}

\begin{figure}[htb]
    \centering
    \begin{tabular}{cc}
       \includegraphics[width=.3\linewidth]{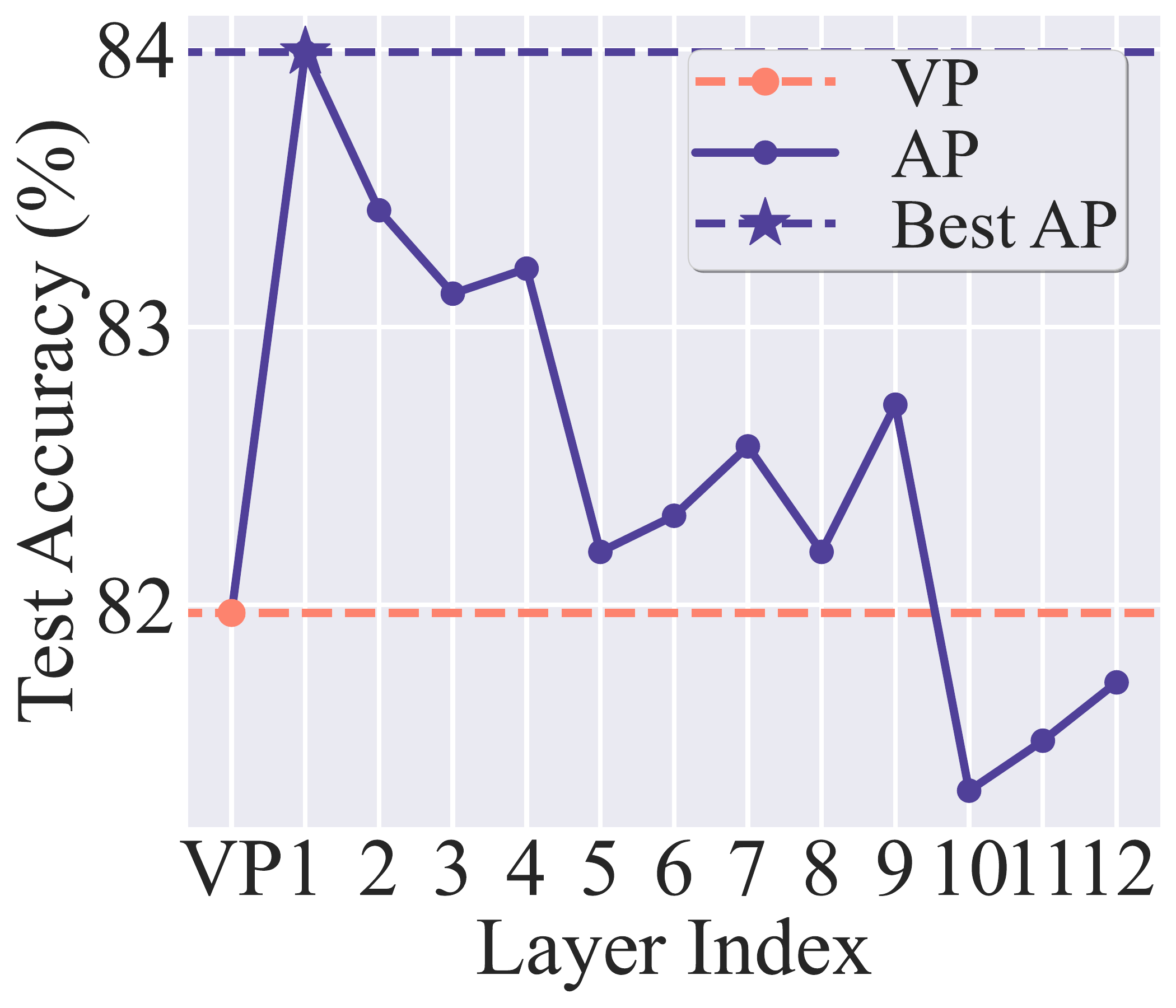}  
       & \includegraphics[width=.3\linewidth]{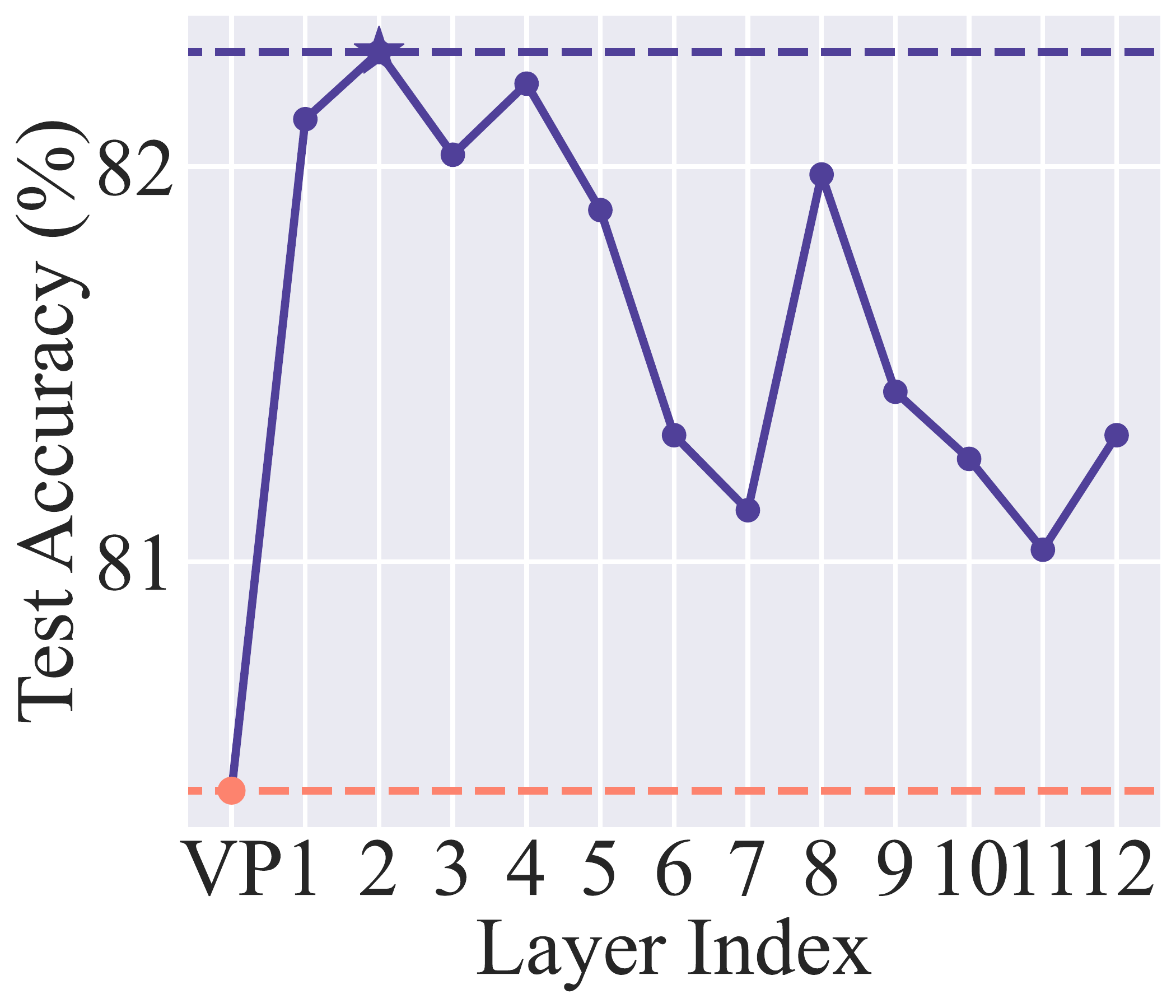}  
    \end{tabular}
    \caption{\footnotesize The layer effect of {\ours} applied to a (Left) CLIP model and {(Right) Swin-Transformer} on the OxfordPets dataset. }
    \label{fig: layer_effect_clip}
\end{figure}

\textbf{{Applying {\ours} to various model architectures.}} To ensure that our conclusions generalize well, we shift our focus from the vision source model to the vision-language model, specific to CLIP \citep{radford2021learning}, and the multi-scale transformer structure, \textit{i.e.,} Swin-Transformer \citep{liu2021swin}, which have both received increasing attention in the area of VP \citep{bahng2022visual}. Our experiments demonstrate that the proposed idea of {\ours} works well even on steering a pretrained CLIP model and Swin-Transformer without changing its parameters. In Fig.\,\ref{fig: layer_effect_clip} and Tab.\,\ref{tab: clip_result}, we demonstrate that our main conclusions about {\ours} still holds for these two architectures well on various datasets. Specifically, in \underline{Fig.\,\ref{fig: layer_effect_clip}}, we show that the layer effect of {\ours} still exists. As both CLIP and Swin-Transformer uses a ViT as its backbone, the observed layer effect mimics that of a ViT-Large/16 as observed before. Specifically, {\ours} prefers to be installed on shallow layers to deep ones in order to obtain the best performance. In \underline{Tab.\,\ref{tab: clip_result}}, we demonstrate that in various datasets, {\ours} can significantly outperform {\vp} by $1\% \sim 6\%$. These experiments demonstrate the applicability of {\ours} on various model types.

\begin{table}[htb]
\centering
\caption{\footnotesize{Performance comparison of {\vp} and the proposed {\ours} on CLIP and {Swin-Transformer} model with different datasets. CLIP with ViT-B/32 and Swin-B with 12 Swin-Transformer blocks pretrained on ImageNet are tested. Other settings follows Tab.\,\ref{tab: main}.}}
\label{tab: clip_result}
\resizebox{0.6\linewidth}{!}{%
\begin{tabular}{c|ccccccc}
\toprule[1pt]
\midrule
Dataset & OxfordPets & DTD & EuroSAT & Flowers102 & UCF101 & Food101 & Waterbirds \\
\midrule
\multicolumn{8}{c}{CLIP}
\\
\midrule
{\vp} & 81.97 & 64.43 & 95.54 & 83.74 & 70.42 & 79.61 & 72.42 \\
{\ours} (Ours) & 83.82 & 69.42 & 96.43 & 85.52 & 76.42 & 82.43 & 79.32 \\
\midrule
\multicolumn{8}{c}{{Swin-Transformer}} \\
\midrule
{\vp} 
& 80.42 & 65.39 & 97.23 & 84.48 & 74.41 & 75.72 & 75.22 \\
{\ours} (Ours) 
& 82.29 & 69.13 & 96.45 & 84.98 & 75.92 & 81.38 & 78.99 \\
\midrule
\bottomrule[1pt]
\end{tabular}}
\end{table}

\textbf{Layer effect study on more datasets.} In Fig.\,\ref{fig: layer_effect_complete}, we demonstrate that the layer effects of {\ours} demonstrated in Sec.\,\ref{sec: ap_layer} is general and apply to multiple datasets.

\begin{figure}[htb]
\centering
\centerline{
\begin{tabular}{cccc}
\hspace*{0mm}\includegraphics[width=.25\textwidth,height=!]{figures/layer_effect/ap_layer_oxfordpets_rn50.pdf} 
&\hspace*{-2mm}\includegraphics[width=.25\textwidth,height=!]{figures/layer_effect/ap_layer_oxfordpets_rn101.pdf}
&\hspace*{-2mm}\includegraphics[width=.25\textwidth,height=!]{figures/layer_effect/ap_layer_oxfordpets_vit_b.pdf}
&\hspace*{-2mm}\includegraphics[width=.25\textwidth,height=!]{figures/layer_effect/ap_layer_oxfordpets_vit_l.pdf}\\
\includegraphics[width=.25\textwidth,height=!]{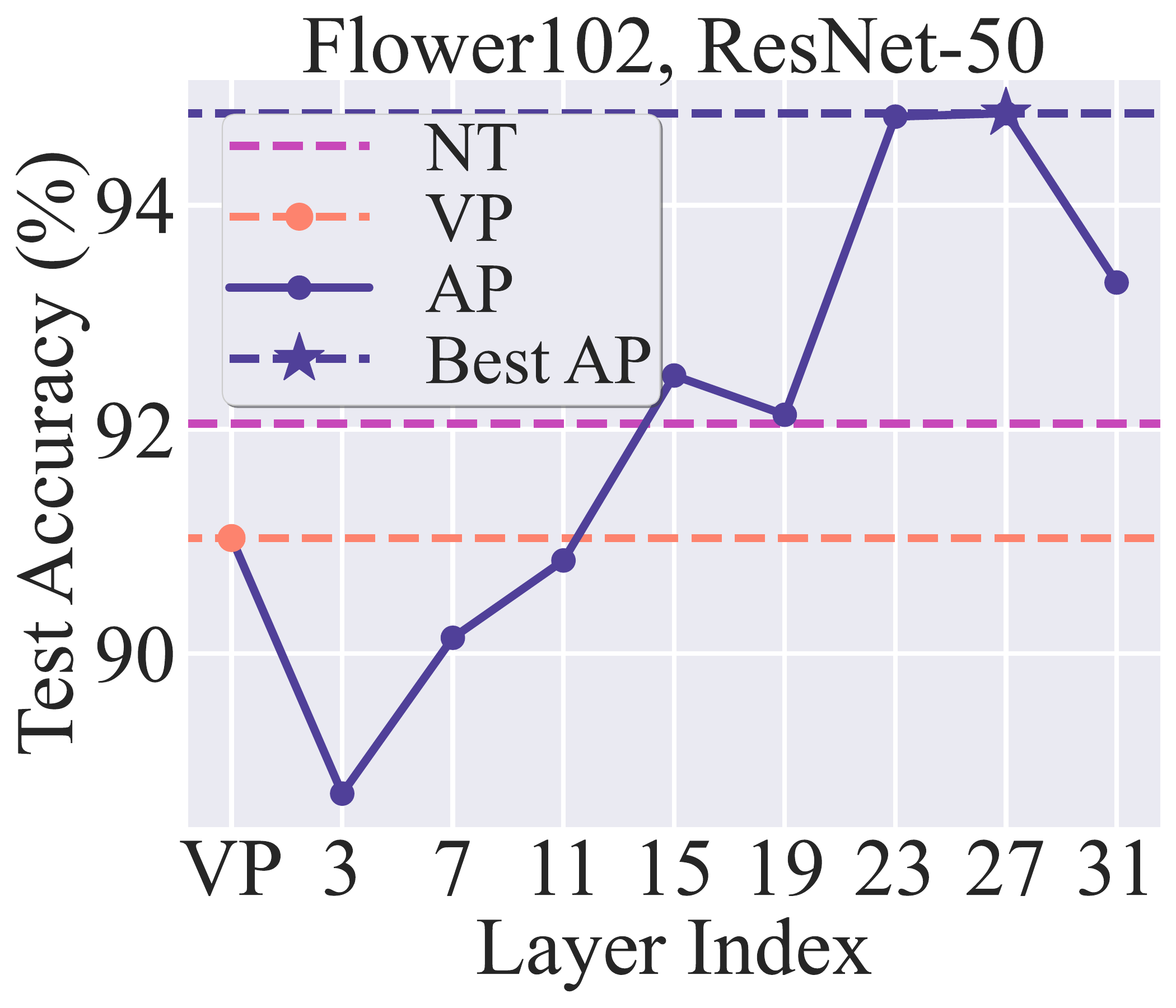} 
&\hspace*{-2mm}\includegraphics[width=.25\textwidth,height=!]{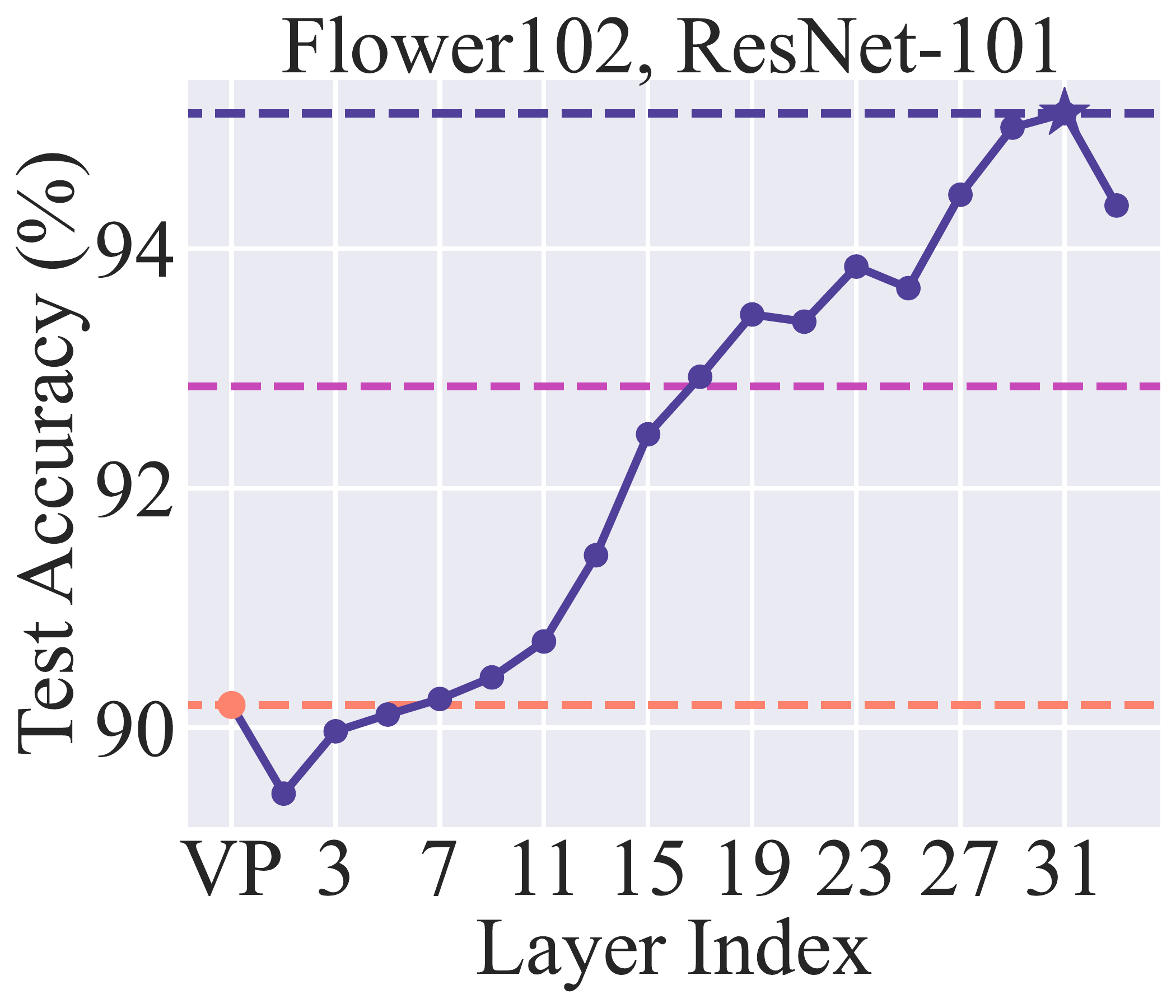}
&\hspace*{-2mm}\includegraphics[width=.25\textwidth,height=!]{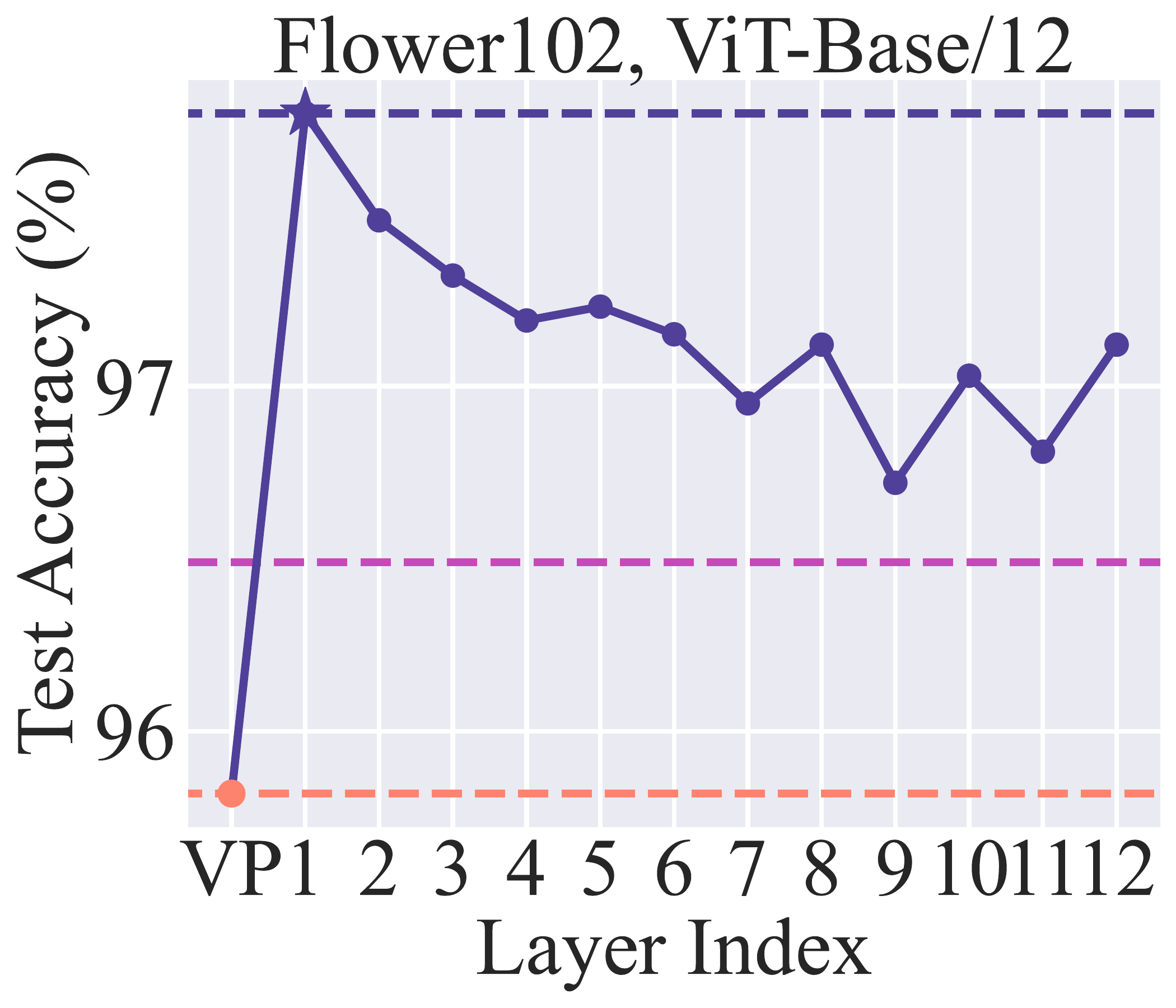}
&\hspace*{-2mm}\includegraphics[width=.25\textwidth,height=!]{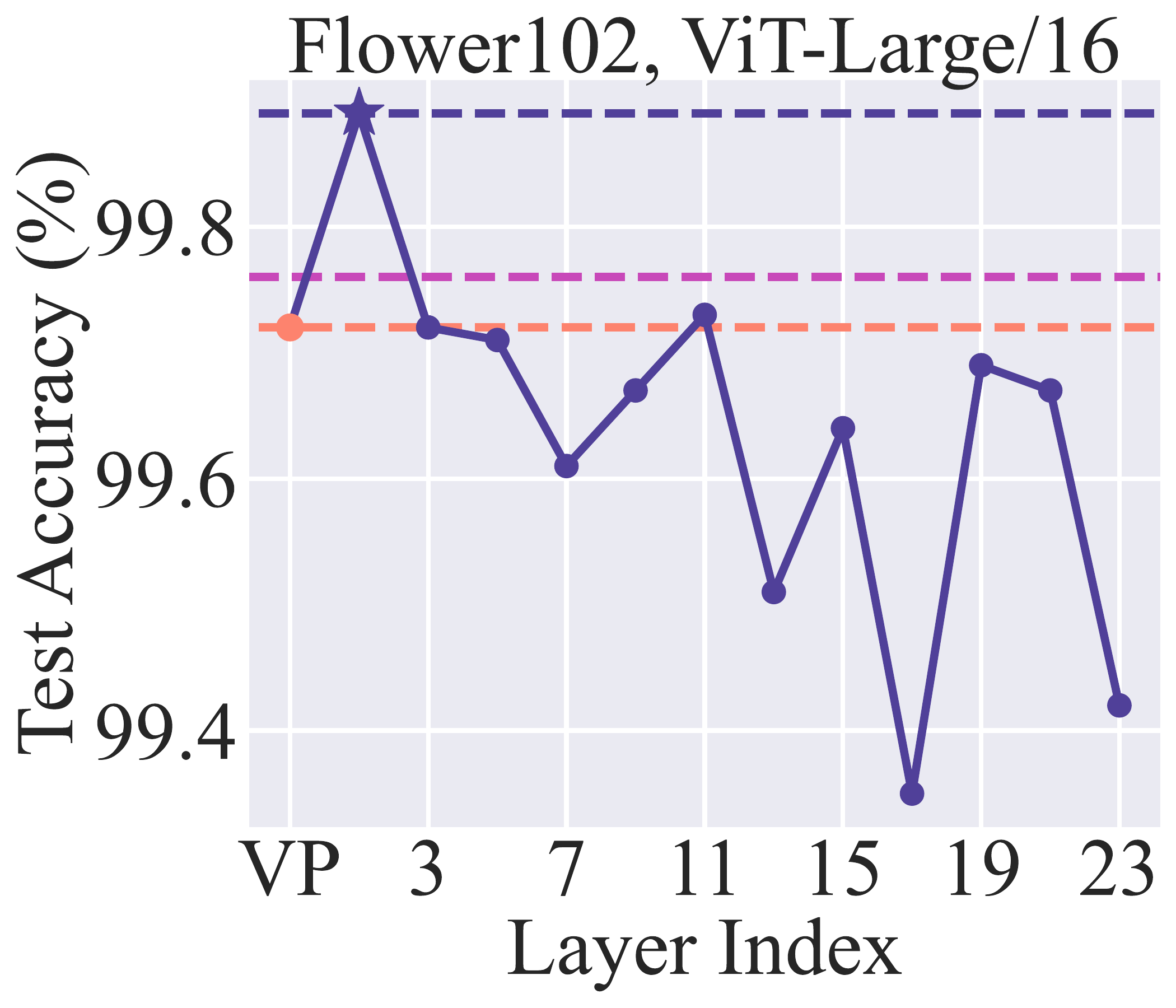}\\
\includegraphics[width=.25\textwidth,height=!]{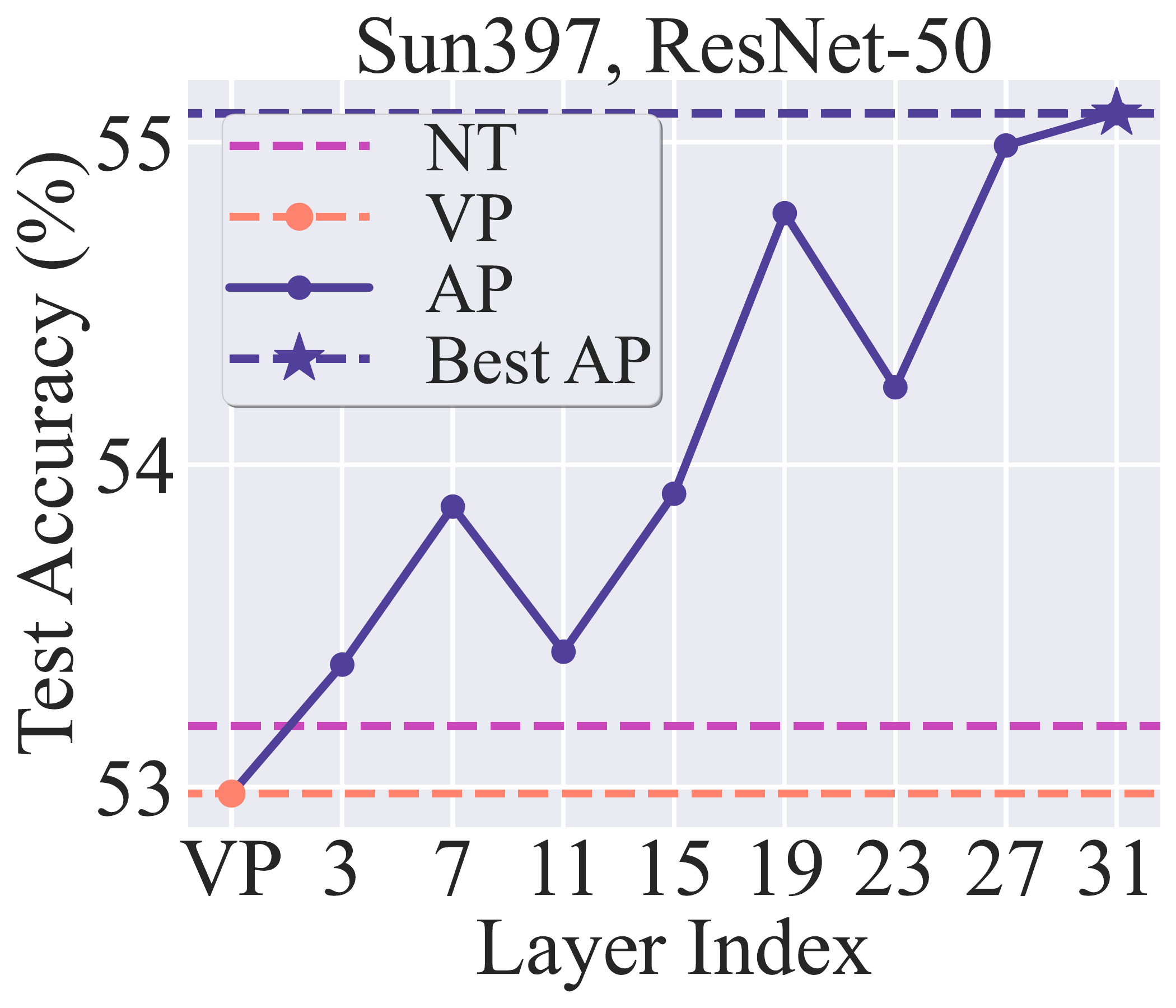} 
&\hspace*{-2mm}\includegraphics[width=.25\textwidth,height=!]{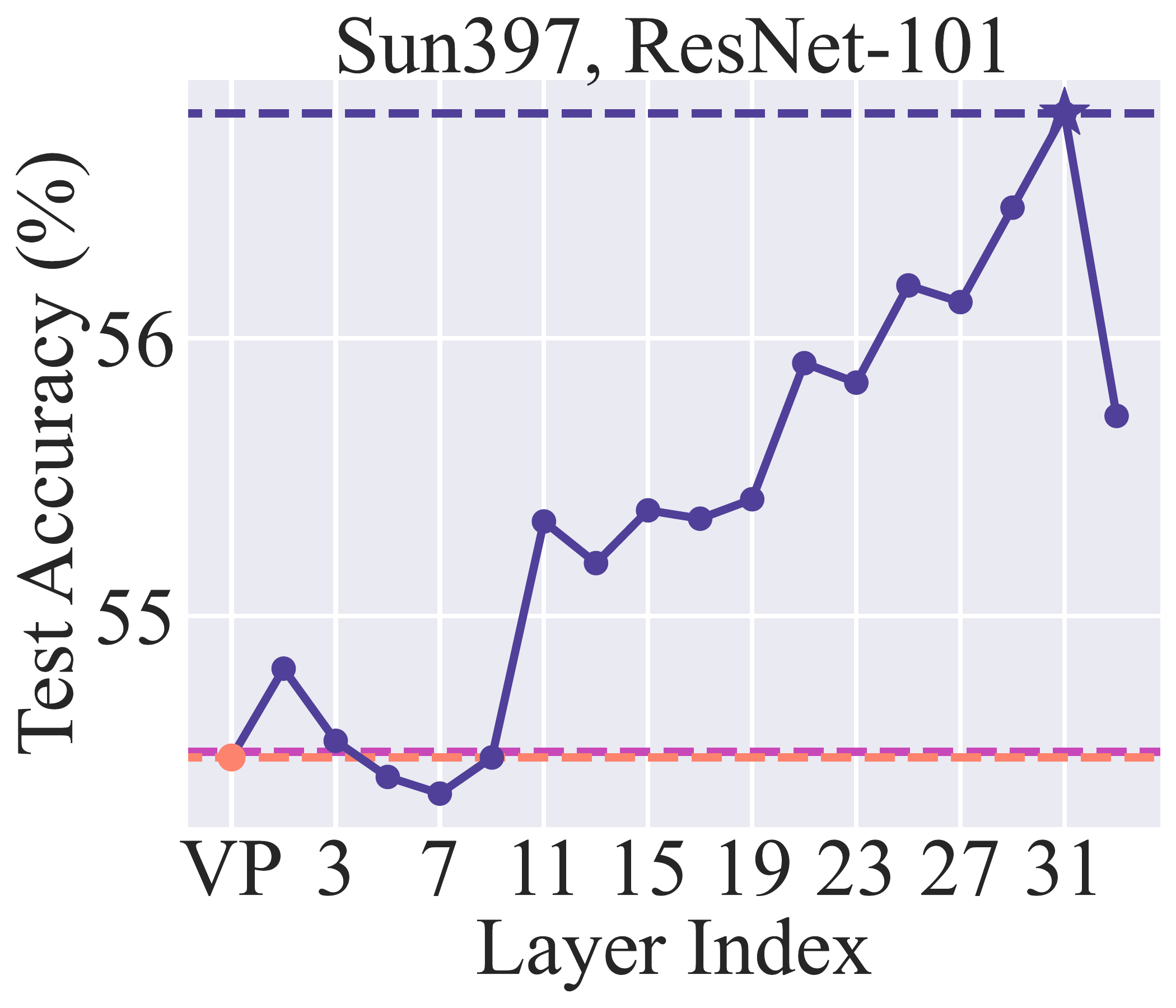}
&\hspace*{-2mm}\includegraphics[width=.25\textwidth,height=!]{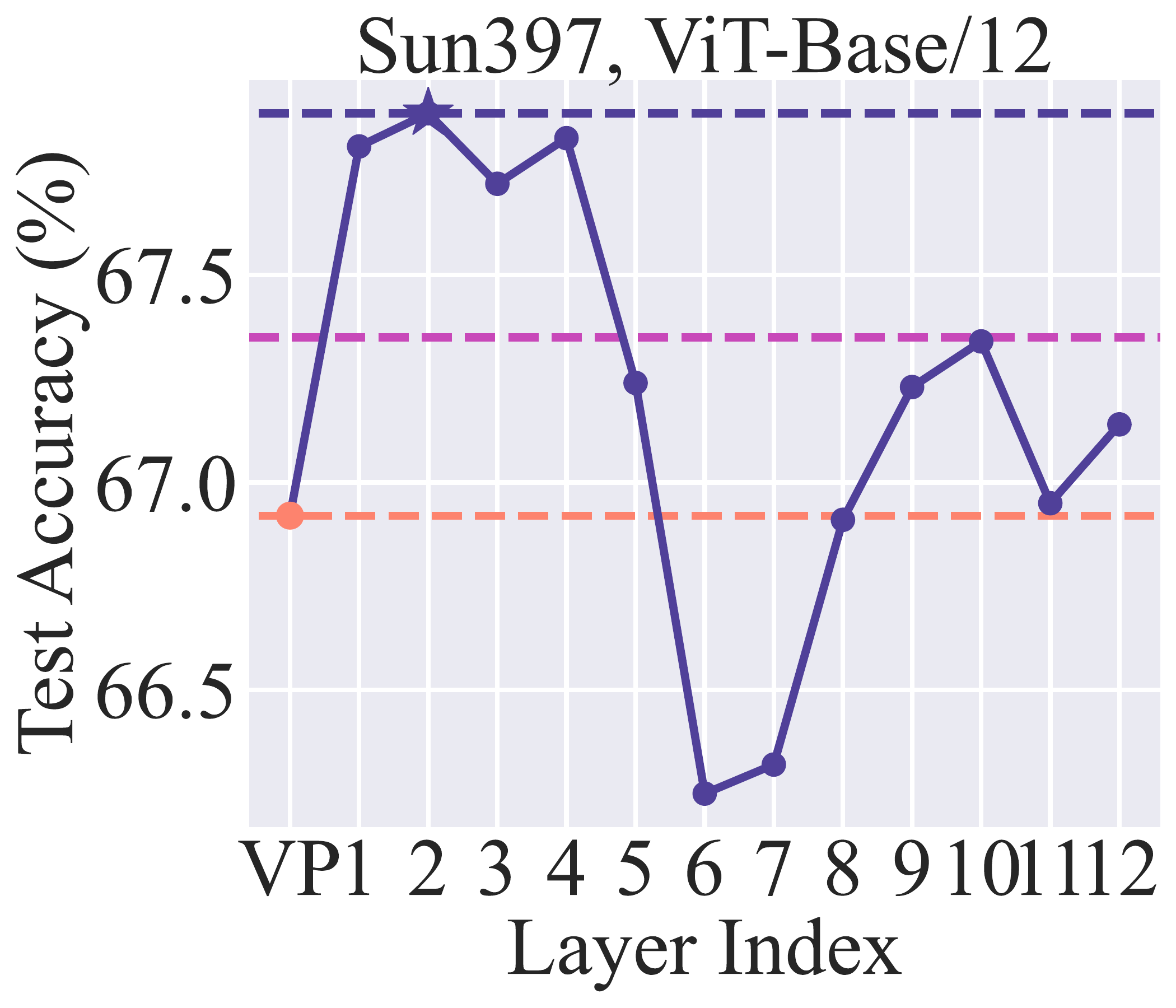}
&\hspace*{-2mm}\includegraphics[width=.25\textwidth,height=!]{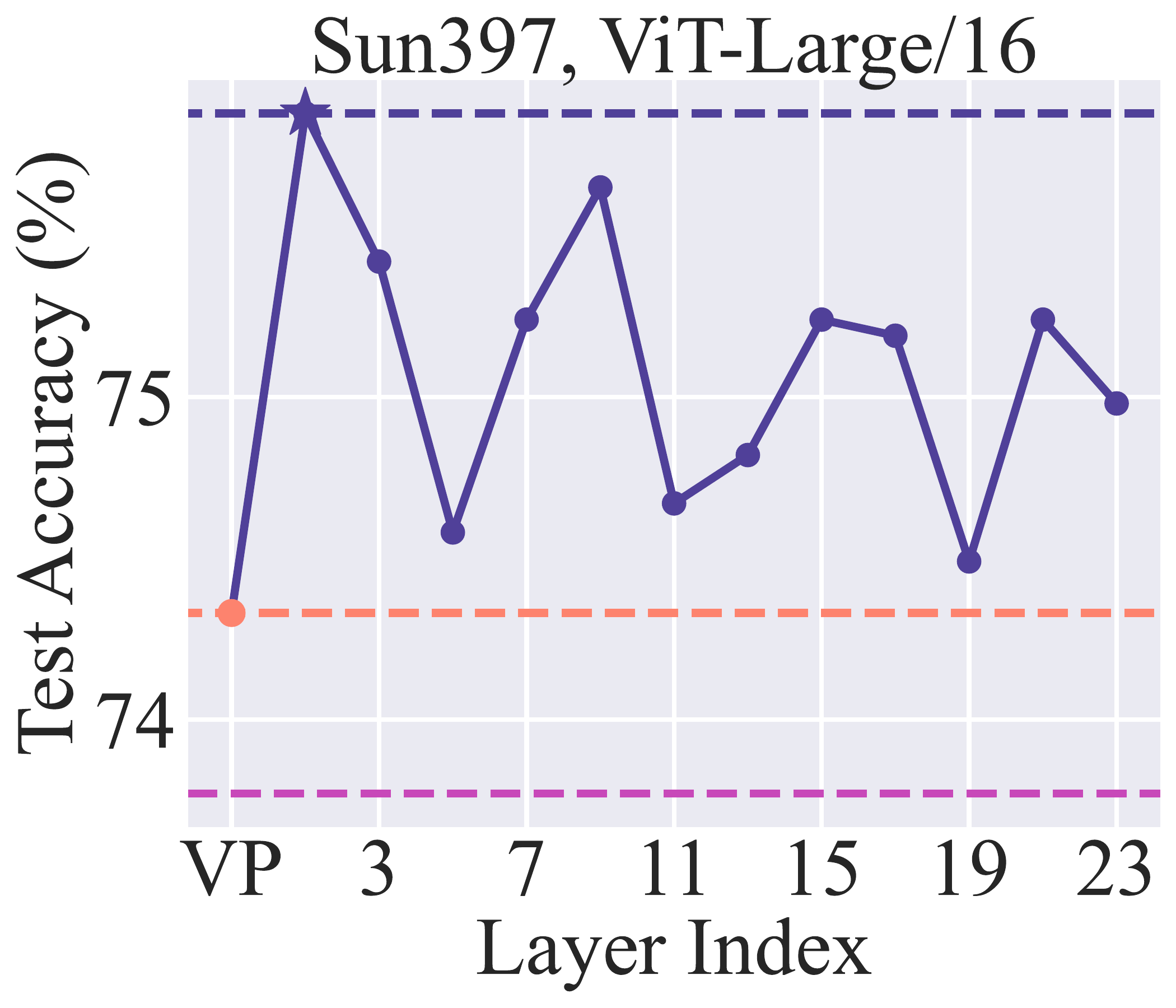}\\
\includegraphics[width=.25\textwidth,height=!]{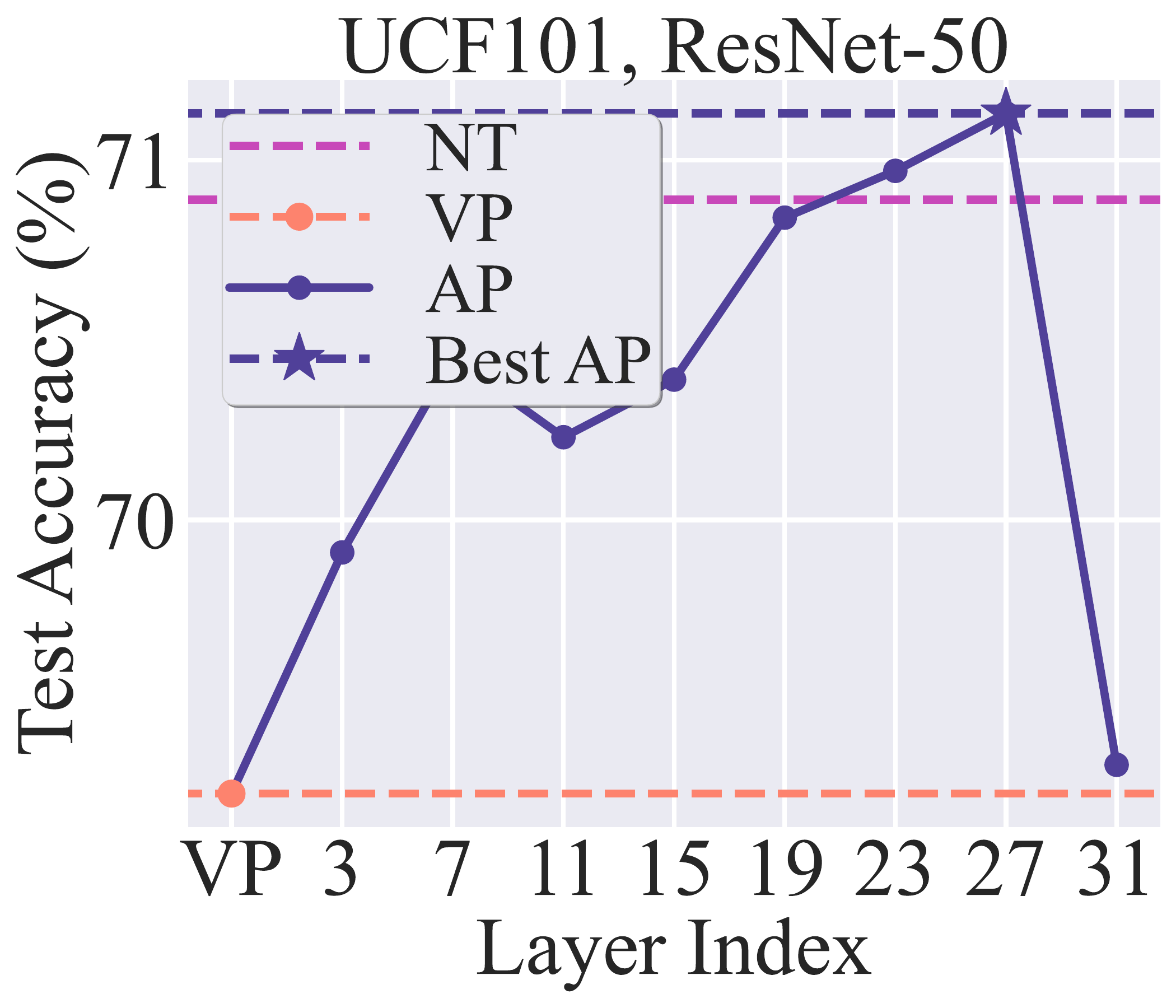} 
&\hspace*{-2mm}\includegraphics[width=.25\textwidth,height=!]{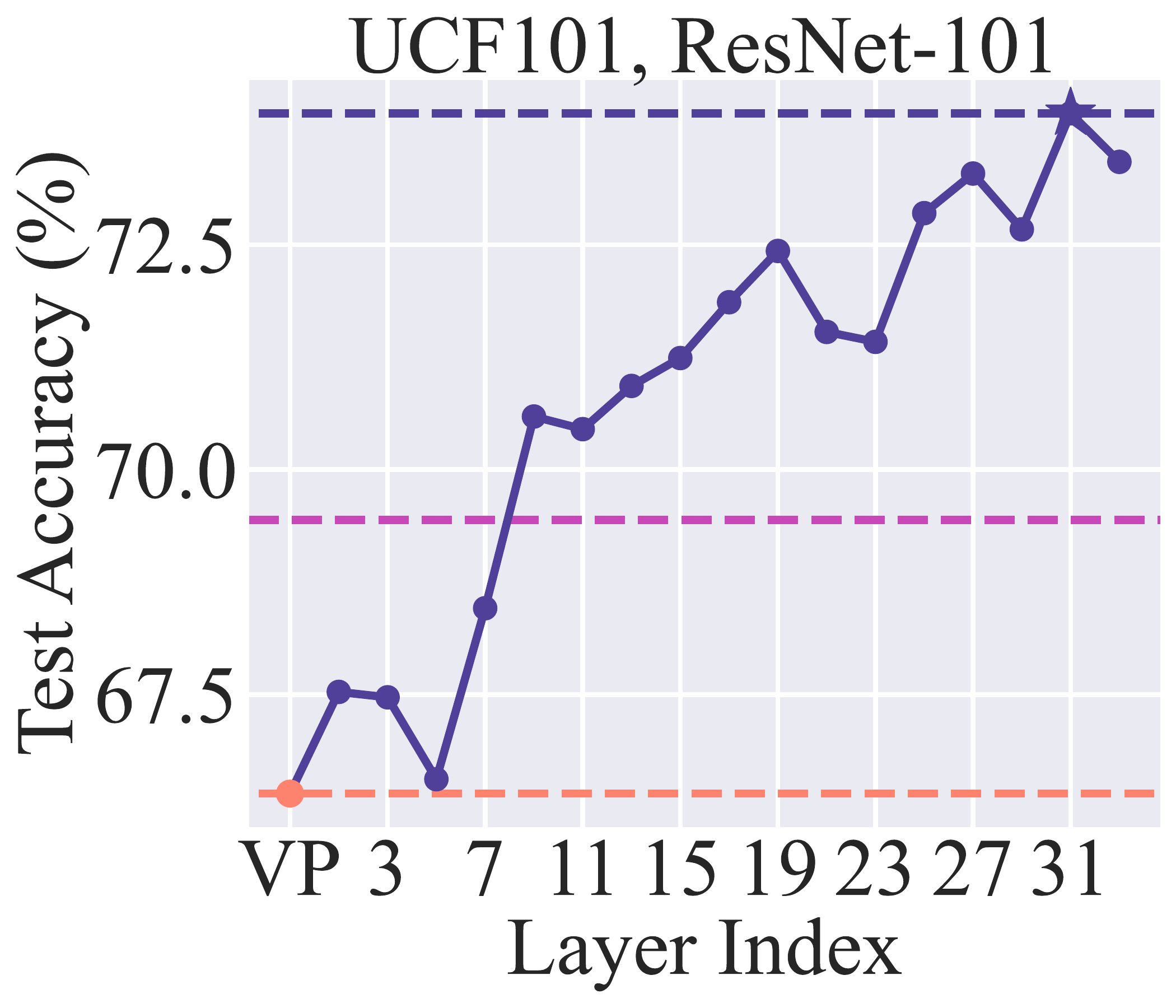}
&\hspace*{-2mm}\includegraphics[width=.25\textwidth,height=!]{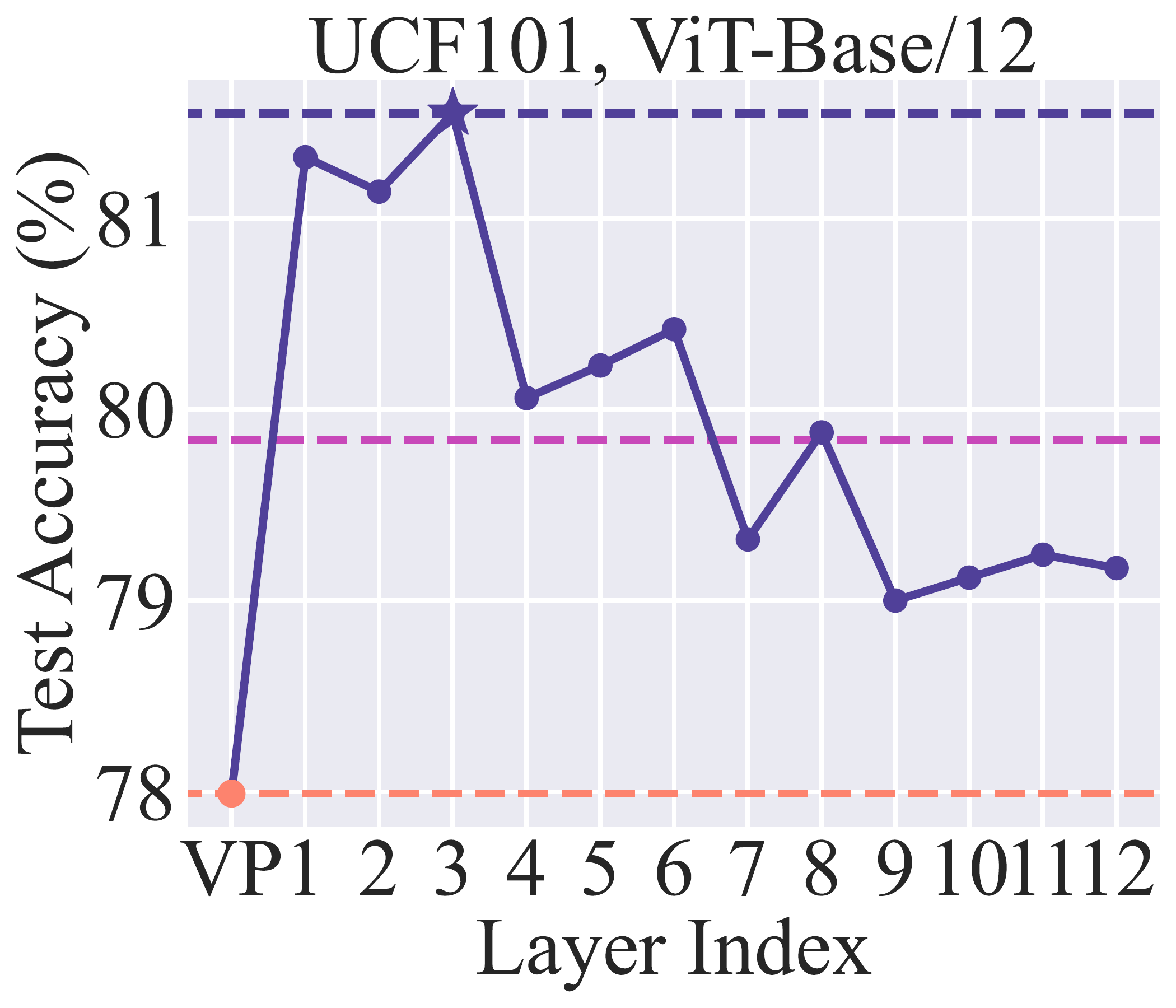}
&\hspace*{-2mm}\includegraphics[width=.25\textwidth,height=!]{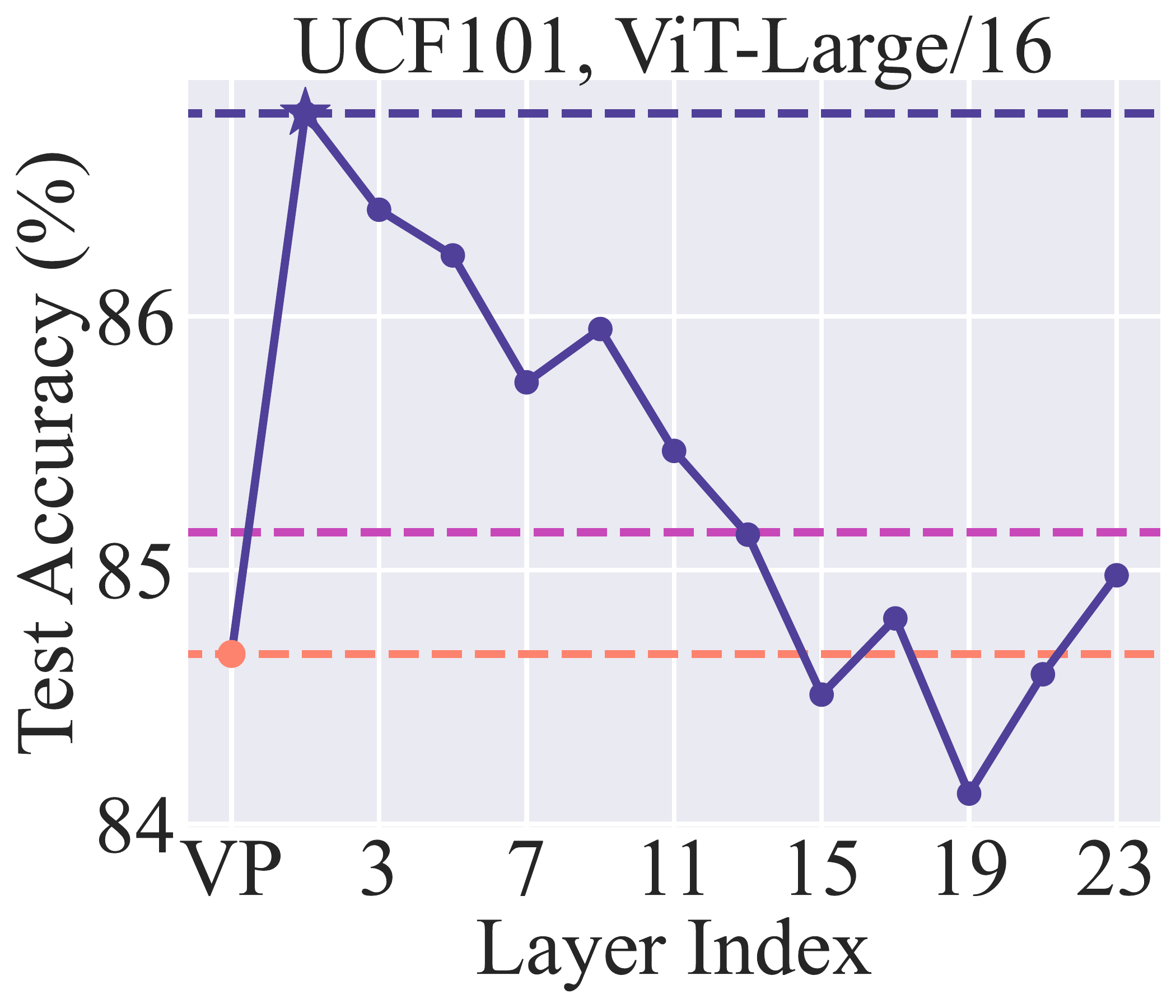}\\
\includegraphics[width=.25\textwidth,height=!]{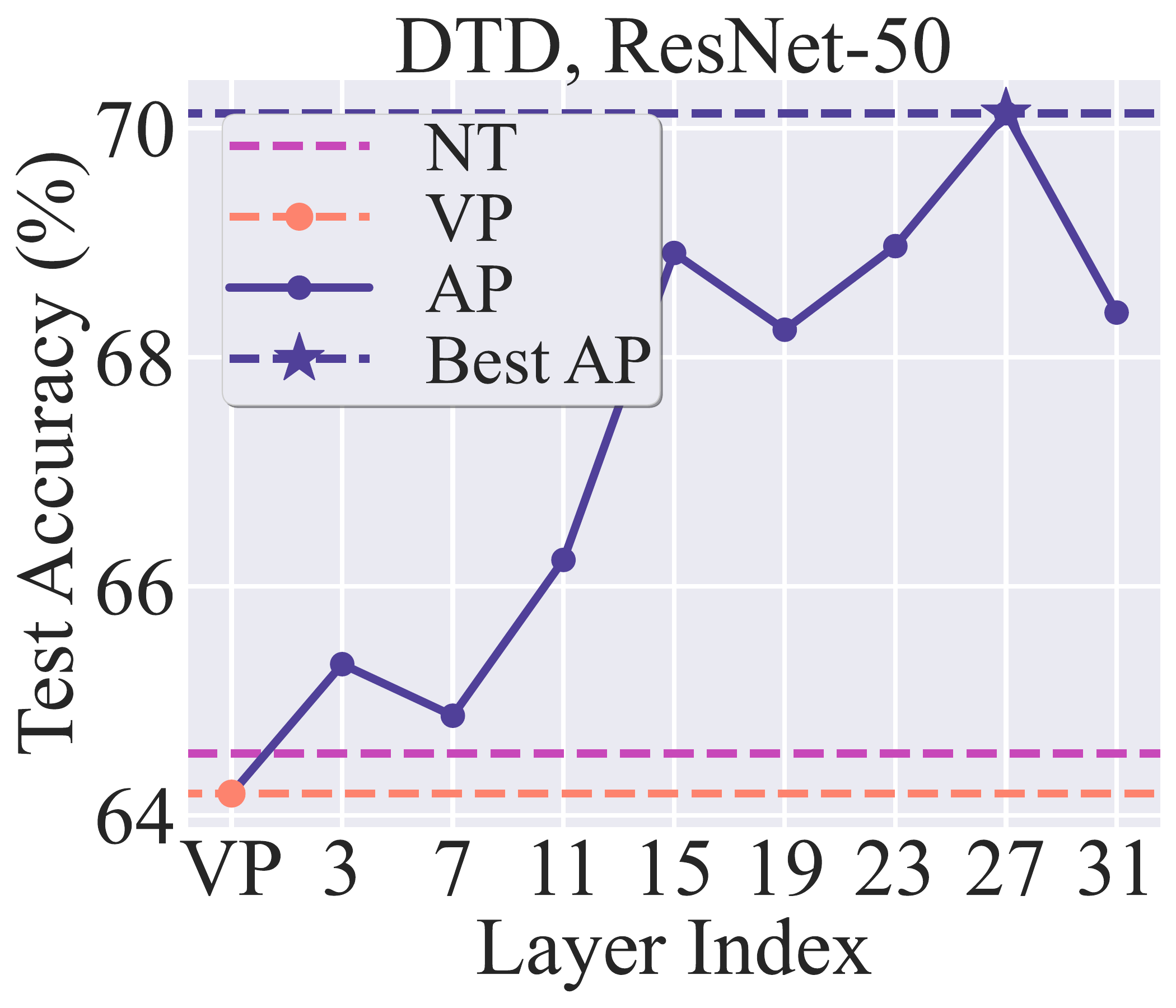} 
&\hspace*{-2mm}\includegraphics[width=.25\textwidth,height=!]{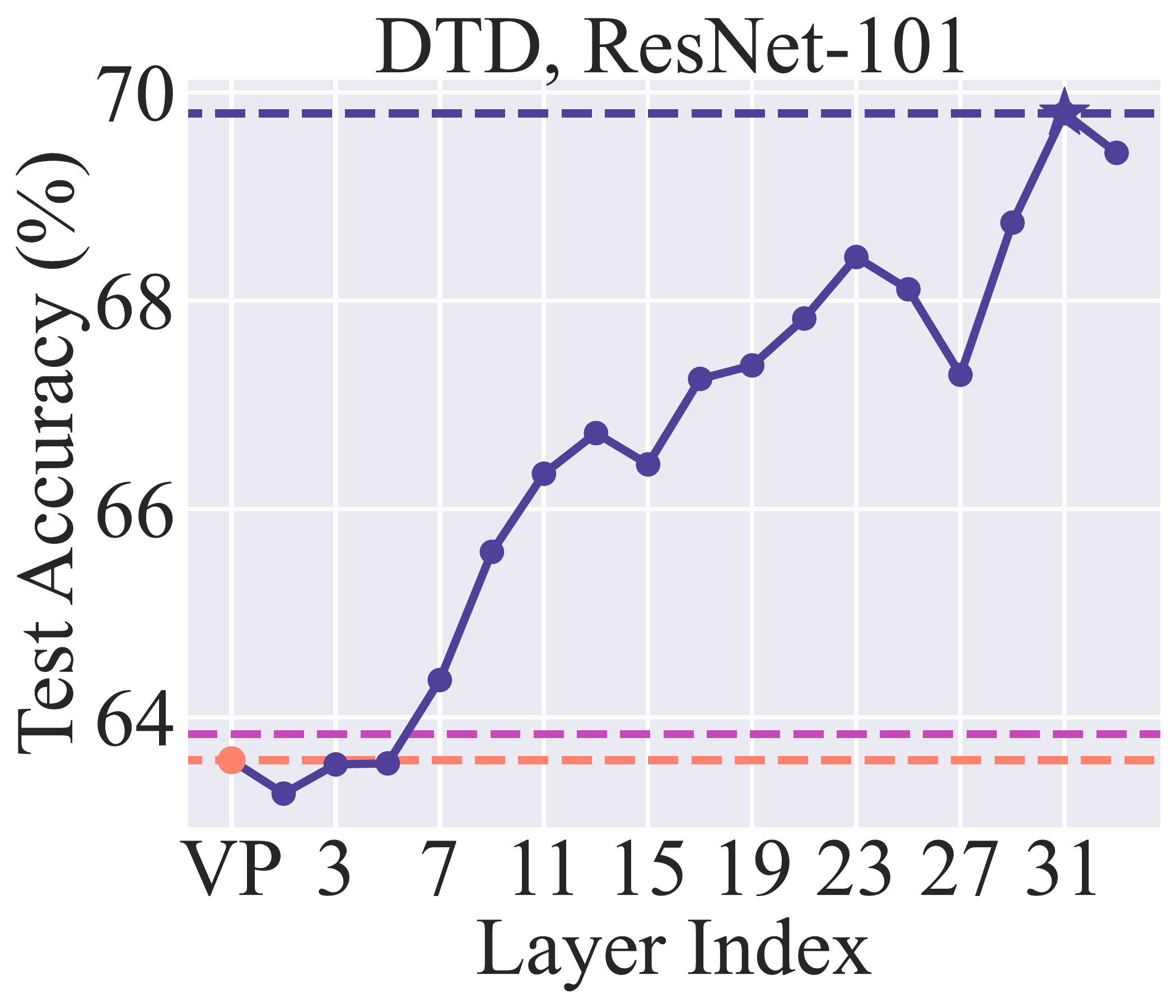}
&\hspace*{-2mm}\includegraphics[width=.25\textwidth,height=!]{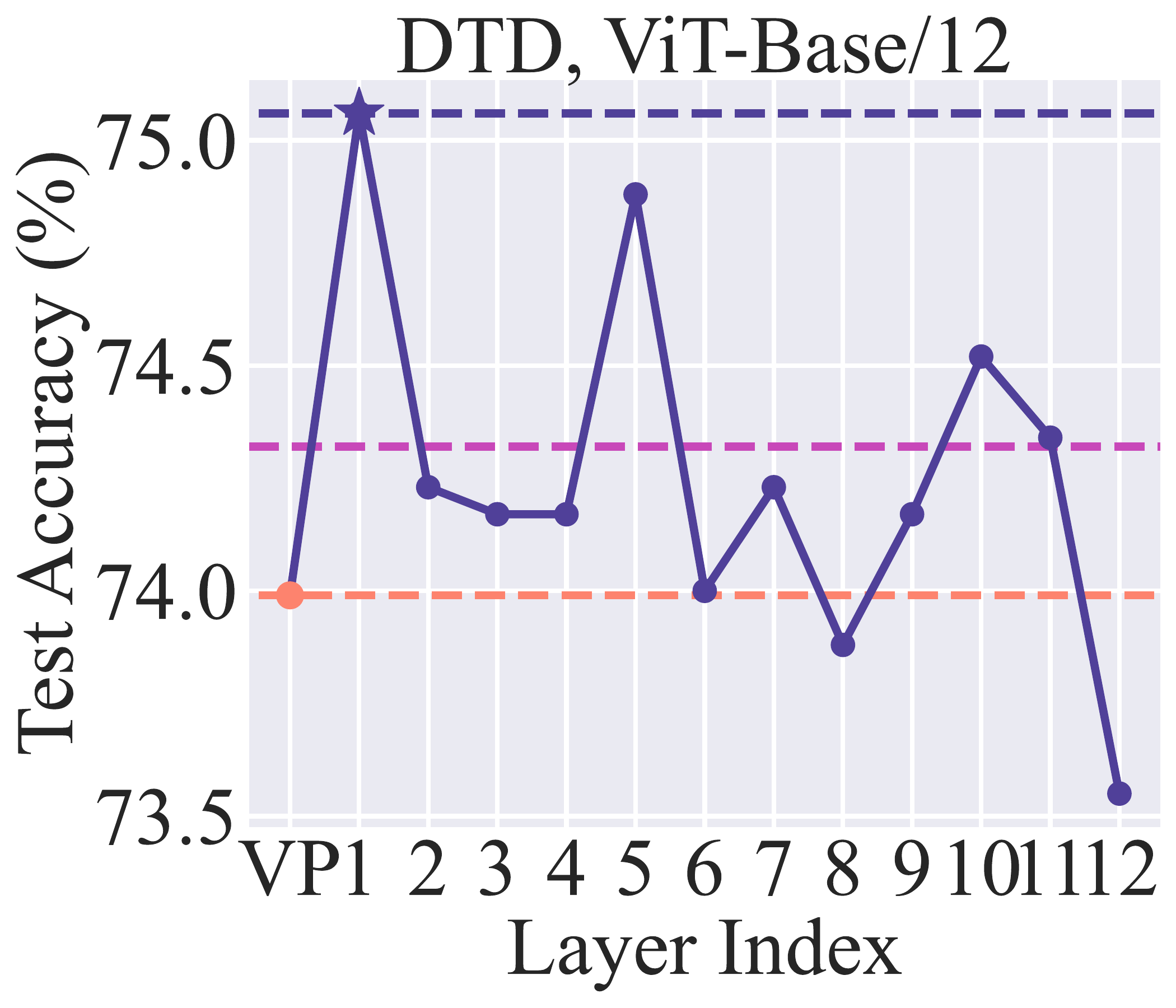}
&\hspace*{-2mm}\includegraphics[width=.25\textwidth,height=!]{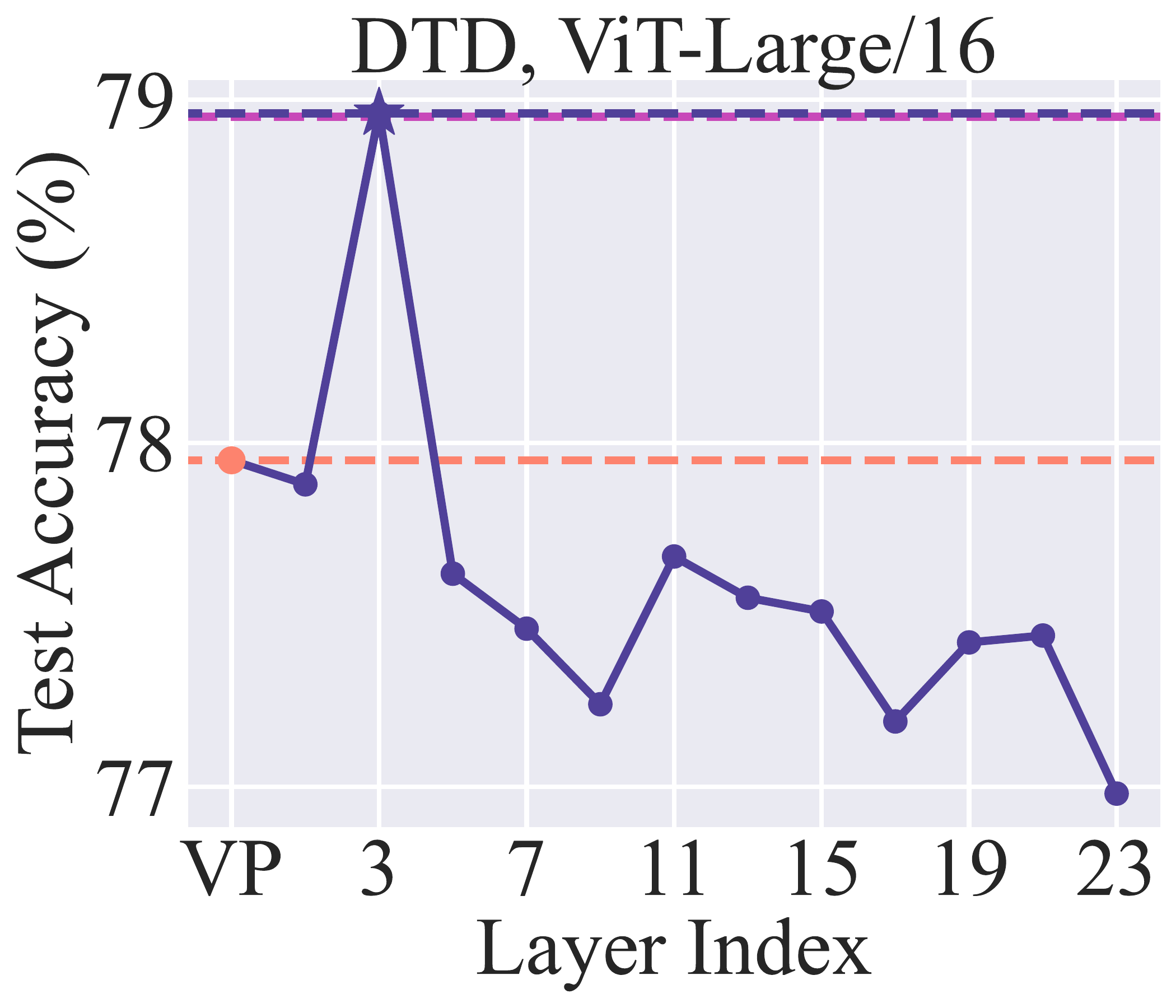}\\
\end{tabular}}
\caption{\footnotesize{Layer preference of {\ours} with different model architectures on different datasets. CNNs and ViTs exhibit opposite layer preferences.
}}
\label{fig: layer_effect_complete}
\end{figure}

\paragraph{{Performance of AP in the original experiment setting of VPT.}}
{We conduct an ablation study to strictly follow the experiment settings of VPT, with these results included in Tab. \ref{tab: vpt_setting}. The performance of VPT is directly sourced from Tab.\,\ref{tab: main} of \citep{jia2022visual}. As we can see, the performance as well as efficiency of {\ours} positions itself between VPT-Shallow and VPT-Deep, with an average of $3\%$ performance gain over VPT-Shallow and an average of $3.5\%$ drop compared to VPT-Deep. Regarding these results, we would like to mention that the results of VPT reported in Table 1 of \citep{jia2022visual} are selected based on its best prompt length per dataset, while {\ours} sticks to the same hyper-parameters across all the datasets.}

\begin{table}[ht]
\caption{Performance comparison of AP with other methods in the setting of VPT \citep{jia2022visual}. Specifically, ViT-B/16 pretrained on supervised ImageNet-21k is adopted as the pretrained model. The numbers except AP are directly sourced from VPT \citep{jia2022visual}.}
\label{tab: vpt_setting}
\centering
\resizebox{0.6\linewidth}{!}{%
\begin{tabular}{c|c|cccc}
\toprule[1pt]
\midrule
\multirow{2}{*}{\begin{tabular}[c]{@{}c@{}}ViT-B/16 \\ (85.8M)\end{tabular}} & \multirow{2}{*}{\begin{tabular}[c]{@{}c@{}}Total\\ Params\end{tabular}} & \multirow{2}{*}{FGCV} & \multicolumn{3}{c}{VTAB-1k} \\
 &  &  & Natural & Specialized & Structured \\
 \midrule
\textsc{Full-Finetune} & 24.02$\times$ & 88.54 & 75.88 & 83.36 & 47.64 \\
\textsc{Linear-Probe} & 1.02$\times$ & 79.32 & 68.93 & 77.16 & 26.84 \\
\midrule
\textsc{VPT-Shallow} & 1.04$\times$ & 84.62 & 76.81 & 74.66 & 46.98 \\
\textsc{VPT-Deep} & 1.18$\times$ & 89.11 & 78.48 & 82.43 & 54.98 \\
\midrule \rowcolor{Gray}
{\ours} (Ours) & 1.11$\times$ & 87.33 & 76.59 & 79.32 & 49.98 \\
\midrule
\bottomrule[1pt]
\end{tabular}%
*}
\end{table}

\paragraph{{Ablation study on additional prompt types in AP.}}
{We conduct additional experiments, with the findings presented in Tab. \ref{tab: more_prompt}. We observed that the originally proposed {\ours} outperforms its new prompt variants studied in Tab. \ref{tab: more_prompt} (AP-Product and AP-Concate). We speculate that the advantage of the originally proposed {\ours} may stem from its intrinsic connection to {\nt}, as discussed in the concluding part of Sec. \ref{sec: method}. }

\setlength{\tabcolsep}{4pt}
\begin{table}[ht]
\caption{\footnotesize{Ablation study on {\ours} with more prompt types. Specifically, instead of using additive prompt in the intermediate layer, \textsc{AP-Product} uses feature-wise product and \textsc{AP-Concate} adopts concatenating prompt.
}}
\label{tab: more_prompt}
\begin{center}
\resizebox{.55\linewidth}{!}{
\begin{tabular}{
l
ccc  !{\color{lightgray}\vrule}
c ccc cc 
}
\toprule[1pt]

& \multicolumn{3}{c}{\textbf{Accuracy}}
& \multicolumn{4}{c}{\textbf{Efficiency}}

\\
\cmidrule{2-8}
&\multicolumn{3}{c!{\color{lightgray}\vrule}}{\textbf{Full-Data}}
& 
&\multicolumn{3}{c}{\textbf{Train-Time Efficiency}}
\\
&{\small{FGVC}} &{\small{VTAB}} &{\small{Others}}
&{Param. \#} &{\small{Memory}} &{\small{Time}} &{\small{Throughput}} 
\\
\midrule
Number of tasks
&5 &9 &5
& -& - & - & -
\\
\midrule 
{\ff} 
& 91.43 & 91.97 & 93.91
& 304.33 
&  41.5  &  520 &  79.58
\\
{\lp} 
& 82.23  & 78.90 & 87.81
& 0.01 
&  9.7 & 121  & 79.64
\\

\midrule
\bias 
& 85.32 & 89.84 & 90.41 &  0.29 
& 32.9  & 297 & \textbf{79.48} 
\\ 
\lora
& 86.87  & 89.81 & 91.45 &  1.00
& 33.1  & 363 & \textbf{79.43}
\\
\vpt
& 86.05  & 89.97 & 90.64 &  1.24 
& 38.6  & 397  & 72.84
\\ 
\adapter
& 87.06  & 89.44 & 91.21 &  2.07 
& 32.4  & 357  & 63.39
\\
\adapterformer
& \textbf{89.18}  & \textbf{90.69} & \textbf{92.08} &  0.65 
& 32.3  & 289  & 23.69
\\
\midrule
{\textsc{AP-Product}}
& {84.20}  & {85.36} & {90.15} &  {\textbf{0.16}} 
& {\textbf{31.6}}  & {\textbf{262}}  & {\textbf{79.43}} 
\\
{\textsc{AP-Concate}}
& {83.29}  & {82.42} & {89.13} &  {\textbf{0.12}} 
& {\textbf{31.4}}  & {\textbf{261}}  & {\textbf{79.47}} 
\\
\rowcolor{Gray}
\ours
& 85.30  & 90.25 & 91.09 &  \textbf{0.16} 
& \textbf{31.6}  & \textbf{262}  & \textbf{79.43} 
\\
\bottomrule[1pt]
\end{tabular}
}
\end{center}
\end{table}

\paragraph{{Application of AP to multiple layers.}} 
{We implement {\ours} with multiple layers, and we show the results in Tab. \ref{tab: ap_multiple_layers}. Our findings indicate that the layer addition of {\ours} does not yield significant improvements in performance. This observation is significant as it suggests that applying {\ours} to a single, carefully selected layer can achieve comparable performance to more extensive applications. This underscores the efficiency of {\ours}, affirming its value in settings where computational resources are a concern.}

\setlength{\tabcolsep}{4pt}
\begin{table}[ht]
\caption{Ablation study on the number of  layers installed with AP. In particular, for AP-3 and AP-5, AP are installed on the input of the first 3 and 5 blocks of the pretrained ViT-L. Other experiment settings follow Tab.\,\ref{tab: main}, and Tab.
\,\ref{tab: baseline_overview}.
}
\label{tab: ap_multiple_layers}
\begin{center}
\resizebox{.55\linewidth}{!}{
\begin{tabular}{
l
ccc  !{\color{lightgray}\vrule}
c ccc cc 
}
\toprule[1pt]

& \multicolumn{3}{c}{\textbf{Accuracy}}
& \multicolumn{4}{c}{\textbf{Efficiency}}

\\
\cmidrule{2-8}
&\multicolumn{3}{c!{\color{lightgray}\vrule}}{\textbf{Full-Data}}
& 
&\multicolumn{3}{c}{\textbf{Train-Time Efficiency}}
\\
&{\small{FGVC}} &{\small{VTAB}} &{\small{Others}}
&{Param. \#} &{\small{Memory}} &{\small{Time}} &{\small{Throughput}} 
\\
\midrule
Number of tasks
&5 &9 &5
& -& - & - & -
\\
\midrule 
{\ff} 
& 91.43 & 91.97 & 93.91
& 304.33 
&  41.5  &  520 &  79.58
\\
{\lp} 
& 82.23  & 78.90 & 87.81
& 0.01
&  9.7 & 121  & 79.64
\\

\midrule
\bias 
& 85.32 & 89.84 & 90.41 &  0.29 
& 32.9  & 297 & {79.48} 
\\ 
\lora
& 86.87  & 89.81 & 91.45 &  1.00 
& 33.1  & 363 & {79.43}
\\
\vpt
& 86.05  & 89.97 & 90.64 &  1.24 
& 38.6  & 397  & 72.84
\\ 
\adapter
& 87.06  & 89.44 & 91.21 &  2.17 
& 32.4  & 357  & 63.39
\\
\adapterformer
& \textbf{89.18}  & {90.69} & \textbf{92.08} &  0.65 
& 32.3  & 289  & 23.69
\\
\midrule
{\ours-3}
& {85.41}  & {90.38} & {91.21} &  {0.46}
& {47.8}  & {297}  & {79.43} 
\\
{\ours-5}
& {85.49}  & {90.49} & {91.31} &  {0.76}
& {69.7}  & {348}  & {79.43} 
\\
\midrule
\rowcolor{Gray}
\ours
& 85.30  & 90.25 & 91.09 &  0.16 
& 31.6  & 262  & 79.43 
\\
\bottomrule[1pt]
\end{tabular}
}
\end{center}
\end{table}

\paragraph{{Performance comparison with re-initialized classification head.}}
{We carried out an ablation experiment using re-initialized classification head. This will influence the tunable parameter counts of {\lp} and other methods involved. As we can see, the results in Tab. \ref{tab: vit_ablation} are nearly identical to our previous findings in Tab. \ref{tab: more_peft} that {\ours} shows a competitive performance and efficiency compared with other strong PEFT baselines.}

\setlength{\tabcolsep}{4pt}
\begin{table}[ht]
\caption{\footnotesize{Performance comparison between {\ours} and SOTA PEFT methods on ViT-Large/16 with {re-initialized classification head.} Experiment settings follow Tab.\,\ref{tab: main}, and Tab.
\,\ref{tab: baseline_overview}.
}}
\label{tab: vit_ablation}
\begin{center}
\resizebox{.55\linewidth}{!}{
\begin{tabular}{
l
ccc  !{\color{lightgray}\vrule}
c ccc cc 
}
\toprule[1pt]

& \multicolumn{3}{c}{\textbf{Accuracy}}
& \multicolumn{4}{c}{\textbf{Efficiency}}

\\
\cmidrule{2-8}
&\multicolumn{3}{c!{\color{lightgray}\vrule}}{\textbf{Full-Data}}
& 
&\multicolumn{3}{c}{\textbf{Train-Time Efficiency}}
\\
&{\small{FGVC}} &{\small{VTAB}} &{\small{Others}}
&{Param. \#} &{\small{Memory}} &{\small{Time}} &{\small{Throughput}} 
\\
\midrule
Number of tasks
&5 &9 &5
& -& - & - & -
\\
\midrule 
{\ff} 
& 91.43 & 91.97 & 93.91
& 304.33 
&  41.5  &  520 &  79.58
\\
{\lp} 
& 82.31  & 78.43 & 87.71
& 0.01
&  8.1 & 121  & 79.69
\\

\midrule
\bias 
& 85.49 & 89.47 & 90.85 &  0.29
& \textbf{27.4}  & 297 & \textbf{79.51} 
\\ 
\lora
& 86.49  & 89.74 & 91.49 &  1.00 
& 32.5  & 363 & 71.47
\\
\vpt
& 86.15  & 90.13 & 90.88 &  1.24 
& 37.2  & 397  & 72.91
\\ 
\adapter
& 87.14  & 89.12 & 91.01 &  2.07 
& 31.1  & 357  & 63.78
\\
\adapterformer
& \textbf{89.24}  & \textbf{90.49} & \textbf{92.21} &  0.65 
& 31.1  & 289  & 23.82
\\
\midrule
\rowcolor{Gray}
\ours
& 85.32  & 90.12 & 91.11 &  \textbf{0.16} 
& 30.2  & \textbf{262}  & \textbf{79.54} 
\\
\bottomrule[1pt]
\end{tabular}
}
\end{center}
\end{table}

\paragraph{{Comparison to VPT with other prompt lengths.}}
{We conducted an experiment to implement VPT-Deep using a smaller prompt token length 10 (VPT-10). The results, presented in Tab. \ref{tab: vpt_ablation}, indicate that VPT-10's performance is comparable to VPT-50 in Tab. \ref{tab: more_peft}, albeit with enhanced efficiency.}

\setlength{\tabcolsep}{4pt}
\begin{table}[ht]
\caption{\footnotesize{Performance comparison between {\ours} and {\vpt} with different prompt lengths on ViT-Large/16. Experiment settings follow Tab.\,\ref{tab: main}, and Tab.
\,\ref{tab: more_peft}.
}}
\label{tab: vpt_ablation}
\begin{center}
\resizebox{.55\linewidth}{!}{
\begin{tabular}{
l
ccc  !{\color{lightgray}\vrule}
c ccc cc 
}
\toprule[1pt]

& \multicolumn{3}{c}{\textbf{Accuracy}}
& \multicolumn{4}{c}{\textbf{Efficiency}}

\\
\cmidrule{2-8}
&\multicolumn{3}{c!{\color{lightgray}\vrule}}{\textbf{Full-Data}}
& 
&\multicolumn{3}{c}{\textbf{Train-Time Efficiency}}
\\
&{\small{FGVC}} &{\small{VTAB}} &{\small{Others}}
&{Param. \#} &{\small{Memory}} &{\small{Time}} &{\small{Throughput}} 
\\
\midrule
Number of tasks
&5 &9 &5
& -& - & - & -
\\
\midrule 
{\ff} 
& 91.43 & 91.97 & 93.91
& 304.33 
&  41.5  &  520 &  79.58
\\
{\lp} 
& 82.23  & 78.90 & 87.81
& 0.01 
&  9.7 & 121  & 79.64
\\

\midrule
{\vpt-10}
& {{86.34}}  & {89.24} & {90.14} &  {0.25} 
& {33.7}  & {334}  & {76.35}
\\ 
\vpt-50
& 86.05  & 89.97 & 90.64 &  1.24 
& 38.6  & 397  & 72.84
\\ 
\midrule
\rowcolor{Gray}
\ours
& 85.30  & 90.25 & 91.09 &  \textbf{0.16} 
& \textbf{31.6}  & \textbf{262}  & \textbf{79.43} 
\\
\bottomrule[1pt]
\end{tabular}
}
\end{center}
\end{table}

\paragraph{{Layerwise comparison between AP and VPT-Deep.}}
{We conduct an experiment for a more detailed layer-wise evaluation in Fig. \ref{fig: vpt_deep}. These additional results highlight a consistent layer-architecture influence on VPT-Deep, akin to what we initially observed in our original {\ours} design. This outcome is not unexpected, considering that the implementation of VPT-Deep essentially converges with that of {\ours} when a specific network layer is selected for prompting. The key divergence lies in the prompt design approach: VPT-Deep favors concatenation, whereas {\ours} opts for addition in prompt design. It is worth noting that, in the context of single-layer prompting, the efficacy of concatenation in prompt design is comparatively lower than that of addition.}

\begin{figure}[ht]
    \centering
    \includegraphics[width=0.35\linewidth]{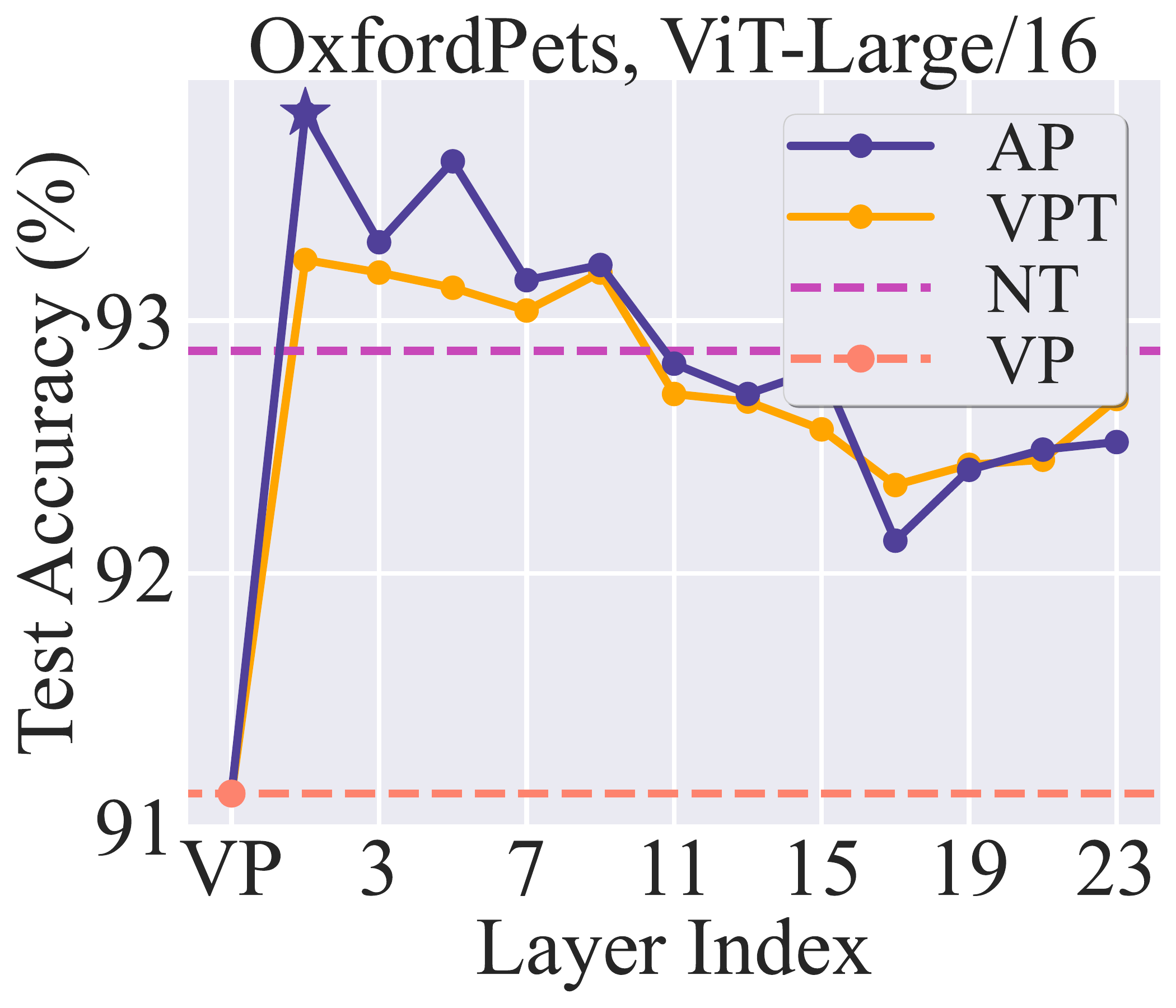}
    \caption{Layer-wise performance comparison between {\ours} and VPT on OxfordPets.}
    \label{fig: vpt_deep}
\end{figure}

\paragraph{{Comparison with additional PEFT methods.}}
{We conduct an experiment and report the results of SSF in Tab. \ref{tab: ssf}. In particular, we can see SSF is also a competitive method among all the baselines but is still under AdapterFormer. Compared to {\ours}, SSF yields better performance for the FGVC benchmark but leads to slightly worse accuracy for the VTAB benchmark. In general, SSF ranks approximately the second or the third place among all the PEFT methods.}

\begin{table}[ht]
\caption{\footnotesize{Performance comparison of {\ours} with more PEFT methods (SSF \citep{lian2022scaling}). Experiment settings follow Tab.\,\ref{tab: main} and Tab.\,\ref{tab: more_peft}.
}}
\label{tab: ssf}
\begin{center}
\resizebox{.55\linewidth}{!}{
\begin{tabular}{
l
ccc  !{\color{lightgray}\vrule}
c ccc cc 
}
\toprule[1pt]

& \multicolumn{3}{c}{\textbf{Accuracy}}
& \multicolumn{4}{c}{\textbf{Efficiency}}

\\
\cmidrule{2-8}
&\multicolumn{3}{c!{\color{lightgray}\vrule}}{\textbf{Full-Data}}
& 
&\multicolumn{3}{c}{\textbf{Train-Time Efficiency}}
\\
&{\small{FGVC}} &{\small{VTAB}} &{\small{Others}}
&{Param. \#} &{\small{Memory}} &{\small{Time}} &{\small{Throughput}} 
\\
\midrule
Number of tasks
&5 &9 &5
& -& - & - & -
\\
\midrule 
{\ff} 
& 91.43 & 91.97 & 93.91
& 304.33 
&  41.5  &  520 &  79.58
\\
{\lp} 
& 82.23  & 78.90 & 87.81
& 0.01
&  9.7 & 121  & 79.64
\\

\midrule
\bias 
& 85.32 & 89.84 & 90.41 &  0.29 
& 32.9  & 297 & \textbf{79.48} 
\\ 
\lora
& 86.87  & 89.81 & 91.45 &  1.00 
& 33.1  & 363 & \textbf{79.43}
\\
\vpt
& 86.05  & 89.97 & 90.64 &  1.24 
& 38.6  & 397  & 72.84
\\ 
\adapter
& 87.06  & 89.44 & 91.21 &  2.17 
& 32.4  & 357  & 63.39
\\
\adapterformer
& \textbf{89.18}  & \textbf{90.69} & \textbf{92.08} &  0.65 
& 32.3  & 289  & 23.69
\\
{\textsc{SSF}}
& {87.32}  & {89.43} & {92.21} &  {0.48} 
& {34.7}  & {299}  & {\textbf{79.49}} 
\\
\midrule
\rowcolor{Gray}
\ours
& 85.30  & 90.25 & 91.09 &  \textbf{0.16} 
& \textbf{31.6}  & \textbf{262}  & \textbf{79.43} 
\\
\bottomrule[1pt]
\end{tabular}
}
\end{center}
\end{table}

\paragraph{{Comparison with LoRA of different rank values.}}
{We conduct additional experiments on the hyper-parameters of LoRA, namely the rank $r$. In Tab.\,\ref{tab: more_peft}, the rank $r$ is adopted to 10 by default. In Tab. \ref{tab: lora_ablation}, we explore more rank values varying from $1$ to $50$. We can see that the performance of LoRA increases with the larger rank values, but the difference between $r=10$ and $r=50$ is insignificant. In contrast, the efficiency of LoRA will drop significantly with a rank larger than $10$. In order to strike a balance between performance and efficiency, we adopt the rank value of 10 as the default value in this work.}

\setlength{\tabcolsep}{4pt}
\begin{table}[ht]
\caption{\footnotesize{Ablation study on performance of {\lora} with different rank values. Experiment settings follow Tab.\,\ref{tab: main} and Tab.\,\ref{tab: more_peft}.
}}
\label{tab: lora_ablation}
\begin{center}
\resizebox{.55\linewidth}{!}{
\begin{tabular}{
l
ccc  !{\color{lightgray}\vrule}
c ccc cc 
}
\toprule[1pt]

& \multicolumn{3}{c}{\textbf{Accuracy}}
& \multicolumn{4}{c}{\textbf{Efficiency}}

\\
\cmidrule{2-8}
&\multicolumn{3}{c!{\color{lightgray}\vrule}}{\textbf{Full-Data}}
& 
&\multicolumn{3}{c}{\textbf{Train-Time Efficiency}}
\\
&{\small{FGVC}} &{\small{VTAB}} &{\small{Others}}
&{Param. \#} &{\small{Memory}} &{\small{Time}} &{\small{Throughput}} 
\\
\midrule
Number of tasks
&5 &9 &5
& -& - & - & -
\\
\midrule 
{\ff}  
& 91.43 & 91.97 & 93.91
& 304.33 
&  41.5  &  520 &  79.58
\\
{\lp} 
& 82.23  & 78.90 & 87.81
& 0.01 
&  9.7 & 121  & 79.64
\\

\midrule
{\lora-1}
& {84.43}  & {88.21} & {90.07} &  {0.04} 
& {10.43}  & {139} & {{79.43}}
\\
\lora-10
& 86.87  & 89.81 & 91.45 &  1.00 
& 33.1  & 363 & {79.43}
\\
{\lora-20}
& {86.93}  & {90.23} & {91.35} &  {4.38} 
& {33.1}  & {443} & {{79.43}}
\\
{\lora-50}
& {87.23}  & {90.41} & {91.97} &  {12.22} 
& {57.2}  & {589} & {{79.43}}
\\
\midrule
\rowcolor{Gray}
\ours
& 85.30  & 90.25 & 91.09 &  0.16 
& {31.6}  & {262}  & {79.43} 
\\
\bottomrule[1pt]
\end{tabular}
}
\end{center}
\end{table}

\textbf{Ablation study on the influence of different data sizes.} We recognize that data size significantly influences performance. To ensure that our conclusions generalize well, we conducted an ablation study on \textsc{Full-Finetune}, VP, and AP, varying the training data ratio from $10\%$ to $100\%$ on datasets with large training sizes (Camelyon, FOOD101, CIFAR10). The results are shown in Figure \ref{fig: data_size}. Results show that \textsc{Full-Finetune} benefits the most from larger datasets. However, AP consistently outperforms VP, regardless of data size, reinforcing that AP is a better design than VP for both few- and many-shot settings.

\begin{figure}[htb]
\vspace*{-1em}
    \centering
    \begin{tabular}{c}
         \includegraphics[width=1\linewidth]{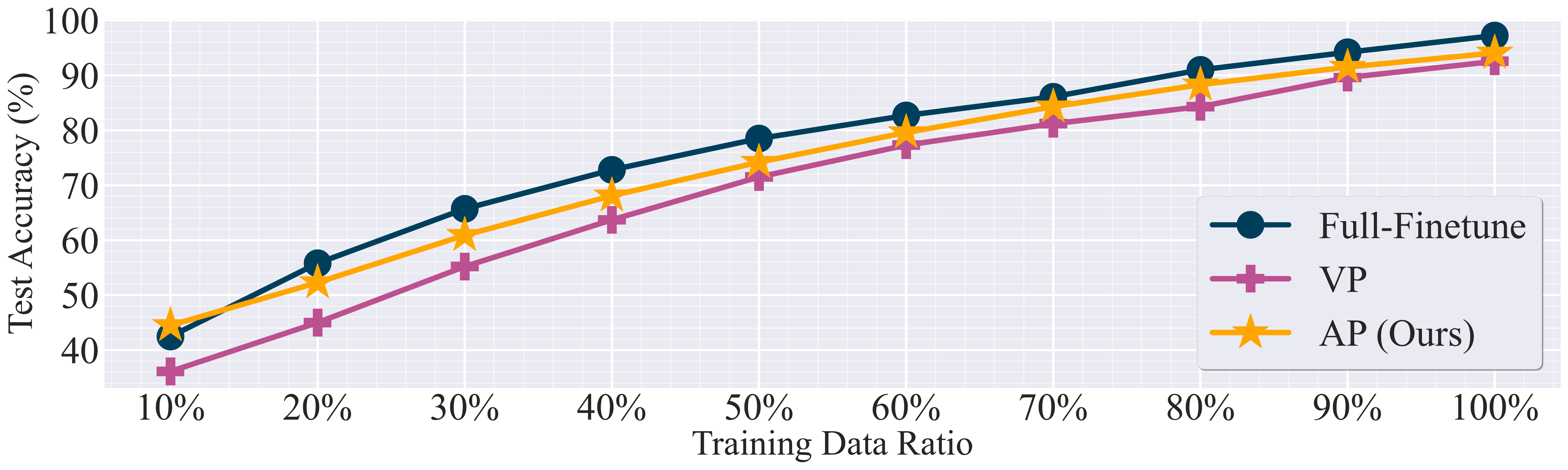} \\
         (a) CIFAR10 \\ 
         \includegraphics[width=1\linewidth]{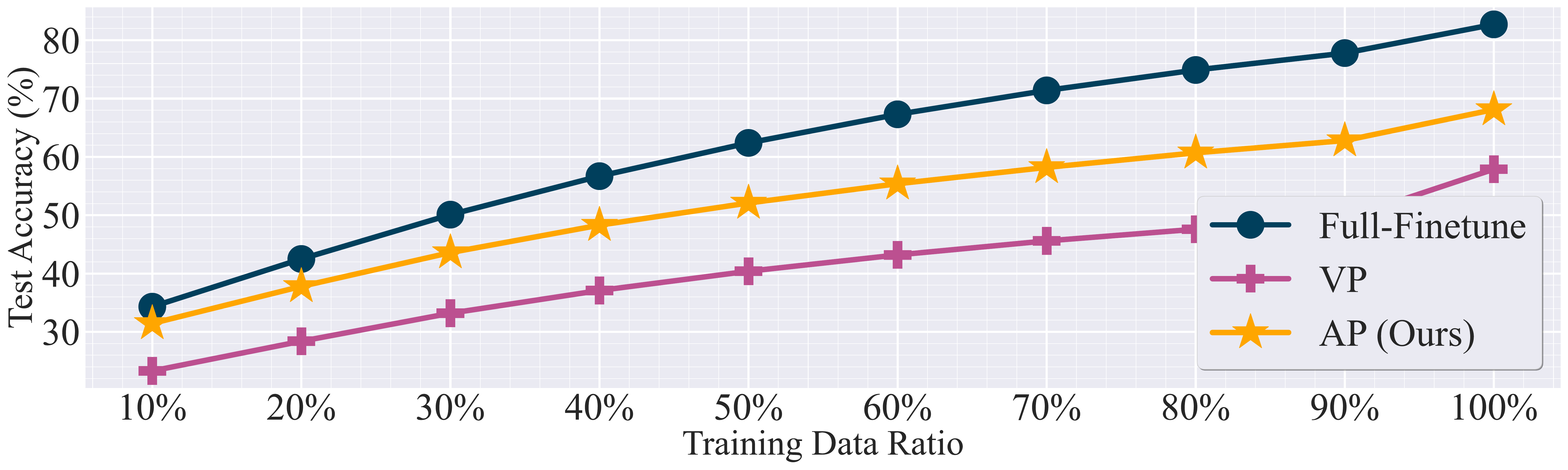} \\
         (b) Food101 \\
         \includegraphics[width=1\linewidth]{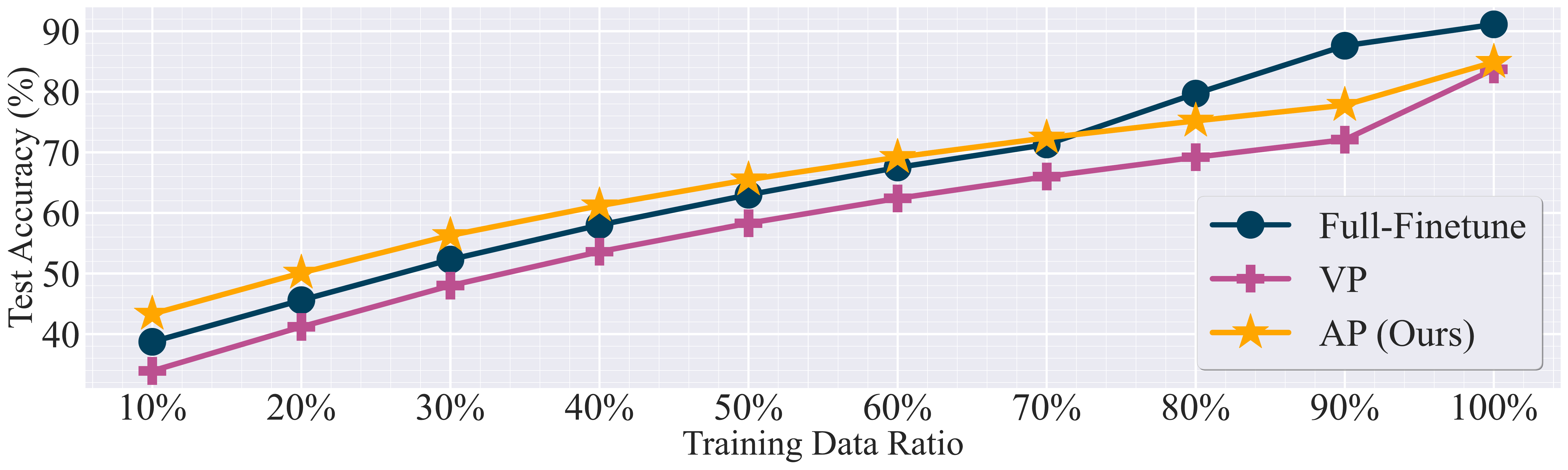} \\
         (c) Camelyon
    \end{tabular}
    \vspace*{-.5em}
    \caption{Performance of ResNet101 trained with varying sizes of available training data on (a) CIFAR10, (b) Food101, and (c) Camelyon. All other experimental settings strictly follow those in Tab.\,\ref{tab: main}.}
    \label{fig: data_size}
\end{figure}

\clearpage

\section{Theoretical details}
\label{app: theory}

\subsection{Model architecture}

We define the general definition of the model architecture CNN, ViT in this section.

\textbf{CNN}: We follow the architecture of ResNet \citep{}, which stacks multiple residual blocks plus an input and an output layer. Each residual block includes several convolutional layers and a skip connection. For the input $\bfz_{\text{in}}^{(l)}$ to the $l$-th convolutional layer, where $l\in[L]$, the output $\bfz_{\text{out}}^{(l)}$ can be computed as
\begin{equation}
\begin{aligned}
    &\bfz^{(l)}=\text{Conv}(\bfz_{\text{in}}^{(l)}; \bfW^{(l)}_1),\  \bfz_{\text{out}}^{(l)}=\text{relu}(\text{BN}(\bfz^{(l)}))
\label{eqn: CNN}
\end{aligned}
\end{equation}
where $\bfz_{\text{in}}^{(0)}=\bfx$. $\text{Conv}(\cdot)$ and $\text{BN}$ denote the Convolution operation and the Batch Normalization, respectively. The output $\hat{y}=\text{FC}(\text{Pooling}(\bfz_{\text{out}}^{(L)}))$, where $\text{FC}(\cdot)$ denotes fully-connected layer. 

\textbf{ViT}: The architecture of Vision Transformer is defined in \citep{}. For the input $\bfz_{\text{in}}^{(l)}$ to the $l$-th Transformer layer, we first let $\bfz^{(l)}=\bfz_{\text{in}}^{(l)}$. Then, the output $\bfz_{\text{out}}^{(l)}$ can be computed as
\begin{equation}
\begin{aligned}\label{eqn: ViT}
    \bfz^{(l)}=\text{MSA}(\text{LN}(\bfz^{(l)}))+\bfz^{(l)},\  \bfz_{\text{out}}^{(l)}=\text{MLP}(\text{LN}(\bfz^{(l)}))+\bfz^{(l)},
\end{aligned}
\end{equation}
where $\bfz_{\text{in}}^{(0)}=\bfx$. $\text{MSA}(\cdot)$ and $\text{LN}(\cdot)$ denote the Multi-Head Self-attention and Layer Normalization, respectively. For an $L$-layer ViT, the output $\hat{y}=\text{Out}(\bfH_{\text{out}}^{(L)})$, where $\text{Out}(\cdot)$ denotes the output layer.

\subsection{Proposition \ref{prpst: AP-NT full} and its proof}\label{subsec: proof-AP-NT}
We first provide a full definition of {\nt}. 

{\nt} is a method where only the Batch Normalization layers for CNNs or Layer Normalization for ViTs are trainable. Consider a batch of the $l$-th-layer features $\bfz^{(l)}_1, \bfz^{(l)}_2,\cdots, \bfz^{(l)}_B$ defined in (\ref{eqn: CNN}) and (\ref{eqn: ViT}), where $\bfz^{(l)}_b=[{\bfz^{(l)}_{b,\cdot, 1}}, {\bfz^{(l)}_{b,\cdot,2}},\cdots, {\bfz^{(l)}_{b,\cdot, P'}}]=\in\mathbb{R}^{D'\times P'}$, $\bfz^{(l)}_{b,\cdot, p}\in\mathbb{R}^{D'}$ for $b\in[B]$ and $p\in[P']$. $B$ is the batch size, $D'$ denotes the number of channels or token dimension, and $P'$ denotes the size of the feature map or token length. We can formulate the Normalization on $h^{(l)}_{b,d,p}$, the $d$-th dimension of $\bfh^{(l)}_{b,\cdot, p}$, as follows.
\begin{equation}
\begin{aligned}
    &\textbf{BN}: \mu_d=\sum_{b=1}^B\sum_{p=1}^{P'}\frac{z^{(l)}_{b,d,p}}{BP'},\  \sigma_d^2=\sum_{b=1}^B\sum_{p=1}^{P'}\frac{(z^{(l)}_{b,d,p}-\mu_d)^2}{BP'},\  \text{BN}(z^{(l)}_{b,d,p})=\gamma_d \frac{z^{(l)}_{b,d,p}-\mu_d}{\sigma_d}+\beta_d,\\
    &\textbf{LN}: \mu_{b,p}=\sum_{d=1}^{D'}\frac{z^{(l)}_{b,d,p}}{D'},\  \sigma_{b,p}^2=\sum_{d=1}^{D'}\frac{(z^{(l)}_{b,d,p}-\mu_{b,p})^2}{D'},\  \text{LN}(z^{(l)}_{b,d,p})=\gamma_d \frac{z^{(l)}_{b,d,p}-\mu_{b,p}}{\sigma_{b,p}}+\beta_d,
\end{aligned}\label{eqn: normalization}
\end{equation}
where $\gamma_d$, $\beta_d$ are trainable parameters for $d\in[D']$. Then, we present a full statement of Proposition \ref{prpst: AP-NT full}.

\begin{prpst}\label{prpst: AP-NT full}
\textbf{Without} the assumption that the input to the batch (or layer) normalization layer has zero mean and unit variance for each dimension (or token), we have the following conclusion:

{\ours} on the $l$-th layer is the same as {\nt} on the $l$-th layer, if 
\begin{itemize}
    \item \textbf{for CNNs}, $\gamma_d/\sigma_d=1$, and all $\bfdt_p$'s added to $\bfz_b^{(l)}$ are the same as $\bfdt$, $\beta_d = \bfw_{d}^{(l)} \bfdt_*+\bfmu_d$ for all $d\in[D']$, where $\boldsymbol{\delta}_*=\boldsymbol{\delta}_i^{(l)}$ for $i\in[P']$; 
    \item \textbf{for ViTs}, $\gamma_d/\sigma_{b,p}=1$, and $\mu_{b,p}$'s are the same as $\mu_p,\ p\in[P']$ among all $b\in[B]$ for ViTs, $\beta_d = \delta_{p,d}^{(l)}+\mu_p$ for all $d\in[D']$, $p\in[P']$. 
\end{itemize} 
\end{prpst}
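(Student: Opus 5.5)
The proof is a direct algebraic comparison: write the output of the $l$-th layer under {\ours} and under {\nt} explicitly, then check that the stated parameter choices make the two expressions coincide entrywise. I treat the two architectures separately, using the definitions in (\ref{eqn: CNN}), (\ref{eqn: ViT}) and (\ref{eqn: normalization}). No zero-mean or unit-variance assumption is needed, because the statistics $\mu_d,\sigma_d$ (or $\mu_{b,p},\sigma_{b,p}$) appear explicitly and are absorbed into $\gamma_d$ and $\beta_d$.

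\textbf{CNNs.} {\ours} at layer $l$ perturbs the convolution input, so the pre-normalization feature is $\text{Conv}(\bfz_{\text{in}}^{(l)}+\bfdt;\bfW_1^{(l)})$. Since convolution is linear, this equals $\bfz^{(l)}+\text{Conv}(\bfdt;\bfW_1^{(l)})$. When all patch-wise perturbations equal a common $\bfdt_*$, every spatial position $p$ receives the same channel-wise shift, namely $\bfw_d^{(l)}\bfdt_*$ in channel $d$, where $\bfw_d^{(l)}$ is the $d$-th filter flattened. I compare this with {\nt}:
\[
\text{BN}(z^{(l)}_{b,d,p})=\frac{\gamma_d}{\sigma_d}\,z^{(l)}_{b,d,p}-\frac{\gamma_d}{\sigma_d}\mu_d+\beta_d .
\]
With $\gamma_d/\sigma_d=1$ this is a pure additive shift $z^{(l)}_{b,d,p}+(\beta_d-\mu_d)$. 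Substituting $\beta_d=\bfw_d^{(l)}\bfdt_*+\mu_d$ reproduces exactly the {\ours} shift. Applying the same downstream $\text{relu}$ and later layers to both then gives identical outputs.

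\textbf{ViTs.} {\ours} adds $\bfdt^{(l)}_p$ to token $p$ of $\bfz^{(l)}$ directly, so no linear map intervenes. The LayerNorm expression is $\frac{\gamma_d}{\sigma_{b,p}}(z_{b,d,p}-\mu_{b,p})+\beta_d$. Setting $\gamma_d/\sigma_{b,p}=1$ reduces it to $z_{b,d,p}+\beta_d-\mu_{b,p}$. If $\mu_{b,p}=\mu_p$ is the same across the batch, then $\beta_d=\delta^{(l)}_{p,d}+\mu_p$ makes the shift equal to $\delta^{(l)}_{p,d}$, matching {\ours}. The rest of the transformer block then processes identical inputs.

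\textbf{Main obstacle.} The delicate step is consistency of the parameter identifications. BatchNorm's $\beta_d$ is shared across positions, which is why the $\bfdt_p$ must all be equal. In the ViT case, $\beta_d$ is shared across tokens, so $\delta^{(l)}_{p,d}+\mu_p$ must actually be independent of $p$. I would state this as an implicit compatibility condition of the hypothesis, or read the statement as allowing a token-indexed bias. Similarly, the condition that $\gamma_d/\sigma_d$ (or $\gamma_d/\sigma_{b,p}$) equals $1$ must hold uniformly, and I would make that explicit. I would also confirm where the prompt sits relative to the residual branch in (\ref{eqn: ViT}), so that the shift enters the same place in both methods.
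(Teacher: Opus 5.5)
Your proposal is correct and follows essentially the same route as the paper. Like the paper, you write the normalization as the affine map $\frac{\gamma}{\sigma}z+\beta-\frac{\gamma}{\sigma}\mu$, impose $\gamma/\sigma=1$, and solve for $\beta$ so that the shift equals the {\ours} shift ($\bfw_d^{(l)}\bfdt_*$ after the linear map for CNNs, $\delta^{(l)}_{p,d}$ directly for ViTs). Your remark that a token-shared $\beta_d$ forces $\delta^{(l)}_{p,d}+\mu_p$ to be independent of $p$ is left implicit in the paper, which settles it only by specializing to $\mu_p=0$ and $\bfdt_p=\bfdt_*$ at the end of its proof.
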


\textbf{Proof:}

For BN, note that
\begin{equation}
    \text{BN}(z^{(l)}_{b,d,p})=\gamma_d \frac{z^{(l)}_{b,d,p}-\mu_d}{\sigma_d}+\beta_d=\frac{\gamma_d}{\sigma_d}z^{(l)}_{b,d,p}+\beta_d-\frac{\mu_d\gamma_d}{\sigma_d}
\end{equation}
where
\begin{equation}
    z^{(l)}_{b,d,p}=\bfw_d^{(l)}\bfz^{(l-1)}_{b,\cdot,p},\  \bfz^{(l-1)}_{b,\cdot,p}=\bfx_{b,\cdot,p}
\end{equation}
When adding the prompt $\bfdt^{(l)}_p$, we have the output
\begin{equation}
    \bfw_d^{(l)}(\bfz^{(l-1)}_{b,\cdot,p}+\bfdt^{(l)}_p)
\end{equation}
We then need the equation
\begin{equation}
    \frac{\gamma_d}{\sigma_d}z^{(l)}_{b,d,p}+\beta_d-\frac{\mu_d\gamma_d}{\sigma_d}=\bfw_d^{(l)}(\bfz^{(l-1)}_{b,\cdot,p}+\bfdt^{(l)}_p)
\end{equation}
Given $\gamma_d/\sigma_d=1$, we have
\begin{equation}
    \beta_d=\bfw_d^{(l)}\bfdt_p^{(l)}+\mu_d
\end{equation}
Suppose that $\mu_d=0$ for $d\in[D']$ and $\bfdt_p^{(l)}=\bfdt_*$ for $p\in[P']$, we can obtain
\begin{equation}
    \beta_d=\bfw_d^{(l)}\bfdt_*
\end{equation}
For LN, we need 
\begin{equation}
    \text{LN}(z^{(l)}_{b,d,p})=\gamma_d \frac{z^{(l)}_{b,d,p}-\mu_{b,p}}{\sigma_{b,p}}+\beta_d=\frac{\gamma_d}{\sigma_{b,p}}z^{(l)}_{b,d,p}+\beta_d-\frac{\gamma_d \mu_{b,p}}{\sigma_{b,p}}=z^{(l)}_{b,d,p}+\delta_{p,d}^{(l)}
\end{equation}
Given $\gamma_d/\sigma_{b,p}=1$ and $\bfmu_{b,p}=\bfmu_p$ for $b\in[B]$, we have
\begin{equation}
    \beta_d=\delta_{p,d}^{(l)}+\mu_p
\end{equation}
Suppose that $\mu_p=0$, $p\in[P']$ and let $\bfdt_p^{(l)}=\bfdt_*$, $p\in[P']$, we can obtain
\begin{equation}
    \boldsymbol{\beta}=\boldsymbol{\delta}_*
\end{equation}

\subsection{Proof of Lemma \ref{lemma: delta-0}}\label{subsec: prop3}

Before we provide the proof, we state the formulation of a single-head and two-layer ViT, the full assumption on the data model, and the pretrained model in detail. 

Let $\bfx_{n(\cdot,j)}$ be the $j$-th patch/token of $\bfx_n$, $j\in[P]$. The corresponding $1$-st-layer output is $\bfz_{n(\cdot, j)}$. Denote the $j$-th patch/token of $\bfx_n$ or $\bfz_n$ after introducing the {\ours}, $\bdelta^{(h)}$, as $\bfx_n[\bdelta_j^{(h)}]$ and $\bfz_n[\bdelta_j^{(h)}]=(\bfz_n[\bfdt_1^{(h)}],\cdots, \bfz_n[\bfdt_P^{(h)}])$, respectively.

Following \citep{dosovitskiy2020image}, we consider a single-head self-attention parameterized by $\bfW_{Q}^{(l)}$, $\bfW_{K}^{(l)}$, and $\bfW_{V}^{(l)}$ in the $l$-th layer. The shapes of these matrices are $m$ by $d$ if $l=1$ and $m$ by $m$ if $l=2$. Denote $\bfW^{(l)}={\bfW_K^{(l)}}^\top\bfW_Q^{(l)},\ l=1,2$. The MLP layer is a two-layer perceptron with $m\times m$-dimensional parameters $\bfW_{O}^{(l)}$, $\bfW_{U}^{(l)}$,  and Relu activation. The output layer is a fully-connected layer with $\bfa_1,\cdots, \bfa_P$ where $\bfa_l\in\mathbb{R}^m$. Then, a two-layer ViT can be written as
\begin{equation}
    \begin{aligned}
    f_{\btheta}(\bfx_n,\bfdt^{(h)})=\sum_{k=1}^P\bfa_k^\top\bfW_{U}^{(2)}\text{Relu}(\bfW_{O}^{(2)}\bfW_{V}^{(2)}\bfz_n[\bfdt^{(h)}]\text{softmax}(\bfz_n[\bfdt^{(h)}]^\top{\bfW^{(2)}}\bfz_n[\bfdt_k^{(h)}])),\\
    \bfz_n[\bfdt_k^{(h)}]=\bfW_{U}^{(1)}\text{Relu}(\sum_{s=1}^P\bfW_{O}^{(1)}\bfW_{V}^{(1)}\bfx_n[\bfdt_s^{(h)}]\text{softmax}(\bfx_n[\bfdt_s^{(h)}]^\top{\bfW^{(1)}}\bfx_n[\bfdt_k^{(h)}])),
\end{aligned}\label{eqn: 2l vit}
\end{equation}
The {\ours} is restated as
\begin{equation}
    \begin{cases}
    \bfx_n[\bfdt_j^{(h)}]=\bfx_{n(\cdot,j)}+\bfdt_j^{(h)}, \bfz_n[\bfdt_j^{(h)}]\text{ as defined in (\ref{eqn: 2l vit})}, &\text{ if }h=1,\\
    \bfx_n[\bfdt_j^{(h)}]=\bfx_{n(\cdot,j)}, \bfz_n[\bfdt_j^{(h)}]=\bfz_{n(\cdot,j)}+\bfdt_j^{(h)},  &\text{ if }h=2,
    \end{cases}
\end{equation}
 We use Hinge loss $\ell(\bfx_n.y_n)=\max\{0,1/P-y_n f_{\btheta}(\bfx_n, \bdelta^{(h)})\}$ as the loss function.

\textbf{Data model} 
The patch/token $\bfx_{n(\cdot, j)}$ is a noisy version of patterns, i.e., $\bfx_{n(\cdot, j)}=\bfv_l+\epsilon_j^n$, where $\bfv_l,\ l=1,2,3,4$ is a pattern and $\epsilon_j^n\sim\mathcal{N}(0,\sigma^2)$ is a Gaussian noise, $\sigma\leq O(1/P)$. 
$\bfv_1$, $\bfv_2$, $\bfv_3$, $\bfv_4$ are all unit norm and orthogonal to each other except the pairs of $\bfv_3$ and $\bfv_4$. $\bfv_3^\top\bfv_4=\zeta\in(-1,0)$. In each sample $\bfx_n$, only one patch/token $\bfx_{n(\cdot,j)}$ corresponds to either $\bfv_1$ or $\bfv_2$, while other $P-1$ patches/tokens correspond to either $\bfv_3$ or $\bfv_4$. $\bfv_1, \bfv_2$ are called discriminative patterns that decide the label. $\bfv_3, \bfv_4$ are non-discriminative patterns that work as the image background. For instance, if one patch is the noisy version of $\bfv_1$ ($\bfv_2$), then $y^n=1$ ($y^n=-1$). 

\textbf{Pretrained model}
The pretraining stage is assumed to learn a task where all patterns $\{\bfv_1,\bfv_2,\bfv_3,\bfv_4\}$ are key features, where each data contains two types of patterns. The label is determined by the number of $\bfv_1$ or $\bfv_3$ compared with the number of $\bfv_2$ or $\bfv_4$. Inspired by the finding that some trained ``lucky'' hidden neurons represent discriminative features from existing theoretical works \citep{li2023theoretical} on VITs, we accordingly set the neurons of feed-forward-networks 
$\bfW_{O}^{(i)}$ in (\ref{eqn: 2l vit}), $i=1,2$ as pattern representations of that layer and ignore ``unlucky'' neurons, which has a trivial effect on the output. To be more specific, for the 1st layer, we set a $1/4$ fraction of neurons to be $\bfv_i,\ i=1,2,3,4$, and for the 2nd layer, we set a $1/4$ fraction of neurons to be $\bfe_i,\ i=1,2,3,4$, i.e., the 2nd-layer pattern representations. $\bfW_U^{(1)}=\bfW_U^{(2)}=\bfI$. $a_{l(i)}$ equal $1/(mP)$ for neurons of $\bfe_1$ and $\bfe_3$, and they equal $-1/(mP)$ for neurons of $\bfe_2$ and $\bfe_4$.  For ViTs, we follow the orthogonal embedding assumption in \citep{oymak2023role, li2023theoretical, LWML23, HCL23, LWLW23, li2024training, li2024nonlinear, li2024learning, litask,chen2024unveiling, zhang2025merging, shandirasegaran2026a} and set $\bfW_{Q}^{(1)}=\beta_1\bfI$, $\bfW_{K}^{(1)}=\beta_1\bfP_x^{(1)}$, $\bfW_{Q}^{(2)}=\beta_2\bfI$, $\bfW_{K}^{(2)}=\beta_2\bfP_x^{(2)}$, $\bfW_{V}^{(1)}=\bfP_x^{(1)}$, $\bfW_V^{(2)}=\bfP_x^{(2)}$ for simplicity, where $\beta_1=\Theta(1)$, $\beta_2=\Theta(1)$, $\bfI$ is the identity matrix, and $\bfP_x^{(1)}$ and $\bfP_x^{(2)}$ are permutation matrices. 

Then, we present the proof of Lemma \ref{lemma: delta-0}.

\textbf{Proof:}

Without loss of generality, we focus on studying the data where $\bfv_1$ is the discriminative pattern, and $\bfv_4$ is the non-discriminative pattern.

For ViTs, note that the permutation matrix $\bfP_x^{(1)}$ changes the location of the pattern $\bfv_1$ to another place with a distance of at least $d_A$. By computing the feature correlation for the pattern $\bfv_1$, we have
\begin{equation}
    \beta_1^2>0,
\end{equation}
which means the the pattern $\bfv_1$ has the largest correlation with $\bfv_1$. Hence, the pattern of $\bfv_1$ is a global feature. For the feature correlation of the pattern $\bfv_4$, we have
\begin{equation}
    \beta_1^2>0,
\end{equation}
which means the the pattern $\bfv_4$ has the largest correlation with $\bfv_4$. Hence, the pattern of $\bfv_4$ is a global feature because the distance between two $\bfv_4$ patterns is at most $1$. Since that there will be one $\bfv_4$ token corresponding to a $\bfv_1$ token after the permutation, there will be a contribution of distance $1$ to the average distance. The average attention distance of the first layer is
\begin{equation}
    \frac{1}{P}\sum_{i=1}^P |i-\arg\max_{j\in[P]}\left\langle \bfk_j, \bfq_i\right\rangle|=\frac{1+d_A}{P}
\end{equation}

After the first layer, the feature of the $\bfv_1$ token becomes
\begin{equation}
    \frac{e^{\beta_1^2}}{e^{\beta_1^2}+P-1}\bfv_1+\frac{P-1}{e^{\beta_1^2}+P-1}\bfv_4:=\lambda_1\bfv_1+(1-\lambda_1)\bfv_4,
\end{equation}
while the feature of the $\bfv_4$ token becomes
\begin{equation}
    \frac{1}{(P-1)e^{\beta_1^2}+1}\bfv_1+\frac{(P-1)\bfe^{\beta_1^2}}{(P-1)e^{\beta_1^2}+1}\bfv_4:=\lambda_2\bfv_1+(1-\lambda_2)\bfv_4,
\end{equation}
Here $1/2>\lambda_1>\lambda_2>0$. Therefore, we have
\begin{equation}
\begin{aligned}
    &(\lambda_1\bfv_1+(1-\lambda_1)\bfv_4)^\top(\lambda_1\bfv_1+(1-\lambda_1)\bfv_4-\lambda_2\bfv_1-(1-\lambda_2)\bfv_4)\\
    =& (2\lambda_1-1)(\lambda_1-\lambda_2)<0
\end{aligned}
\end{equation}
\begin{equation}
\begin{aligned}
    &(\lambda_2\bfv_1+(1-\lambda_2)\bfv_4)^\top(\lambda_2\bfv_1+(1-\lambda_2)\bfv_4-\lambda_1\bfv_1-(1-\lambda_1)\bfv_4)\\
    =& (2\lambda_2-1)(\lambda_2-\lambda_1)>0
\end{aligned}
\end{equation}
Therefore, the feature from the token of $\bfv_4$ has the largest correlation with the token of both $\bfv_1$ and $\bfv_4$. Since there exists a $\bfv_4$ token close to $\bfv_1$ token with a distance of at most $1$, we have that both $\bfv_1$ and $\bfv_4$ tokens become local features. Then, the average attention distance of the second layer is
\begin{equation}
    \frac{1}{P}\sum_{i=1}^P |i-\arg\max_{j\in[P]}\left\langle \bfk_j, \bfq_i\right\rangle|=\frac{1}{P}
\end{equation}

\subsection{Proof of Theorem \ref{thm: ViT}}\label{subsec: proof thm ViT}

We first present two lemmas. One can observe that Theorem \ref{thm: ViT} is a combination of these two lemmas. Therefore, the proof of Theorem \ref{thm: ViT} is exactly the same as the proof of these two lemmas. 
\begin{lemma}\label{lemma: ViT-deep}
    For a two-layer single-head Transformer
    \begin{equation}
    \begin{aligned}
    f_{\btheta}(\bfx_n, \bfdt)=&\sum_{l=1}^P\sum_{i=1}^m a_{l(i)}^\top\text{Relu}(\sum_{j=1}^P\bfW_{O_{2(i,\cdot)}}\bfW_{V_2}(\bfz_{n(\cdot,j)}+\bfdt_j^{(h)})\\
    &\cdot\text{softmax}(({\bfz_{n(\cdot,j)}}+\bfdt_j^{(h)})^\top\bfW_{K_2}^\top\bfW_{Q_2}(\bfz_{n(\cdot,l)}+\bfdt_l^{(h)})))
\end{aligned}
\end{equation}
where 
\begin{equation}
    \bfz_{n(\cdot,j)}=\text{Relu}(\sum_{s=1}^P\bfW_{O_{1}}\bfW_{V_1}\bfx_{n(\cdot,s)}\text{softmax}({\bfx_{n(\cdot,s)}}^\top\bfW_{K_1}^\top\bfW_{Q_1}\bfx_{n(\cdot,j)}))
\end{equation}
as long as the batch size and the required number of iterations satisfy
    \begin{equation}
        B\geq \Omega(1),\ \ \ \ T=\frac{\eta^{-1}P^2\log P}{(1-\sigma)^{-1}},
    \end{equation}
    where $\sigma\leq \Theta(P^{-1})$, training $\bfdt^{(h)},\ h=2$ with SGD returns a model with zero generalization error.
\end{lemma}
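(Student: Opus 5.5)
We plan to prove Lemma \ref{lemma: ViT-deep} by tracking the SGD dynamics of the prompt $\bfdt^{(2)}$ directly on the hinge loss, with the pretrained weights fixed as in the model assumptions of Appx.~\ref{subsec: prop3}. The starting point is the first-layer output already computed in the proof of Lemma \ref{lemma: delta-0}. A $\bfv_1$ (resp.\ $\bfv_2$) token becomes $\lambda_1\bfv_1+(1-\lambda_1)\bfv_{3/4}$ up to the permutation $\bfP_x^{(1)}$. A background token becomes $\lambda_2\bfv_{1/2}+(1-\lambda_2)\bfv_{3/4}$. Here $\lambda_1=\Theta(1/P)$ and $\lambda_2=\Theta(1/P)$, with $\lambda_1-\lambda_2=\Theta(1/P)$. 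The noise $\sigma\le O(1/P)$ only perturbs these coefficients by lower-order terms. After passing through $\bfW_O^{(1)}$ and Relu, each $\bfz_{n(\cdot,j)}$ therefore lies, up to noise, in the span of the second-layer pattern representations $\bfe_1,\dots,\bfe_4$. Its discriminative component has size $\Theta(1/P)$ on every token.

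The key step is to show why a nontrivial prompt is needed and what it must look like. With $\bfdt=0$, the second-layer attention scores are $\beta_2^2\langle \bfz_j,\bfP_x^{(2)}\bfz_l\rangle$. By the computation in Lemma \ref{lemma: delta-0}, every query attends most to a nearby background token, so attention is local. The resulting output margin $y_nf_{\btheta}$ is only $O(1/P^2)$, which is below the hinge threshold $1/P$. We therefore restrict attention to prompts that are constant across tokens and aligned with $\bfe_3,\bfe_4$, and write $\bfdt_j=-c\,(\bfe_3+\bfe_4)$ plus orthogonal components. We then compute the population gradient. For samples with nonzero loss, the gradient has a component that cancels the non-discriminative directions. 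That component has magnitude $\Theta(1/P^2)$ per step, coming from the $1/(mP)$ output weights multiplied by the $\Theta(1/P)$ discriminative mass. The $\bfv_1$ and $\bfv_2$ contributions cancel by symmetry of the labels.

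By a concentration argument over a batch of size $B\ge\Omega(1)$, the stochastic gradient stays within a constant factor of this population gradient. Hence after $t$ steps the prompt has norm roughly $\eta t/P^2$, scaled by $(1-\sigma)$ to absorb the noise. The next step is to identify the prompt magnitude at which all samples achieve margin at least $1/P$. Once the background components of the prompted tokens are suppressed, the softmax must concentrate on the discriminative token. Its score gap is only $\Theta(\|\bfdt\|/P)$, while it competes against $P-1$ other tokens. This forces $\|\bfdt\|\gtrsim \Theta(P^2\log P)\cdot$(scale). Here the $\log P$ factor comes from requiring $e^{\text{gap}}\ge P$, and the $P^2$ factor comes from the small $\lambda_1-\lambda_2$ together with the $1/(mP)$ output weights. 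Plugging this into the growth rate gives $T=\eta^{-1}P^2\log P/(1-\sigma)^{-1}$ as stated.

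Finally, zero generalization error follows because the learned $\bfdt$ is sample-independent. The margin bound holds for every realization of the data distribution with noise $\sigma\le O(1/P)$. The sample count $N=BT$ then matches $N_2=\Theta(P^2\log P)$ in Theorem \ref{thm: ViT}. The main obstacle will be the second step: rigorously controlling the softmax dynamics while $\bfdt$ grows. The attention weights change nonmonotonically, and the gradient magnitude depends on how far the attention has already shifted. I would handle this by splitting training into a phase where attention is still near-uniform, in which the gradient is bounded below by $\Theta(1/P^2)$, and a phase where the discriminative token already carries constant attention, in which the loss on each sample becomes zero. I would argue by induction on $t$ that the prompt stays aligned with $-(\bfe_3+\bfe_4)$ up to $O(\sigma)$ error.
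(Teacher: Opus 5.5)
There is a genuine gap. The proposal has the wrong mechanism for how the prompt grows, and the iteration count it quotes does not follow from its own estimates.

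First, the discriminative contributions do not cancel. The second-layer output weights are $+1/(mP)$ on the $\bfe_1$ neurons and $-1/(mP)$ on the $\bfe_2$ neurons. Under the hinge loss, a $y_n=+1$ sample therefore pushes $\bfdt$ along $+\bfe_1$, and a $y_n=-1$ sample pushes it along $+\bfe_2$. Label symmetry makes these pushes equal, not opposite. The paper's computation gives $\mathbb{E}[\bfdt_j^{(t)}]=\frac{\eta t}{4P}(\lambda\bfe_1+\lambda\bfe_2-\mu\bfe_3-\mu\bfe_4)$ with $\lambda,\mu=\Theta(1)$. So the discriminative and background parts grow at the same rate $\Theta(\eta/P)$.

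Moreover, once $k\mu$ reaches $\Theta(1)$, i.e.\ after $t_0=\Theta(P\eta^{-1}(1-\sigma P)^{-1})$ steps, the $\bfe_3,\bfe_4$ ReLU neurons stop firing with probability at least $1-e^{-P^2}$. The gradient along $\bfe_3,\bfe_4$ then drops to the concentration level $O(1/\sqrt{B(t-t_0)})$. The background component therefore saturates at $\Theta(1)$, and your induction hypothesis that $\bfdt$ stays aligned with $-(\bfe_3+\bfe_4)$ fails exactly at $t_0$.

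The quantity that must become large is $\bfe_1^\top\bfdt^{(t)}$. The attention-score gap between the discriminative key and a background key is $\Theta(1)\cdot\frac{e^{\beta_2^2}}{e^{\beta_2^2}+P-1}\,\bfe_1^\top\bfdt^{(t)}=\Theta(\bfe_1^\top\bfdt^{(t)}/P)$. Requiring $e^{\text{gap}}\gtrsim P$ forces $\bfe_1^\top\bfdt^{(t)}\gtrsim P\log P$. The right two phases are therefore: (i) grow until the background neurons are switched off; (ii) grow the discriminative component until attention concentrates. This replaces your ``near-uniform'' versus ``concentrated'' split, which never identifies the ReLU switch-off event that stops the background direction from growing.

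Second, your counting does not close. If the gap is $\Theta(\|\bfdt\|/P)$ and must exceed $\log P$, the required norm is $P\log P$, not $P^2\log P$. Combined with your growth rate $\eta t/P^2$, this gives $T\sim\eta^{-1}P^3\log P$. With the $P^2\log P$ requirement you actually state, it gives $\eta^{-1}P^4\log P$. Neither matches the lemma. You charge both the $1/P$ from $\lambda_1-\lambda_2$ and the $1/(mP)$ output weights to the required magnitude, while also shrinking the step size.

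In the paper the two factors are split differently:
\begin{itemize}
\item The per-step growth is $\Theta(\eta/P)$. This comes from the value-path derivative: $P$ queries, output mass $1/(4P)$ per query on the $\bfe_1$ neurons, and attention weight $\Theta(1/P)$.
\item The required magnitude is $\Theta(P\log P)$.
\end{itemize}
Together these yield $t-t_0\gtrsim\eta^{-1}P^2(1-\sigma)^{-1}\log P$, and hence the stated $T$.
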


\begin{lemma}\label{lemma: ViT-shallow}
    For a two-layer single-head Transformer
    \begin{equation}
    \begin{aligned}
    f_{\btheta}(\bfx_n, \bfdt)=&\sum_{l=1}^P\sum_{i=1}^m a_{l(i)}^\top\text{Relu}(\sum_{j=1}^P\bfW_{O_{2(i,\cdot)}}\bfW_{V_2}\bfz_{n(\cdot,j)}\text{softmax}({\bfz_{n(\cdot,j)}}^\top\bfW_{K_2}^\top\bfW_{Q_2}\bfz_{n(\cdot,l)}))
\end{aligned}
\end{equation}
where 
\begin{equation}
    \bfz_{n(\cdot,j)}=\text{Relu}(\sum_{s=1}^P\bfW_{O_{1}}\bfW_{V_1}(\bfx_{n(\cdot,s)}+\bfdt_s^{(h)})\text{softmax}(({\bfx_{n(\cdot,s)}}+\bfdt_s^{(h)})^\top\bfW_{K_1}^\top\bfW_{Q_1}(\bfx_{n(\cdot,j)}+\bfdt_j^{(h)})))
\end{equation}
    as long as the batch size and the required number of iterations satisfy
    \begin{equation}
        B\geq \Omega(1),\ \ \ \ T=\frac{\eta^{-1}P}{(1-P\sigma)^{-1}(1+\gamma)},
    \end{equation}
    where $\sigma\leq O(P^{-1})$, training $\bfdt^{(h)},\ h=1$ with SGD returns a model with zero generalization error, where $\gamma:=\bfv_3^\top\bfv_4\in(-1,0)$.
\end{lemma}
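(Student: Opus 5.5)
The statement to prove is the last one displayed, Lemma \ref{lemma: ViT-shallow}: SGD on a first-layer prompt $\bfdt^{(1)}$ reaches zero generalization error within $T=\Theta(\eta^{-1}P(1-P\sigma)/(1+\gamma))$ iterations. I would track the prompt through the SGD dynamics and show that it grows along a single interpretable direction until the margin condition of the hinge loss holds for every sample.

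\textbf{Step 1: gradient at zero prompt.} I would first compute the gradient of the hinge loss $\max\{0,1/P-y_nf_{\btheta}(\bfx_n,\bfdt^{(1)})\}$ with respect to each $\bfdt^{(1)}_s$, starting from $\bfdt^{(1)}=0$. I would use the pretrained structure: $\bfW_Q^{(1)}=\beta_1\bfI$, $\bfW_K^{(1)}=\beta_1\bfP_x^{(1)}$, $\bfW_V^{(1)}=\bfP_x^{(1)}$, first-layer neurons equal to the $\bfv_i$, and output weights $\pm1/(mP)$. With these, the output decomposes into contributions of the $\bfe_1,\dots,\bfe_4$ neurons. These are driven by the projections of the first-layer token features onto $\bfv_1,\dots,\bfv_4$.

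For a sample with label $y=1$ and background $\bfv_4$, the expected negative gradient has three parts:
\begin{itemize}
\item a positive component along $\bfv_1$ (raising the $\bfe_1$ activations);
\item a negative component along $\bfv_4$ (suppressing the $-\bfe_4$ contribution);
\item symmetric terms for $y=-1$, namely $+\bfv_2$ and $-\bfv_3$.
\end{itemize}
Averaging over the balanced labels and backgrounds, the drift direction is $\bfv_1+\bfv_2-\bfv_3-\bfv_4$ up to the permutation. The non-discriminative part has effective strength $(1+\gamma)$, since $\bfv_3^\top\bfv_4=\gamma$ couples the two background directions. The Gaussian noise perturbs each inner product by $O(\sigma)$ per token, hence by $O(P\sigma)$ after summing over tokens. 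This gives the factor $(1-P\sigma)$ in the per-step progress.

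\textbf{Step 2: concentration.} With batch size $B\geq\Omega(1)$, I would argue that the stochastic gradient stays within a constant fraction of its mean. The loss is a sum of bounded hinge terms over finitely many pattern configurations, so a Hoeffding-type bound suffices. Each step then moves the prompt by $\Theta(\eta(1+\gamma)(1-P\sigma))$ along the drift direction, with no growth in the orthogonal directions.

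\textbf{Step 3: the terminal condition (the main obstacle).} The key step is to show that once the prompt has norm $\Theta(P)$, two things happen.
\begin{itemize}
\item The background component of every token is cancelled, i.e.\ the $\bfv_3/\bfv_4$ parts of $\bfx_{n(\cdot,s)}+\bfdt_s$ are driven to zero or negative and removed by the ReLU.
\item The first-layer softmax attention concentrates on the discriminative token: scores involving $\bfv_1$ or $\bfv_2$ dominate the $P-1$ background scores.
\end{itemize}
As a result, after the first layer every token carries a constant fraction of the discriminative pattern. The second layer, with its neurons $\bfe_1,\bfe_2$ and the permuted attention, then aggregates $P$ such contributions, each of size $1/(mP)$ times $m/4$ neurons. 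The result satisfies $yf\geq 1/P$ for all samples, including test samples, since the condition depends only on the pattern type plus $O(P\sigma)$ noise. This is zero generalization error.

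The hard part is controlling the softmax nonlinearity as the prompt grows. I would first try a monotonicity argument: the attention weight on discriminative keys is nondecreasing along the drift direction. That lets me lower-bound the gradient norm uniformly until the margin is reached, rather than only at initialization.

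\textbf{Step 4: iteration count.} Dividing the needed norm $\Theta(P)$ by the per-step progress from Step 2 gives $T=\Theta(\eta^{-1}P(1-P\sigma)/(1+\gamma))$. Requiring that the union bound hold across these $T$ steps with fresh samples gives the sample count $N=\Theta(P)$. Together with Lemma \ref{lemma: ViT-deep}, where the $\Theta(P^2\log P)$ norm forces $N_2$, this yields Theorem \ref{thm: ViT}.
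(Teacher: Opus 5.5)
Your Steps 1 and 2, and the first half of Step 3, match the paper. The paper also obtains the drift $k(\lambda\bfv_1+\lambda\bfv_2-\mu\bfv_3-\mu\bfv_4)$ with $k=\eta t/(4P)$ and $\lambda,\mu\geq\Theta(1)(1-\sigma P)$. It also shows that once $k\mu\geq\Theta(1)/(1+\gamma)$, the ReLU switches off the $\bfv_3,\bfv_4$ first-layer neurons for every token: their pre-activations become $\gamma-k\mu(1+\gamma)$ and $1-k\mu(1+\gamma)$ plus noise.

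The gap is the second half of Step 3, where you claim the first-layer attention concentrates so that every token carries a \emph{constant fraction} of the discriminative pattern. The per-token gradient is only $\Theta(1/P)$ (the paper's $\frac{1}{4P}\bfv_1$). So after $T=\Theta(P/\eta)$ steps, each $\bfdt_s$ has coefficient $k\mu=\Theta(1)$, however one reads your ``norm $\Theta(P)$''. At that scale the first-layer score gap between the discriminative key and a background key is $\Theta(1)$; the paper records these gaps as $\Theta(1)(1+2(k\mu)^2)$ and $\Theta(1)(2k\mu-1)$. With $\beta_1=\Theta(1)$, the softmax therefore gives the discriminative token mass $e^{\Theta(1)}/(P-1+e^{\Theta(1)})=\Theta(1/P)$, not a constant. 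A constant fraction needs a gap of order $\log P$. That would inflate $T$ by a $\sqrt{\log P}$ to $\log P$ factor, which is exactly the kind of factor meant to separate the deep-layer bound in Theorem \ref{thm: ViT}. Your second-layer count gives $yf=\Theta(1)$, overshooting the hinge margin $1/P$ by a factor $P$, which signals you are aiming at the wrong target.

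The missing idea is that no concentration is needed. Once the ReLU has removed all background-neuron activations, no $\bfe_3$ or $\bfe_4$ contribution is left to compete with the discriminative one. Because the hinge margin is only $1/P$, it then suffices that the discriminative token receive first-layer attention weight $\gtrsim 1/P$. The paper's terminal condition is $e^{\Theta(1)k\mu}/(P-1+e^{\Theta(1)k\mu})\geq 1/P$, i.e.\ $e^{\Theta(1)k\mu}\gtrsim 1$, which already holds at the ReLU threshold. So the binding constraint is background removal, giving $t\gtrsim P\eta^{-1}(1-P\sigma)^{-1}(1+\gamma)^{-1}$ with no logarithm.

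By contrast, the $\log P$ in the deep case arises because, by Lemma \ref{lemma: delta-0}, the first-layer outputs there contain only a $\Theta(1/P)$ fraction of $\bfv_1$. There concentration is genuinely required, and the condition becomes $e^{\Theta(1)\bfe_1^\top\bfdt/P}\gtrsim P$. You should replace your concentration/monotonicity step with this threshold argument.

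A minor point: dividing a target by a per-step progress proportional to $(1-P\sigma)$ gives $(1-P\sigma)^{-1}$, as the paper's proof concludes. It does not give the factor $(1-P\sigma)$ you wrote to match the displayed $T$; the lemma's display and its proof disagree here. This is immaterial when $P\sigma$ is a small constant.
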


\subsubsection{Proof of Lemma \ref{lemma: ViT-deep}}
\noindent \textbf{Proof:}

\noindent For $h=2$,
\begin{equation}
\begin{aligned}
    f_\theta(\bfx_n, \bfdt^{(h)})=&\sum_{l=1}^P\sum_{i=1}^m a_{l(i)}^\top\text{Relu}(\sum_{s=1}^P\bfW_{O_{(i,\cdot)}}\bfW_V(\bfz_{n(\cdot,s)}+\bfdt_s^{(h)})\\
    &\cdot\text{softmax}({(\bfz_{n(\cdot,s)}+\bfdt_s^{(h)})}^\top\bfW_K^\top\bfW_Q(\bfz_{n(\cdot,s)}+\bfdt_l^{(h)}))),
\end{aligned}
\end{equation}
we have $\bfW_K=\beta_2\cdot\bfP_x$, $\bfW_Q=\beta_2\cdot\bfI$, and $\bfW_V=\bfP_x$ where $\beta_2=\Theta(1)$. To avoid multiple superscripts, we use $\boldsymbol{\delta}$ to denote $\boldsymbol{\delta}^{(h)}$ since that $h$ is fixed in this proof. We use $\boldsymbol{\delta}^{(t)}$ to denote the update of $\boldsymbol{\delta}$ at $t$-th iteration. Then,
\begin{equation}
    \begin{aligned}
        &\frac{\partial f_\theta(\bfx_n, \bfdt )}{\partial \bfdt_j}\\
        =& \sum_{l=1}^P\sum_{i=1}^m a_{l(i)}\mathbbm{1}[\sum_{s=1}^P\bfW_{O_{(i,\cdot)}}(\bfz_{n(\cdot,P_{s,2})}+\bfdt_{P_{s,2}} )\text{softmax}({(\bfz_{n(\cdot,P_{s,2})}+\bfdt_{P_{s,2}} )}^\top(\bfz_{n(\cdot,s)}\\
        &+\bfdt_l ))\geq 0]\cdot\Big(\text{softmax}({(\bfz_{n(\cdot,P_{s,2})}+\bfdt_{P_{s,2}} )}^\top(\bfz_{n(\cdot,s)}+\bfdt_l ))\bfW_{O_{(i,\cdot)}}\\
        &+\mathbbm{1}[j\neq l] \bfW_{O_{(i,\cdot)}}(\bfz_{n(\cdot,j)}+\bfdt_j )\cdot (\bfz_{n(\cdot,j)}+\bfdt_l )\cdot(-\text{softmax}(\beta_2^2{(\bfz_{n(\cdot,j)}+\bfdt_j )}^\top\\
        &\cdot(\bfz_{n(\cdot,l)}+\bfdt_l )))\text{softmax}(\beta_2^2{(\bfz_{n(\cdot,l)}+\bfdt_l )}^\top(\bfz_{n(\cdot,l)}+\bfdt_l ))\\
        &+ \mathbbm{1}[j= l]\bfW_{O_{(i,\cdot)}}(\bfz_{n(\cdot,l)}+\bfdt_l )\text{softmax}(\beta_2^2{(\bfz_{n(\cdot,l)}+\bfdt_l )}^\top(\bfz_{n(\cdot,l)}+\bfdt_l ))\\
        &\cdot(1-\text{softmax}(\beta_2^2{(\bfz_{n(\cdot,l)}+\bfdt_l )}^\top(\bfz_{n(\cdot,j)}+\bfdt_l )))(\bfz_{n(\cdot,l)}+\bfdt_l )
    \end{aligned}
\end{equation}
Let $t=0$. For $y^n=+1$, Note that if $\bfz_n=[\bfe_3,\bfe_3,\cdots,\bfe_3,\bfe_1,\bfe_3,\cdots, \bfe_3]$ without noise, the loss is $0$. Hence, we compute the loss from $\bfz_n=[\bfe_4,\bfe_4,\cdots,\bfe_4,\bfe_1,\bfe_4,\cdots, \bfe_4]$.
\begin{equation}
\begin{aligned}
    &\mathbb{E}[\mathbbm{1}[\sum_{s=1}^P\bfW_{O_{(i,\cdot)}}(\bfx_{n(\cdot,s)}+\bfdt_s^{(t)})\text{softmax}(\beta_2^2{( \bfz_{n(\cdot,P_{s,2})}+\bfdt_{P_{s,2}}^{(t)})}^\top( \bfz_{n(\cdot,l)}+\bfdt_{l}^{(t)}))\geq 0]\\
    =&\Pr(\sum_{s=1}^L\bfW_{O_{(i,\cdot)}}( \bfz_{n(\cdot,P_{s,2})}+\bfdt_{P_{s,2}}^{(t)})\text{softmax}(\beta_2^2{( \bfz_{n(\cdot,P_{s,2})}+\bfdt_{P_{s,2}}^{(t)})}^\top( \bfz_{n(\cdot,l)}+\bfdt_{l}^{(t)}))\geq 0)
\end{aligned}
\end{equation}
for $\bfW_{O_{(i,\cdot)}}=\bfe_1$ or $\bfe_4$. We can finally show that with a high probability, the above indicator is close to $1$. Meanwhile, for $\bfW_{O_{(i,\cdot)}}=\bfe_2$ or $\bfe_3$, the indicator equals $0$ or $1$ with half probability when $t=0$. Consider that $\bfz_{n(\cdot,j)}$ comes from $\bfv_4$, which means $\bfz_{n(\cdot,j)}$ is close to $\bfv_4$ by a noisy term. In this case, if $\bfz_{n(\cdot,l)}$ comes from $\bfv_1$,

\begin{equation}
    \text{softmax}(\beta_2^2{( \bfz_{n(\cdot,l)}+\bfdt_{l}^{(t)})}^\top( \bfz_{n(\cdot,l)}+\bfdt_{l}^{(t)}))\geq \frac{1}{P}
\end{equation}
\begin{equation}
    \text{softmax}(\beta_2^2{(\bfz_{n(\cdot,j)}+\bfdt_j)}^\top( \bfz_{n(\cdot,l)}+\bfdt_{l}^{(t)}))= \Theta(\frac{1}{P})
\end{equation}
If $\bfz_{n(\cdot,l)}$ comes from $\bfv_4$, then
\begin{equation}
    \text{softmax}(\beta_2^2{( \bfz_{n(\cdot,l)}+\bfdt_{l}^{(t)})}^\top( \bfz_{n(\cdot,l)}+\bfdt_{l}^{(t)}))\geq \frac{1}{P}
\end{equation}
\begin{equation}
    \text{softmax}(\beta_2^2{(\bfz_{n(\cdot,j)}+\bfdt_j^{(t)})}^\top( \bfz_{n(\cdot,l)}+\bfdt_{l}^{(t)}))= \Theta(\frac{1}{P})
\end{equation}

\noindent Then we consider that $\bfz_{n(\cdot,j)}$ comes from $\bfe_1$. In this case, if $\bfz_{n(\cdot,l)}$ comes from $\bfv_1$, then
\begin{equation}
    \text{softmax}(\beta_2^2{(\bfz_{n(\cdot,j)}+\bfdt_j^{(t)})}^\top( \bfz_{n(\cdot,l)}+\bfdt_{l}^{(t)}))\geq \frac{1}{P}
\end{equation}
If $\bfz_{n(\cdot,l)}$ comes from $\bfv_4$,
\begin{equation}
    \text{softmax}(\beta_2^2{(\bfz_{n(\cdot,j)}+\bfdt_j^{(t)})}^\top( \bfz_{n(\cdot,l)}+\bfdt_{l}^{(t)}))\leq \frac{1}{P}
\end{equation}
Therefore, if $\bfz_{n(\cdot,j)}$ comes from $\bfv_1$,
\begin{equation}
        \frac{\partial f_{\btheta}(\bfx_n,\bfdt^{(t)})}{\partial \bfdt_j^{(t)}}= \frac{1}{4P}\lambda\bfe_1+\Theta(\frac{1}{P})(-\bfe_2+\bfe_3-\bfe_4)\label{vit_d_t01},
\end{equation}
and if $\bfz_{n(\cdot,j)}$ comes from $\bfv_4$,
\begin{equation}
        \frac{\partial f_{\btheta}(\bfx_n,\bfdt^{(t)})}{\partial \bfdt_j^{(t)}}= -\frac{1}{4P}\lambda\bfe_4+ \Theta(\frac{1}{P})(-\bfe_2+\bfe_3+\bfe_1)\label{vit_d_t04},
\end{equation}
where $\lambda=\mu=\Theta(1)$. The last terms in (\ref{vit_d_t01}) and (\ref{vit_d_t04}) come from the indicators from other $\bfW_O$ neurons, which may become $1$ because of feature noises. 
Note that when $t\geq 2$, since the data which contains $\bfe_2$ and $\bfe_3$ would similarly contribute to the overall gradient, there will be a close amount of $\bfe_1$ and $\bfe_2$ in $\bfdt_j^{(t)}$ and a close amount of $\bfe_3$ and $\bfe_4$ in $\bfdt_j^{(t)}$. Hence, when $k\mu< \Theta(1)$,
\begin{equation}
\begin{aligned}
    \mathbb{E}[\bfdt_j^{(t)}]&=\mathbb{E}[\bfdt_j^{(0)}]-\mathbb{E}[\eta\sum_{b=1}^t\frac{1}{B}\sum_{n\in\mathcal{B}_b}\frac{\partial}{\partial\bfdt_j}\ell(f_{\btheta}(\bfx_n, \bfdt^{(b)}), y_n)]\\
    &= \eta t \frac{1}{4P}(\lambda\bfe_1+\lambda\bfe_2-\mu\bfe_3-\mu\bfe_4)\\
    &= k(\lambda\bfe_1+\lambda\bfe_2-\mu\bfe_3-\mu\bfe_4),
\end{aligned}
\end{equation}
\begin{equation}
    \bfdt_j^{(t)}=\mathbb{E}[\bfdt_j^{(t)}]+ \frac{\eta t}{L} \sqrt{\frac{\log Bt}{Bt}}(\pm\bfe_1\pm\bfe_2\pm\bfe_3\pm\bfe_4)
\end{equation}
where $\lambda\geq \Theta(1)\cdot (1-\sigma P
)$, $\mu\geq \Theta(1)\cdot (1-\sigma P
)$ for $t\geq 2$. The term $(1-\sigma P)$ comes from that for $\bfW_{O_(i,\cdot)}=\bfe_1$ or $\bfe_4$, 
\begin{equation}
\begin{aligned}
        &\mathbb{E}[\mathbbm{1}[\sum_{s=1}^P\bfW_{O_{(i,\cdot)}}( \bfz_{n(\cdot,P_{s,2})}+\bfdt_{P_{s,2}}^{(t)})\text{softmax}(\beta_2^2{( \bfz_{n(\cdot,P_{s,2})}+\bfdt_{P_{s,2}}^{(t)})}^\top( \bfz_{n(\cdot,l)}+\bfdt_{l}^{(t)}))\geq 0]\\
        \geq &1-e^{\frac{(Bt)^2}{\sigma^2P^2}}\geq 1-\sigma P
\end{aligned}
\end{equation}
given $B\geq  \Theta(1)$ by Hoeffding inequality. When $k\mu\geq \Theta(1)$, for $\bfz_n=[\bfe_4,\bfe_4,\cdots,\bfe_4,\bfe_1,\bfe_4,\cdots,\bfe_4]$,
\begin{equation}
    \bfz_{n(\cdot,j)}+\bfdt_j^{(t)}=k\lambda(\bfe_1+\bfe_2)-k\mu\bfe_3+(1-k\mu)\bfe_4
\end{equation}
for $\bfz_{n(\cdot,j)}$ from $\bfv_4$. Then, 
\begin{equation}
    \mathbb{E}[\mathbbm{1}[\sum_{s=1}^P\bfe_1( \bfz_{n(\cdot,P_{s,2})}+\bfdt_{P_{s,2}}^{(t)})\text{softmax}(\beta_2^2{( \bfz_{n(\cdot,P_{s,2})}+\bfdt_{P_{s,2}}^{(t)})}^\top( \bfz_{n(\cdot,l)}+\bfdt_{l}^{(t)}))]]\geq 1-e^{\frac{(Bt)^2}{\sigma^2}}\geq 1-\sigma
\end{equation}
\begin{equation}
    \Pr(\sum_{s=1}^P\bfe_4( \bfz_{n(\cdot,P_{s,2})}+\bfdt_{P_{s,2}}^{(t)})\text{softmax}(\beta_2^2{( \bfz_{n(\cdot,P_{s,2})}+\bfdt_{P_{s,2}}^{(t)})}^\top( \bfz_{n(\cdot,l)}+\bfdt_{l}^{(t)})))\leq e^{-\frac{1}{\sigma^2}}\leq e^{-P^2}
\end{equation}
Hence, with a probability at least $1-e^{-P^2}$, no patches is activated by $\bfe_4$. For $\bfz_{n(\cdot,k)}$ from $\bfv_1$ and $\bfz_{n(\cdot,j)}$ from $\bfv_4$, we have
\begin{equation}
    \text{softmax}((\bfz_{n(\cdot,k)}+\bfdt_k^{(t)})^\top(\bfz_{n(\cdot,k)}+\bfdt_k^{(t)}))\geq \frac{1}{P}
\end{equation}
\begin{equation}
    \text{softmax}((\bfz_{n(\cdot,j)}+\bfdt_j^{(t)})^\top(\bfz_{n(\cdot,k)}+\bfdt_k^{(t)}))= \Theta(\frac{1}{P})
\end{equation}
\begin{equation}
    \text{softmax}((\bfz_{n(\cdot,j)}+\bfdt_j^{(t)})^\top(\bfz_{n(\cdot,j)}+\bfdt_j^{(t)}))\geq  \frac{1}{P}
\end{equation}
\begin{equation}
    \text{softmax}((\bfz_{n(\cdot,k)}+\bfdt_k^{(t)})^\top(\bfz_{n(\cdot,j)}+\bfdt_j^{(t)}))= \Theta(\frac{1}{P})
\end{equation}
Therefore, when $k\mu>\Theta(1)$, i.e., $t\geq t_0=4P\eta^{-1}(1-\sigma P)^{-1}$ we have
\begin{equation}
\begin{aligned}
    \bfdt_j^{(t)}=&\mathbb{E}[\bfdt_j^{(t)}]+ \frac{\eta t}{P}\sqrt{\frac{\log B(t-t_0)}{B(t-t_0)}}(\pm(\bfe_1+\bfe_2)\pm\frac{1}{P}e^{-P^4}(\bfe_3+\bfe_4))\\
    =&\mathbb{E}[\bfdt_j^{(t_0)}]-\mathbb{E}[\eta\sum_{b=t_0}^t\frac{1}{B}\sum_{n\in\mathcal{B}_b}\frac{\partial}{\partial\bfdt_j}\ell(f_{\btheta}(\bfx_n, \bfdt^{(b)}),y_n)]\pm \frac{\eta t}{P}\sqrt{\frac{\log B(t-t_0)}{B(t-t_0)}}(\bfe_1+\bfe_2)\\
    = &\mathbb{E}[\bfdt_j^{(t_0)}]+\frac{\eta (t-t_0)}{4P}(\lambda\bfe_1+\lambda\bfe_2+\mu\bfe_3+\mu\bfe_4)\pm \frac{\eta t}{P}\sqrt{\frac{\log B(t-t_0)}{B(t-t_0)}}(\bfe_1+\bfe_2),
\end{aligned}
\end{equation}
where $\lambda\gtrsim (1-\sigma)^{-1}$. Then,
\begin{equation}
    \begin{aligned}
        \Big|\bfe_3^\top\mathbb{E}[\eta\sum_{b=t_0}^t\frac{1}{B}\sum_{n\in\mathcal{B}_b}\frac{\partial}{\partial\bfdt}\ell(f_{\btheta}(\bfx_n, \bfdt^{(b)}),y_n)]\Big|\lesssim \eta(t-t_0)\frac{1}{P} \cdot\sqrt{\frac{\log B(t-t_0)}{B(t-t_0)}}
    \end{aligned}
\end{equation}
\begin{equation}
    \begin{aligned}
        \Big|\bfe_4^\top\mathbb{E}[\eta\sum_{b=t_0}^t\frac{1}{B}\sum_{n\in\mathcal{B}_b}\frac{\partial}{\partial\bfdt}\ell(f_{\btheta}(\bfx_n, \bfdt^{(b)}),y_n)]\Big|\lesssim \eta(t-t_0)\frac{1}{P} \cdot\sqrt{\frac{\log B(t-t_0)}{B(t-t_0)}}
    \end{aligned}
\end{equation}
and thus $|\mu|\leq \Theta(1/\sqrt{B(t-t_0)})$. Hence, for $\bfz_{n(\cdot,k)}$ from $\bfv_1$ and $\bfz_{n(\cdot,j)}$ from $\bfv_4$, 
\begin{equation}
\begin{aligned}
&(\bfz_{n(\cdot,k)}+\bfdt_k^{(t)})^\top(\bfz_{n(\cdot,k)}+\bfdt_k^{(t)})-(\bfz_{n(\cdot,k)}+\bfdt_k^{(t)})^\top(\bfz_{n(\cdot,j)}+\bfdt_j^{(t)})\\
=&\Theta(1)\cdot \frac{e^{\beta_2^2}}{e^{\beta_2^2}+P-1}(\frac{e^{\beta_2^2}}{e^{\beta_2^2}+P-1}+\bfe_1^\top\bfdt^{(t)})
\end{aligned}
\end{equation}
\begin{equation}
\begin{aligned}
    &(\bfz_{n(\cdot,j)}+\bfdt_j^{(t)})^\top(\bfz_{n(\cdot,k)}+\bfdt_k^{(t)})-(\bfz_{n(\cdot,j)}+\bfdt_j^{(t)})^\top(\bfz_{n(\cdot,j)}+\bfdt_j^{(t)})\\=&\Theta(1)\cdot \frac{e^{\beta_2^2}}{e^{\beta_2^2}+P-1}\cdot \bfe_1^\top\bfdt^{(t)}
\end{aligned}
\end{equation}
Since that $\beta_2=\Theta(1)$, we have
\begin{equation}
    \text{softmax}((\bfz_{n(\cdot,k)}+\bfdt_k^{(t)})^\top(\bfz_{n(\cdot,k)}+\bfdt_k^{(t)}))=\frac{e^{\Theta(1)\cdot \frac{\bfe_1^\top\bfdt^{(t)}}{P}}}{P-1+e^{\Theta(1)\cdot \frac{\bfe_1^\top\bfdt^{(t)}}{P}}}
\end{equation}
\begin{equation}
    \text{softmax}((\bfz_{n(\cdot,k)}+\bfdt_k^{(t)})^\top(\bfz_{n(\cdot,j)}+\bfdt_j^{(t)}))=\frac{e^{\Theta(1)\cdot \frac{\bfe_1^\top\bfdt^{(t)}}{P
    }}}{P-1+e^{\Theta(1)\cdot \frac{\bfe_1^\top\bfdt^{(t)}}{P}}}
\end{equation}
To make 
\begin{equation}
    f_{\btheta}(\bfx_n, \bfdt^{(t)})\geq 1/P,
\end{equation}
we require that
\begin{equation}
    \frac{e^{\Theta(1)\cdot \frac{\bfe_1^\top\bfdt^{(t)}}{P}}}{P-1+e^{\Theta(1)\cdot \frac{\bfe_1^\top\bfdt^{(t)}}{P}}}\cdot \frac{e^{\beta_2^2}}{e^{\beta_2^2}+P-1}+\frac{P-1}{P-1+e^{\Theta(1)\cdot \frac{\bfe_1^\top\bfdt^{(t)}}{P}}}\cdot \frac{1}{e^{\beta_2^2}(P-1)+1}\geq \frac{1}{P}
\end{equation}
As a result, we finally need
\begin{equation}
    e^{\Theta(1)\cdot \frac{\bfe_1^\top\bfdt^{(t)}}{P}}\gtrsim P
\end{equation}
which holds as long as $t-t_0\gtrsim P^2 \eta^{-1}(1-\sigma)^{-1}\log P$.
Therefore, we have
\begin{equation}
    f_{\btheta}(\bfx_n,\bfdt)\geq 1/P
\end{equation}
for $\bfx_n$ that contains a patch from $\bfv_1$. We similarly have 
\begin{equation}
    f_{\btheta}(\bfx_n,\bfdt)\leq -1/P
\end{equation}
for $\bfx_n$ that contains a patch from $\bfv_2$. To sum up,  we need $t\geq \Theta(\eta^{-1}P^2 (1-\sigma)^{-1}\log P)$ iterations. 

\subsubsection{Proof of Lemma \ref{lemma: ViT-shallow}}
\noindent \textbf{Proof:}\\
\noindent To avoid multiple superscripts, we use $\boldsymbol{\delta}$ to denote $\boldsymbol{\delta}^{(h)}$ since that $h$ is fixed in this proof. We use $\boldsymbol{\delta}^{(t)}$ to denote the update of $\boldsymbol{\delta}$ at $t$-th iteration. For the network
\begin{equation}
\begin{aligned}
    f_{\btheta}(\bfx_n, \bfdt)=&\sum_{l=1}^{P}\sum_{i=1}^m a_{l(i)}^\top\text{Relu}(\sum_{j=1}^{P}\bfW_{O_{2(i,\cdot)}}\bfW_{V_2}\bfz_{n(\cdot,j)}\text{softmax}({\bfz_{n(\cdot,j)}}^\top\bfW_{K_2}^\top\bfW_{Q_2}\bfz_{n(\cdot,l)}))
\end{aligned}
\end{equation}
where 
\begin{equation}
    \bfz_{n(\cdot,j)}=\text{Relu}(\sum_{s=1}^{P}\bfW_{O_{1}}\bfW_{V_1}(\bfx_{n(\cdot, P_{s,1})}+\bfdt_s)\text{softmax}(({\bfx_{n(\cdot, P_{s,1})}}+\bfdt_s)^\top\bfW_{K_1}^\top\bfW_{Q_1}(\bfx_j^n+\bfdt_j))),
\end{equation}
we have
\begin{equation}
    \frac{\partial f_{\btheta}(\bfx_n, \bfdt)}{\partial \bfdt_s}=\sum_{j=1}^{P}\frac{\partial f_{\btheta}(\bfx_n, \bfdt)}{\partial \bfz_{n(\cdot,j)}}\frac{\partial \bfz_{n(\cdot,j)}}{\partial \bfdt_s}
\end{equation}
Note that $\bfW_{Q_2}=\beta_2 \bfI$, $\bfW_{Q_1}=\beta_1 \bfI$, $\bfW_{K_2}=\beta_2\bfP_x$, $\bfW_{K_1}=\beta_1\bfP_x$,, $\bfW_{V_2}=\bfP_x$, $\bfW_{V_1}=\bfP_x$, where $\beta_1=\Theta(1)$ and $\beta_2=\Theta(1)$. Therefore,
\begin{equation}
\begin{aligned}
      &\frac{\partial f_{\btheta}(\bfx_n, \bfdt)}{\partial \bfz_{n(\cdot,j)}}\\
      =& \sum_{l=1}^{P}\sum_{i=1}^m\bfa_{(l)_i}^\top\mathbbm{1}[\sum_{s=1}^{P}\bfW_{O_{2(i,\cdot)}}\bfz_{n(\cdot,P_{s,2})}\text{softmax}(\beta_2^2{\bfz_{n(\cdot,P_{s,2})}}^\top\bfz_{n(\cdot,l)})]\Big(\text{softmax}(\beta_2^2{\bfz_{n(\cdot,j)}}^\top\bfz_{n(\cdot,l)})\\
      &\cdot\bfW_{O_{2(i,\cdot)}}+\mathbbm{1}[j\neq l] \bfW_{O_{2(i,\cdot)}}\bfz_{n(\cdot,j)}\cdot \bfz_{n(\cdot,l)}\cdot(-\text{softmax}(\beta_2^2{\bfz_{n(\cdot,j)}}^\top\bfz_{n(\cdot,l)}))\\
      &\cdot\text{softmax}(\beta_2^2{\bfz_{n(\cdot,l)}}^\top\bfz_{n(\cdot,l)})+ \mathbbm{1}[j= l]\bfW_{O_{2(i,\cdot)}}\bfz_{n(\cdot,l)}\text{softmax}(\beta_2^2{\bfz_{n(\cdot,l)}}^\top\bfz_{n(\cdot,l)})\\
      &\cdot(1-\text{softmax}(\beta_2^2{\bfz_{n(\cdot,l)}}^\top\bfz_{n(\cdot,l)}))\bfz_{n(\cdot,l)}\Big)
\end{aligned}
\end{equation}
\begin{equation}
\begin{aligned}
      &\frac{\partial \bfz_{n(\cdot,j)}}{\partial \bfdt_k}\\
      =& \mathbbm{1}[\sum_{s=1}^{P}\bfW_{O_{1}}(\bfx_{n(\cdot, P_{s,1})}+\bfdt_s)\text{softmax}(({\bfx_{n(\cdot, P_{s,1})}}+\bfdt_s)^\top(\bfx_j^n+\bfdt_j))]\Big(\text{softmax}(({\bfx_j^n}+\bfdt_j)^\top\\
      &\cdot(\bfx_{n(\cdot, l)}+\bfdt_l))\bfW_{O_1}+\mathbbm{1}[k\neq l] \bfW_{O_{1}}(\bfx_{n(\cdot, k)}+\bfdt_k)\cdot {(\bfx_{n(\cdot, l)}+\bfdt_l)}^\top\\
      &\cdot(-\text{softmax}(\beta_1^2{(\bfx_j^n+\bfdt_j)}^\top(\bfx_{n(\cdot, l)}+\bfdt_l)))\text{softmax}(\beta_1^2{(\bfx_{n(\cdot, l)}+\bfdt_l)}^\top(\bfx_{n(\cdot, l)}+\bfdt_l))\\
      &+\mathbbm{1}[k=l] \bfW_{O_{1}}(\bfx_{n(\cdot, l)}+\bfdt_l)(\bfx_{n(\cdot, l)}+\bfdt_l)^\top\\
      &\cdot\text{softmax}(\beta_1^2{(\bfx_{n(\cdot, l)}+\bfdt_l)}^\top(\bfx_{n(\cdot, l)}+\bfdt_l))\\
      &\cdot(1-\text{softmax}(\beta_1^2{(\bfx_{n(\cdot, l)}+\bfdt_l)}^\top(\bfx_{n(\cdot, l)}+\bfdt_l)))\Big)
\end{aligned}
\end{equation}
Let $t=0$. For $y^n=+1$, Note that if $\bfx_n=[\bfe_3,\bfe_3,\cdots,\bfe_3,\bfe_1,\bfe_3,\cdots, \bfe_3]$ without noise, the loss is $0$. Hence, we compute the loss from $\bfx_n=[\bfe_4,\bfe_4,\cdots,\bfe_4,\bfe_1,\bfe_4,\cdots, \bfe_4]$.
\begin{equation}
\begin{aligned}
    &\mathbb{E}[\mathbbm{1}[\sum_{s=1}^{P}\bfW_{O_{(i,\cdot)}}( \bfx_{n(\cdot,P_{s,1})}+\bfdt_{P_{s,1}}^{(t)})\text{softmax}(\beta_1^2{( \bfx_{n(\cdot,P_{s,1})}+\bfdt_{P_{s,1}}^{(t)})}^\top(\bfx_{n(\cdot, l)}+\bfdt_l))\geq 0]\\
    =&\Pr(\sum_{s=1}^{P}\bfW_{O_{(i,\cdot)}}( \bfx_{n(\cdot,P_{s,1})}+\bfdt_{P_{s,1}}^{(t)})\text{softmax}(\beta_1^2{( \bfx_{n(\cdot,P_{s,1})}+\bfdt_{P_{s,1}}^{(t)})}^\top(\bfx_{n(\cdot, l)}+\bfdt_l))\geq 0)
\end{aligned}
\end{equation}
for $\bfW_{O_{(i,\cdot)}}=\bfe_1$ or $\bfe_4$. We can finally show that with a high probability, the above indicator is close to $1$. Meanwhile, for $\bfW_{O_{(i,\cdot)}}=\bfe_2$ or $\bfe_3$, the indicator equals $0$ or $1$ with half probability when $t=0$. Consider that $\bfx_{n(\cdot,j)}$ comes from $\bfv_4$. In this case, if $\bfx_{n(\cdot, l)}$ comes from $\bfv_1$,

\begin{equation}
    \text{softmax}(\beta_1^2{(\bfx_{n(\cdot, l)}+\bfdt_l)}^\top(\bfx_{n(\cdot, l)}+\bfdt_l))\geq \frac{1}{P}
\end{equation}
\begin{equation}
    \text{softmax}(\beta_1^2{(\bfx_j^n+\bfdt_j^{(t)})}^\top(\bfx_{n(\cdot, l)}+\bfdt_l))= \Theta(\frac{1}{P})
\end{equation}
\begin{equation}
    \text{softmax}(\beta_2^2{\bfz_{n(\cdot,l)}}^\top\bfz_{n(\cdot,l)})\geq \frac{1}{P}
\end{equation}
\begin{equation}
    \text{softmax}(\beta_2^2{\bfz_{n(\cdot,j)}}^\top\bfz_{n(\cdot,l)})= \Theta(\frac{1}{P})
\end{equation}
If $\bfx_{n(\cdot, l)}$ comes from $\bfv_4$, then
\begin{equation}
    \text{softmax}(\beta_1^2{(\bfx_{n(\cdot, l)}+\bfdt_l^{(t)})}^\top(\bfx_{n(\cdot, l)}+\bfdt_l^{(t)}))\geq \frac{1}{P}
\end{equation}
\begin{equation}
    \text{softmax}(\beta_1^2{(\bfx_j^n+\bfdt_j^{(t)})}^\top(\bfx_{n(\cdot, l)}+\bfdt_l^{(t)}))= \Theta(\frac{1}{P})
\end{equation}
\begin{equation}
    \text{softmax}(\beta_2^2{\bfz_{n(\cdot,l)}}^\top\bfz_{n(\cdot,l)})\geq \frac{1}{P}
\end{equation}
\begin{equation}
    \text{softmax}(\beta_2^2{\bfz_{n(\cdot,j)}}^\top\bfz_{n(\cdot,l)})= \Theta(\frac{1}{P})
\end{equation}

\noindent Then we consider that $\bfx_{n(\cdot,j)}$ comes from $\bfv_1$. In this case, if $\bfz_{n(\cdot,l)}$ comes from $\bfv_1$, then
\begin{equation}
    \text{softmax}(\beta_1^2{(\bfx_{n(\cdot,j)}+\bfdt_j^{(t)})}^\top(\bfx_{n(\cdot, l)}+\bfdt_l^{(t)}))\geq \Theta(\frac{1}{P})
\end{equation}
\begin{equation}
    \text{softmax}(\beta_2^2{\bfz_{n(\cdot,j)}}^\top\bfz_{n(\cdot,l)})\geq \Theta(\frac{1}{P})
\end{equation}
If $\bfx_{n(\cdot, l)}$ comes from $\bfv_4$,
\begin{equation}
    \text{softmax}(\beta_1^2{(\bfx_{n(\cdot,j)}+\bfdt_j^{(t)})}^\top(\bfx_{n(\cdot, l)}+\bfdt_l^{(t)}))= \Theta(\frac{1}{P})
\end{equation}
\begin{equation}
    \text{softmax}(\beta_2^2{\bfz_{n(\cdot,j)}}^\top\bfz_{n(\cdot,l)})= \Theta(\frac{1}{P})
\end{equation}
Therefore, if $\bfx_{n(\cdot,j)}$ comes from $\bfv_1$,
\begin{equation}
        \frac{\partial f_{\btheta}(\bfx_n,\bfdt)}{\partial \bfdt_j^{(t)}}= P\cdot\frac{1}{4P}\lambda(\bfe_1^\top \cdot\frac{1}{P}\bfW_{O_1})^\top=\frac{1}{4P}\bfv_1+\Theta(\frac{1}{P})(-\bfv_2+\bfv_3-\bfv_4),
\end{equation}
and if $\bfx_{n(\cdot,j)}$ comes from $\bfv_4$,
\begin{equation}
        \frac{\partial f_{\btheta}(\bfx_n,\bfdt)}{\partial \bfdt_j^{(t)}}= -\frac{1}{4P}\mu\bfv_4+\Theta(\frac{1}{P})(-\bfv_2+\bfv_3+\bfv_1),
\end{equation}
where $\lambda=\bfmu=\Theta(1)$. Note that when $t\geq 2$, since the data which contains $\bfv_2$ and $\bfv_3$ would similarly contribute to the overall gradient, there will be a close amount of $\bfv_1$ and $\bfv_2$ in $\bfdt_s^{(t)}$ and a close amount of $\bfv_3$ and $\bfv_4$ in $\bfdt_s^{(t)}$. Hence, when $k\mu< \Theta(1)$,
\begin{equation}
\begin{aligned}
    \mathbb{E}[\bfdt_s^{(t)}]&=\mathbb{E}[\bfdt_s^{(0)}]-\mathbb{E}[\eta\sum_{b=1}^t\frac{1}{B}\sum_{n\in\mathcal{B}_b}\frac{\partial}{\partial\bfdt_s}\ell(f_{\btheta}(\bfx_n, \bfdt_s^{(b)}),y_n)]\\
    &= \eta t \frac{1}{4P}(\lambda\bfv_1+\lambda\bfv_2-\mu\bfv_3-\mu\bfv_4)\\
    &= k(\lambda\bfv_1+\lambda\bfv_2-\mu\bfv_3-\mu\bfv_4),
\end{aligned}
\end{equation}
\begin{equation}
    \bfdt_s^{(t)}=\mathbb{E}[\bfdt_s^{(t)}]+ \frac{\eta t}{P} \sqrt{\frac{\log Bt}{Bt}}(\pm\bfv_1\pm\bfv_2\pm\bfv_3\pm\bfv_4)
\end{equation}
where $\lambda\geq \Theta(1)\cdot (1-\sigma P
)$, $\mu\geq \Theta(1)\cdot (1-\sigma P
)$ for $t\geq 2$. The term $(1-\sigma P)$ comes from that for $\bfW_{O_2(i,\cdot)}=\bfv_1$ or $\bfv_4$, 
\begin{equation}
\begin{aligned}
        &\mathbb{E}[\mathbbm{1}[\sum_{s=1}^{P}\bfW_{O_{1(i,\cdot)}}( \bfx_{n(\cdot,P_{s,1})}+\bfdt_{P_{s,1}}^{(t)})\text{softmax}(\beta_1^2{( \bfx_{n(\cdot,P_{s,1})}+\bfdt_{P_{s,1}}^{(t)})}^\top(\bfx_{n(\cdot, l)}+\bfdt_l^{(t)}))\geq 0]\\
        \geq &1-e^{\frac{(Bt)^2}{\sigma^2P^2}}\geq 1-\sigma P
\end{aligned}
\end{equation}
given $B\geq  \Theta(1)$ by Hoeffding inequality. When $k\mu\geq \frac{\Theta(1)}{1+\gamma}$, we have that for $\bfx_{n(\cdot,j)}$ from $\bfv_4$, 
\begin{equation}
    \begin{aligned}
        &\mathbbm{1}[\sum_{s=1}^{P}\bfW_{O_{1}}(\bfx_{n(\cdot, P_{s,1})}+\bfdt_s)\text{softmax}(\beta_1^2({\bfx_{n(\cdot, P_{s,1})}}+\bfdt_s)^\top(\bfx_{n(\cdot,j)}+\bfdt_j^{(t)}))\geq 0]\\
        \geq & [1,1,-k\mu+(1-k\mu)\gamma+\bfv_3^\top\bfa, -k\mu\gamma+1-k\mu+\bfv_4^\top\bfa]^\top\\
        \geq & [1,1,0,0]^\top
    \end{aligned}
\end{equation}
where $\bfa\sim\mathcal{N}(0,\sigma^2\bfI)$ in the first step, and the last step holds with probability at least
\begin{equation}
    \Pr(\bfv_4^\top\bfa-k\mu\gamma+1-k\mu\leq 0)\leq 1-\Pr(\bfv_4^\top\bfa\geq \Theta(1))\leq 1-e^{\frac{1}{\sigma^2}}\leq 1-e^{-P^2}
\end{equation}
\begin{equation}
    \Pr(\bfv_3^\top\bfa-k\mu+(1-k\mu)\gamma\leq 0)\leq 1-\Pr(\bfv_3^\top\bfa\geq \Theta(1))\leq 1-e^{\frac{1}{\sigma^2}}\leq 1-e^{-P^2}
\end{equation}
Hence, for $\bfx_{n(\cdot, k)}$ from $\bfv_1$ and $\bfx_{n(\cdot,j)}$ from $\bfv_4$, 
\begin{equation}
    (\bfx_{n(\cdot, k)}+\bfdt_k^{(t)})^\top(\bfx_{n(\cdot, k)}+\bfdt_k^{(t)})-(\bfx_{n(\cdot, k)}+\bfdt_k^{(t)})^\top(\bfx_{n(\cdot,j)}+\bfdt_j^{(t)})=\Theta(1)\cdot (1+2 (k\mu)^2)
\end{equation}
\begin{equation}
    (\bfx_{n(\cdot,j)}+\bfdt_j^{(t)})^\top(\bfx_{n(\cdot, k)}+\bfdt_k^{(t)})-(\bfx_{n(\cdot,j)}+\bfdt_j^{(t)})^\top(\bfx_{n(\cdot,j)}+\bfdt_j^{(t)})=\Theta(1)\cdot (2k\mu-1)
\end{equation}
Since that $\beta_1=\Theta(1)$, we have
\begin{equation}
    \text{softmax}(\beta_1^2(\bfx_{n(\cdot, k)}+\bfdt_k^{(t)})^\top(\bfx_{n(\cdot, k)}+\bfdt_k^{(t)}))=\frac{e^{\Theta(1)\cdot (k\mu)^2)}}{P-1+e^{\Theta(1)\cdot (k\mu)^2)}}
\end{equation}
\begin{equation}
    \text{softmax}(\beta_1^2(\bfx_{n(\cdot, k)}+\bfdt_k^{(t)})^\top(\bfx_{n(\cdot,j)}+\bfdt_j^{(t)}))=\frac{e^{\Theta(1)\cdot k\mu}}{P-1+e^{\Theta(1)\cdot k\mu}}
\end{equation}
To make 
\begin{equation}
    f_{\btheta}(\bfx_n, \bfdt^{(t)})\geq 1/P,
\end{equation}
we require that
\begin{equation}
    \frac{e^{\Theta(1)\cdot (k\mu)^2)}}{P-1+e^{\Theta(1)\cdot (k\mu)^2)}}\cdot 1\geq \frac{1}{P}
\end{equation}
or 
\begin{equation}
    \frac{e^{\Theta(1)\cdot k\mu}}{P-1+e^{\Theta(1)\cdot k\mu}}\cdot 1\geq \frac{1}{P}
\end{equation}
As a result, we finally need
\begin{equation}
    e^{\Theta(1)\cdot k\mu}\gtrsim 1
\end{equation}
which holds as long as $t\gtrsim P \eta^{-1}(1-P\sigma)^{-1}(1+\gamma)^{-1})$. 
With the same condition, we also have that for all $y^n=-1$,
\begin{equation}
    f_{\btheta}(\bfx_n,\bfdt)\leq -1/P
\end{equation}
To sum up, we need $t\geq \Theta(P \eta^{-1}(1-P\sigma)^{-1}(1+\gamma)^{-1}))$.

\end{document}